\documentclass{article}
\usepackage{templates/iclr2026/iclr2026_conference,times}
\iclrfinalcopy

\usepackage{amsmath,amsfonts,bm}

\def\eqref#1{equation~\ref{#1}}

\def\1{\bm{1}}

\DeclareMathAlphabet{\mathsfit}{\encodingdefault}{\sfdefault}{m}{sl}
\SetMathAlphabet{\mathsfit}{bold}{\encodingdefault}{\sfdefault}{bx}{n}

\newcommand{\E}{\mathbb{E}}

\newcommand{\KL}{D_{\mathrm{KL}}}

\usepackage[hypertexnames=false,hidelinks]{hyperref}
\usepackage{url}
\usepackage{booktabs}
\usepackage{array}
\usepackage{amsmath}
\usepackage{amssymb}
\usepackage{mathtools}
\usepackage{graphicx}
\usepackage{algorithm}
\usepackage{algorithmic}
\usepackage{amsthm}

\newtheorem{theorem}{Theorem}

\newtheorem{corollary}{Corollary}
\newtheorem{lemma}{Lemma}

\newcommand{\target}{\pi}

\newcommand{\Z}{Z}
\providecommand{\E}{\mathbb{E}}
\providecommand{\KL}{D_{\mathrm{KL}}}

\newcommand{\dd}{\,\mathrm{d}}
\newif\ifnhmccameraready
\nhmccamerareadyfalse

\newcolumntype{L}[1]{>{\raggedright\arraybackslash}p{#1}}
\nhmccamerareadytrue

\title{Neural Non-Equilibrium Hamiltonian Monte Carlo for Corrected Boltzmann Sampling}
\author{
Moxian Qian\\
Helmholtz Institute Mainz\\
Johannes Gutenberg University Mainz\\
Institute of Molecular Biology (IMB)\\
Mainz, Germany\\
mqian@students.uni-mainz.de
}
\begin{document}
\raggedbottom

\maketitle
\lhead{Preprint}

\begin{abstract}
Sampling from an unnormalized Boltzmann density requires proposals that move
probability mass globally while retaining enough path-probability information
for statistical correction. We introduce Neural Non-Equilibrium Hamiltonian
Monte Carlo (NHMC), a train-then-correct learned Hamiltonian sampler. Starting
from a tractable base distribution, NHMC learns stochastic Hamiltonian-style paths
toward the target. Once training is complete, the learned proposal parameters
are fixed; the proposal then generates complete paths and endpoint
configurations, which are statistically corrected using the recorded
non-equilibrium work. This dimensionless generalized work is determined by the
probability ratio between the forward proposal path and a reverse reference
path. During training, minimizing its mean reduces a path-space KL divergence
and controls an upper bound on endpoint mismatch. During evaluation, the same
quantity defines weights for self-normalized importance sampling on paths
(path-SNIS), estimates normalizing constants or free-energy differences, and
gives the acceptance ratio for path-space independent Metropolis--Hastings
(path-IMH). We further derive a shared-bridge round-trip NHMC--MH kernel and
prove that its configuration-space transition preserves the Boltzmann target.
On double-well, finite-volume lattice $\phi^4$, compact
non-Abelian gauge, and Lennard--Jones cluster targets, the NHMC construction gives corrected estimates when path overlap is
sufficient; when overlap is poor, weight degeneracy, low acceptance, and long
autocorrelation expose proposal failure. We additionally report a molecular
internal-coordinate feasibility study using a molecular-dynamics prior and
learned-force path proposal.

\end{abstract}

\section{Introduction}

Sampling from a Boltzmann distribution
\begin{equation}
    \target(x)=\frac{1}{\Z}\exp[-E(x)]
\end{equation}
is a core problem in machine learning, physical simulation, and lattice field
theory. Classical Markov-chain methods provide asymptotically valid equilibrium
estimates under standard conditions, but local transitions can mix slowly
between separated modes or metastable regions. Neural samplers offer a
complementary route by learning global proposals that move probability mass
across configuration space. A finite-time learned transport, however, generally
does not preserve the target distribution exactly. A useful neural Boltzmann
sampler should therefore combine global proposal moves with sufficient
probability information to correct the resulting estimates or transitions.

\paragraph{Nonequilibrium path correction.}
Nonequilibrium work identities provide one route to such a correction.
Jarzynski's equality relates the exponential average of finite-time work to an
equilibrium free-energy difference, whereas Crooks' fluctuation relation
identifies the corresponding forward--reverse trajectory probability ratio
\citep{jarzynski1997nonequilibrium,crooks1999entropy}. Bennett's
acceptance-ratio method estimates free-energy differences using samples from
both directions \citep{bennett1976efficient}. Annealed importance sampling
(AIS) and Hamiltonian AIS construct importance weights on extended annealing
paths, while sequential Monte Carlo combines related path weights with
resampling
\citep{neal2001annealed,sohldickstein2012hais,delmoral2006smc}.
Nonequilibrium candidate Monte Carlo instead uses nonequilibrium work in a
Metropolis acceptance rule \citep{nilmeier2011ncmc}. Together, these methods
show that finite-time proposal paths can be statistically corrected when their
probability ratios are retained.

\paragraph{Learned samplers.}
Another line of work learns the sampling mechanism itself. Hamiltonian Monte
Carlo (HMC) augments the configuration with momenta and uses Hamiltonian
integration to construct long-range Metropolis proposals \citep{duane1987hybrid}.
Learned Markov chain Monte Carlo methods, including A-NICE-MC, L2HMC, and
related Hamiltonian constructions, parameterize transition operators for an
existing chain while retaining target invariance through a Metropolis correction
\citep{song2017anicemc,levy2018l2hmc,foreman2021dlhmc}. Boltzmann Generators
and lattice normalizing flows instead learn endpoint transports from a tractable
reference distribution to target-like configurations; the change-of-variables
density supports importance weighting or flow-based Metropolis correction
\citep{noe2019boltzmann,albergo2019flow,deldebbio2021trivializing,
abbott2023latticegauge}.

\paragraph{Learned stochastic paths.}
Learned stochastic-path methods connect endpoint transport with path-space
correction. Stochastic normalizing flows combine deterministic invertible maps
with stochastic sampling blocks and compute exact importance weights on the
complete generative path \citep{wu2020snf}. Path Integral Sampler and
Controlled Monte Carlo Diffusions learn controlled diffusion processes for
unnormalized targets using stochastic-control or path-space objectives
\citep{zhang2022pis,vargas2024cmcd}. AFT, CRAFT, and Sequential
Boltzmann Generators combine normalizing flows with annealed or sequential
refinement \citep{arbel2021aft,matthews2022craft,tan2025sbg}. FAB uses
AIS-bootstrapped samples to train a mass-covering endpoint flow and reduce
importance-weight variance \citep{midgley2023fab}.

Data-driven equivariant flow matching learns transports from target samples
\citep{klein2023equivariant}. Energy- and score-based methods instead learn
transports, reverse processes, likelihood surrogates, or stochastic controls
from target energies or their gradients
\citep{woo2024iefm,dern2026ewfm,aggarwal2025boltznce,vargas2023dds,
akhoundsadegh2024idem,ouyang2026bnem,he2025reverse}. NETS learns drift
corrections along an annealed transport process while retaining work-based
weights for unbiasing \citep{albergo2024nets}. Adjoint Sampling learns a
controlled diffusion toward the target and emphasizes the reuse of energy
evaluations during training, rather than a fixed-proposal SNIS or IMH
correction after training \citep{havens2025adjoint}. NHMC learns conditional
forward and reverse momentum laws interleaved with reversible,
volume-preserving kick--drift maps. The DW maps use the target energy gradient;
the lattice implementations use learned position-dependent kick fields.

\paragraph{Neural non-equilibrium Hamiltonian Monte Carlo.}
We introduce Neural Non-Equilibrium Hamiltonian Monte Carlo (NHMC), a
train-then-correct learned Hamiltonian sampler for unnormalized Boltzmann
targets. Starting from a tractable base distribution, NHMC learns a stochastic
Hamiltonian-style path toward the target. Each proposal stage samples a momentum
from a learned conditional density, applies an invertible volume-preserving
leapfrog-type map, and records the corresponding forward and reverse log
probabilities. A stage may also include a recorded
discrete involution, such as the sign flips used in the DW runs. In the DW
realization, the learned stochastic component is the conditional momentum law
attached to each stage. The lattice realizations additionally learn
position-dependent kick fields while retaining a reversible, volume-preserving
kick--drift composition. These recorded
quantities make the complete path likelihood ratio tractable even when an
endpoint proposal density is unavailable.

Let $Q_\theta$ denote the learned forward law over complete paths $\Gamma$,
and let $P_\theta$ denote a reverse-reference path law whose endpoint marginal
is the Boltzmann target. With the normalized base convention,
$\Delta F=-\log \Z$, and the recorded work satisfies
\begin{equation}
    \log\frac{dQ_\theta}{dP_\theta}(\Gamma)
    =
    W_\theta(\Gamma)-\Delta F .
\end{equation}
We refer to $W_\theta$ as the dimensionless generalized work associated with
the forward--reverse path-density ratio; it need not coincide with mechanical
work along a trajectory. This identity yields the unnormalized path weight
$\tilde\omega_\theta(\Gamma)=\exp[-W_\theta(\Gamma)]$ and the Jarzynski
normalization. It also gives
$\E_{Q_\theta}[W_\theta]-\Delta F=\KL(Q_\theta\|P_\theta)$, so reducing
average excess work reduces path-space irreversibility.

After training, the proposal parameters are fixed. The recorded work then
defines path-space self-normalized importance sampling (path-SNIS), estimates
normalizing constants or free-energy differences, and gives the acceptance
ratio for path-space independent Metropolis--Hastings (path-IMH). The objective
is designed to improve endpoint overlap by reducing a path-space KL divergence;
the recorded path ratio supplies the statistical correction.

\paragraph{Closest related methods.}
The closest related approaches are NETS, stochastic path sampling, and
counterdiabatic Hamiltonian Monte Carlo. NETS learns an additional drift along
an annealed transport process and retains nonequilibrium weights for unbiasing
\citep{albergo2024nets}. A stochastic path sampler learns forward and backward
Langevin-type path laws and applies path reweighting or extended-space
independent Metropolis--Hastings \citep{chen2026sps}. Counterdiabatic HMC
learns an auxiliary Hamiltonian term and uses nonequilibrium weights inside an
SMC construction \citep{cohngordon2026chmc}. NHMC instead learns bidirectional
conditional momentum laws and, in its lattice realizations, position-dependent
kick fields within reversible, volume-preserving kick--drift maps. Its
correction factor is the probability ratio of the
realized stochastic Hamiltonian path, rather than an endpoint
change-of-variables density.
A concurrent diffusion-path construction uses an augmented-space Metropolis
correction to remove bias from approximate scores and discretization
\citep{chen2026madpath}.

NHMC therefore provides a learned stochastic Hamiltonian proposal whose
recorded work connects training, free-energy estimation, path-SNIS, and
path-IMH. We also derive and prove a Metropolis-adjusted round-trip NHMC kernel:
the reverse NHMC law pulls the current configuration back to a shared bridge
state, and the forward law generates a candidate from that bridge. A path-swap
Metropolis correction makes the resulting configuration-space kernel reversible
with respect to the Boltzmann target. We evaluate the learned NHMC path proposal
and its path-SNIS, path-IMH, and round-trip corrections on analytically
tractable many-well targets, finite-volume lattice $\phi^4$, and compact
non-Abelian gauge targets. We also evaluate
path-SNIS on the standard 13-particle Lennard--Jones (LJ13) cluster target using the reference set and
Wasserstein metrics of \citet{klein2023equivariant}. A separate molecular
internal-coordinate study evaluates learned-force prior-action scores initialized from
a molecular-dynamics prior. The numerical evaluations report weighted observables together
with path-SNIS effective sample size (ESS), work spread, Metropolis acceptance, autocorrelation, and mode
coverage. Appendix~\ref{app:additional-results} also reports a compact $U(1)$
round-trip pilot and the complete two-dimensional $U(1)$,
$\mathrm{SU}(2)$, and $\mathrm{SU}(3)$ volume checks underlying the
representative main-text gauge results.

\section{Neural Non-Equilibrium Hamiltonian Monte Carlo}
\label{sec:method}

\begin{figure}[!t]
\centering
\includegraphics[width=0.98\linewidth]{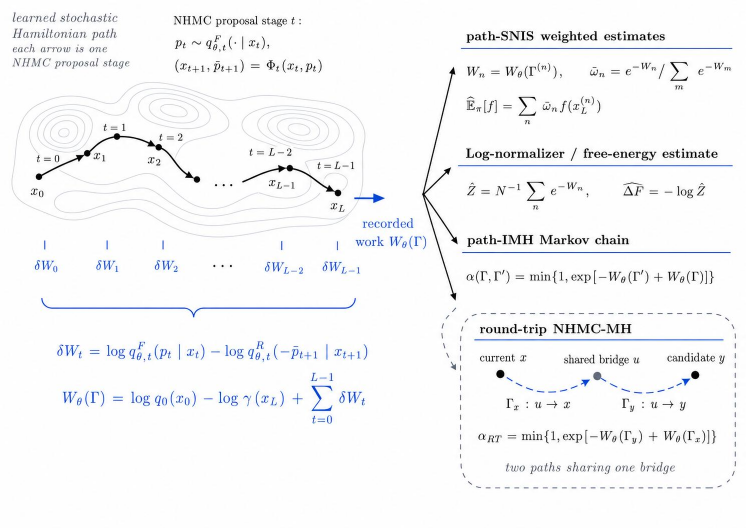}
\caption{NHMC path construction and recorded-work corrections. After training,
the learned proposal parameters are fixed, and the resulting proposal generates
complete stochastic Hamiltonian paths. The recorded work combines endpoint
target/base terms with per-stage forward--reverse momentum log-ratios and, when
present, discrete-move log-ratios. The stage formula shown in the schematic
suppresses this optional discrete term. The same quantity supports path-SNIS,
normalizer/free-energy estimation, and independent path-IMH. The round-trip
branch instead acts directly on configurations: starting from the current
$x$, it traverses the forward-oriented path $\Gamma_x:u\to x$ in reverse to a
shared bridge $u$, follows $\Gamma_y:u\to y$ to a candidate, and then makes one
Metropolis decision.}
\label{fig:nhmc-method-schematic}
\end{figure}

NHMC defines a tractable probability law over complete stochastic paths.
Starting from a normalized base distribution $q_0$, it keeps the auxiliary
randomness inside the path $\Gamma$. The forward-to-reverse path ratio then
remains computable even when the endpoint proposal density is unavailable.

\subsection{Stochastic Hamiltonian Path Proposal}
\label{sec:path-proposal}
Let $\gamma(x)=\exp[-E(x)]$, $\Z=\int\gamma(x)\,dx$, and
$\pi(x)=\gamma(x)/\Z$. A stage may first draw a discrete auxiliary variable
\begin{equation}
    s_t\sim r^F_{\theta,t}(\cdot\mid x_t),
    \qquad
    \widetilde x_t=g_{s_t}(x_t),
\end{equation}
where $g_{s_t}$ is an invertible, volume-preserving transformation. When no
discrete move is used, $\widetilde x_t=x_t$ and the corresponding probability
factor is one. With auxiliary momentum $p$ and kinetic energy
$K(p)=\frac12p^\top M^{-1}p$, NHMC then draws a fresh input momentum
\begin{align}
    p_t&\sim q^F_{\theta,t}(\cdot\mid \widetilde x_t),&
    q^F_{\theta,t}
    &=
    \mathcal N\!\left(
    \mu^F_{\theta,t}(\widetilde x_t),
    \mathrm{diag}\,(\sigma^F_{\theta,t}(\widetilde x_t))^2
    \right),
    \label{eq:forward-momentum}
\end{align}
and applies an invertible, reversible, volume-preserving kick--drift map
\begin{equation}
    (x_{t+1},\bar p_{t+1})=\Phi_t(\widetilde x_t,p_t).
    \label{eq:hamiltonian-map}
\end{equation}
The symbol $p_t$ denotes the newly sampled input momentum; $\bar p_{t+1}$
denotes the output momentum of the deterministic map. The next stage samples a
new momentum, so these two quantities are distinct. In the DW realization,
$\Phi_t$ is a leapfrog composition using the target-energy gradient. The
lattice realizations use the same reversible, volume-preserving kick--drift
composition with learned position-dependent kick fields. These lattice fields
are not assumed to be conservative or symplectic; the correction argument uses
only invertibility, reversibility, and volume preservation. All reported maps use
state-independent step sizes. The complete forward path is
$\Gamma=(x_0,\{s_t,\widetilde x_t,p_t,x_{t+1},\bar p_{t+1}\}_{t=0}^{L-1})$,
with absent $s_t$ variables omitted and $x_0\sim q_0$.
Each stage is part of proposal construction, not a target-preserving Markov
transition by itself. Target-corrected outputs arise from path-SNIS or
path-IMH after a complete path is generated, or from the paired reverse--forward
construction of round-trip NHMC-MH.

\paragraph{Compact-group realization.}
For the gauge-field calculations, a configuration is a collection of links
$U=\{U_\ell\}$ in a compact group $G$, with reference measure
$\prod_\ell d_HU_\ell$, and each momentum $A_\ell$ lies in the corresponding
Lie algebra. The group-valued drift is
$U_\ell\leftarrow\exp(\varepsilon A_\ell)U_\ell$. Left multiplication
preserves Haar measure, while the additive momentum kick preserves Lebesgue
measure on the Lie algebra. Their palindromic kick--drift composition is
therefore reversible and volume preserving with respect to the product
Haar--Lebesgue measure. The reported runs use gauge-equivariant learned kick
fields and conditional momentum laws; the correction remains the recorded
forward--reverse path ratio defined below.

\subsection{Reverse-Reference Path and Recorded Work}
\label{sec:path-work}
For each forward stochastic law, NHMC also defines normalized reverse laws
$q^R_{\theta,t}(p\mid x)$ and, when needed,
$r^R_{\theta,t}(s\mid x)$. The reverse-reference law starts from
$x_L\sim\pi$ and is used to define the path probability ratio. For a realized
forward stage satisfying
\begin{equation}
    \Phi_t(\widetilde x_t,p_t)=(x_{t+1},\bar p_{t+1}),
\end{equation}
time reversibility gives
\begin{equation}
    \Phi_t(x_{t+1},-\bar p_{t+1})=(\widetilde x_t,-p_t).
\end{equation}
Thus the reverse input momentum associated with forward stage $t$ is
$-\bar p_{t+1}$. The earlier configuration is then recovered with
$x_t=g_{s_t}^{-1}(\widetilde x_t)$. We write $s_t^\dagger$ for the reverse
label associated with this inverse operation, so that
$g_{s_t^\dagger}=g_{s_t}^{-1}$. In the DW implementation, $g_{s_t}$ is the
recorded coordinate-wise sign flip, hence $s_t^\dagger=s_t$.

Let $Q_\theta$ be the forward path law and let $P_\theta$ be the
reverse-reference law written on the same path coordinates. The complete path $\Gamma$ is the
deterministic augmentation of the coordinates
$(x_0,s_0,p_0,\ldots,s_{L-1},p_{L-1})$ by
the maps $\Phi_t$. All path densities below are taken with respect to the
common path-coordinate measure $d\lambda(\Gamma)$ induced by these maps.
Volume preservation gives
\begin{align}
    Q_\theta(d\Gamma)
    &=
    q_0(x_0)
    \prod_{t=0}^{L-1}
    r^F_{\theta,t}(s_t\mid x_t)
    q^F_{\theta,t}(p_t\mid \widetilde x_t)\,d\lambda(\Gamma),
    \label{eq:forward-path-density}\\
    P_\theta(d\Gamma)
    &=
    \pi(x_L)
    \prod_{t=0}^{L-1}
    r^R_{\theta,t}(s_t^\dagger\mid x_{t+1})
    q^R_{\theta,t}\!\left(
    -\bar p_{t+1}\mid x_{t+1}
    \right)\,d\lambda(\Gamma),
    \label{eq:reverse-path-density}
\end{align}
and the configuration endpoint marginal of $P_\theta$ is $\pi$. We call the
following path-density log ratio the dimensionless generalized recorded work:
\begin{equation}
\begin{aligned}
    W_\theta(\Gamma)
    &=
    \log q_0(x_0)-\log\gamma(x_L)\\
    &\quad+
    \sum_{t=0}^{L-1}
    \left[
    \log r^F_{\theta,t}(s_t\mid x_t)
    -\log r^R_{\theta,t}(s_t^\dagger\mid x_{t+1})\right.\\
    &\qquad\qquad\left.
    +\log q^F_{\theta,t}(p_t\mid \widetilde x_t)
    -\log q^R_{\theta,t}(-\bar p_{t+1}\mid x_{t+1})
    \right].
\end{aligned}
\label{eq:recorded-work}
\end{equation}
Thus
\begin{align}
    \log\frac{dQ_\theta}{dP_\theta}(\Gamma)
    &=
    W_\theta(\Gamma)+\log\Z \notag\\
    &=
    W_\theta(\Gamma)-\Delta F,
    \qquad
    \Delta F=-\log\Z,
    \label{eq:path-ratio}
\end{align}
or equivalently $dP_\theta/dQ_\theta=\exp[-W_\theta(\Gamma)]/\Z$. Hence
$\widetilde\omega_\theta(\Gamma)=\exp[-W_\theta(\Gamma)]$ is an unnormalized
path correction weight. Extra stochastic moves contribute their
forward-minus-reverse log probabilities to $W_\theta$, and non-volume-preserving
deterministic maps contribute their Jacobian terms. In concrete systems, the
generation process can record the complete path-probability information; then
$\exp[-W_\theta(\Gamma)]$ is computed from the recorded forward/reverse
probabilities, Jacobian terms, and endpoint energies without requiring an
explicit endpoint proposal density.

\subsection{Learning by Reducing Path-Space Irreversibility}
\label{sec:training}
The theoretical core objective is the average recorded work,
\begin{equation}
    \mathcal{L}_{\mathrm{work}}(\theta)
    =
    \E_{\Gamma\sim Q_\theta}[W_\theta(\Gamma)] .
    \label{eq:work-objective}
\end{equation}
The reported DW and lattice runs optimize the finite-batch objective
\begin{equation}
    \mathcal{L}_{\mathrm{train}}(\theta)
    =
    \mathcal{L}_{\mathrm{work}}(\theta)
    +\lambda_{\mathrm{var}}\,\operatorname{Var}_{Q_\theta}
    [W_\theta(\Gamma)],
\end{equation}
with coefficients given in Appendix~\ref{app:experimental-details}. The
variance term is an optimization regularizer; the path-ratio identity and the
post-training corrections depend only on the recorded work.
Taking expectations in Eq.~(\ref{eq:path-ratio}) gives
\begin{equation}
    \mathcal{L}_{\mathrm{work}}(\theta)-\Delta F
    =
    \KL(Q_\theta\|P_\theta).
    \label{eq:path-kl}
\end{equation}
The single-path work $W_\theta$ is distinct from dissipated work,
$W_\theta-\Delta F$. Since $\Delta F$ is independent of $\theta$, minimizing
average work minimizes the path-space irreversibility and, by marginalization,
controls an upper bound on the endpoint proposal-to-target KL
$\KL(q_{\theta,L}\|\pi)$. Gaussian momentum laws are trained by reparameterizing
$p_t=\mu^F_{\theta,t}(\widetilde x_t)+
\sigma^F_{\theta,t}(\widetilde x_t)\odot\varepsilon_t$ with
$\varepsilon_t\sim\mathcal N(0,I)$; the reverse law is evaluated at
$-\bar p_{t+1}$. The variance regularizer changes optimization behavior but not
the correction identity. Endpoint overlap is assessed separately through ESS,
acceptance, and observable agreement.

\subsection{Evaluation with Path-SNIS and Path-IMH}
\label{sec:post-training-correction}
\paragraph{Evaluation after training.}
Once training is complete, the learned proposal parameters are fixed.
Independent-path and endpoint evaluations use four correction modes: endpoint
SNIS, endpoint IMH, path-SNIS, or path-IMH. The shared-bridge construction in
Section~\ref{sec:roundtrip-method} gives an additional configuration-space
Markov kernel. Here SNIS denotes self-normalized importance
sampling, and IMH denotes independent Metropolis--Hastings. For
independent paths $\Gamma_i\sim Q_\theta$, Jarzynski estimates the normalizing
constant by $\widehat \Z_N=N^{-1}\sum_i\exp[-W_\theta(\Gamma_i)]$. For an
endpoint observable $O$, path-SNIS reports
\begin{equation}
    \widehat{\E_\pi[O]}
    =
    \frac{
        \sum_i \widetilde\omega_i O(x_L^{(i)})
    }{
        \sum_i \widetilde\omega_i
    },
    \qquad
    \widetilde\omega_i=e^{-W_\theta(\Gamma_i)}.
    \label{eq:path-snis}
\end{equation}
Path-IMH treats the complete path as the Markov-chain state. Given current path
$\Gamma$ and proposal $\Gamma'\sim Q_\theta$, it accepts with probability
\begin{equation}
    \alpha(\Gamma,\Gamma')
    =
    1\wedge
    \exp[-W_\theta(\Gamma')+W_\theta(\Gamma)] .
    \label{eq:path-imh}
\end{equation}
The resulting path chain preserves $P_\theta$, whose endpoint marginal is
$\pi$. SNIS gives consistent weighted estimates; ESS and work spread summarize
finite-sample efficiency. IMH gives target-invariant Markov kernels; acceptance
and autocorrelation measure mixing.
When the endpoint density $q_{\theta,L}(x)$ is tractable, endpoint SNIS and
endpoint IMH are recovered by replacing $\widetilde\omega_\theta(\Gamma)$ with
$\gamma(x)/q_{\theta,L}(x)$. Direct comparisons to literature baselines use
matched targets, budgets, correction modes, and observables.
Appendix~\ref{app:proofs} gives the endpoint, path, and general current-state
Metropolis ratios.

\section{Correction from Recorded Path Ratios}
\label{sec:theory}

The key correction step is the probability-ratio relation between the forward
proposal path law $Q_\theta$ and a reference reverse path law $P_\theta$. This
relation ties together the recorded work $W_\theta(\Gamma)$, the normalizing
constant $Z$, and the reference reverse law used for correction; Jarzynski
normalization, path-SNIS weighting, and path-IMH acceptance all follow from it.
We use this relation to state the target of each estimator or chain, and then
return to why average work is a useful training objective.

\paragraph{Standing assumptions.}
Throughout, $\theta$ is fixed, $0<Z=\int\gamma(x)\dd x<\infty$, and
$\pi(x)=\gamma(x)/Z$. Endpoint corrections require a tractable normalized
endpoint density $q_{\theta,L}$ together with the usual support and moment
conditions. Path corrections use the forward/reverse construction of
Section~\ref{sec:method}, with finite positive path weights. When an endpoint
density is unavailable, the path-weight estimators and path-IMH kernel are used
instead.

The distinction used below is simple. SNIS statements concern weighted endpoint
averages. IMH statements concern target invariance of a chain whose state is
either an endpoint or a complete path. These are post-training statements:
training changes proposal overlap and finite-sample efficiency; once the
learned proposal is fixed, the SNIS weight or IMH acceptance rule determines
the target law represented by the estimator or Markov chain.

\subsection{Forward--Reverse Path Ratio}
Section~\ref{sec:path-work} defines the path $\Gamma$, the forward path law
$Q_\theta$, the reference reverse path law $P_\theta$, and the recorded work
$W_\theta$ in Eq.~(\ref{eq:recorded-work}). The reference reverse law starts
from $x_L\sim\pi$, so its endpoint marginal is the desired Boltzmann
distribution. This identity is the NHMC analogue of a Crooks-type
forward--reverse trajectory relation on the extended path space: the recorded
work is the log ratio of the realized forward and reverse path laws, up to the
free-energy term
\citep{crooks1999entropy,nilmeier2011ncmc,wu2020snf}.

\begin{theorem}[Forward--reverse path density ratio]
\label{thm:path-probability-identity}
Assume that $q_0$, $q^F_{\theta,t}$, $q^R_{\theta,t}$,
$r^F_{\theta,t}$, and $r^R_{\theta,t}$ are normalized densities or mass
functions with respect to the chosen base measures. Assume that each optional
$g_{s_t}$ is invertible and volume preserving, that $s_t\mapsto s_t^\dagger$
is a bijection of the discrete label space satisfying
$g_{s_t^\dagger}=g_{s_t}^{-1}$, and that each $\Phi_t$ is
invertible, volume preserving, and reversible under the momentum flip
$\mathcal R(x,p)=(x,-p)$, so that
$\Phi_t^{-1}=\mathcal R\circ\Phi_t\circ\mathcal R$, and that $Q_\theta$ and
$P_\theta$ are mutually absolutely continuous. Then the reference reverse law
$P_\theta$ is normalized, has endpoint marginal $\pi$, and, almost everywhere
on their common path support,
\begin{align}
    \log\frac{dQ_\theta}{dP_\theta}(\Gamma)
    &=
    W_\theta(\Gamma)+\log Z,\\
    \frac{dP_\theta}{dQ_\theta}(\Gamma)
    &=
    \frac{\exp[-W_\theta(\Gamma)]}{Z}.
\end{align}
\end{theorem}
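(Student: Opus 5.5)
The plan is to make precise the path-coordinate measure $d\lambda$ and show that both $Q_\theta$ and $P_\theta$ are honest probability measures written as densities against it. The ratio identity then follows by dividing the two densities in Eqs.~(\ref{eq:forward-path-density})--(\ref{eq:reverse-path-density}).

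\textbf{Step 1: two parametrizations of $d\lambda$.} Parametrize paths by the forward coordinates $(x_0,s_0,p_0,\ldots,s_{L-1},p_{L-1})$. Take $d\lambda$ to be the product of Lebesgue (or Haar--Lebesgue) measure on $x_0$ and the $p_t$ with counting measure on the $s_t$, pushed forward by the deterministic augmentation.

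The key lemma is that the reverse coordinates $(x_L,s^\dagger_{L-1},-\bar p_L,\ldots,s_0^\dagger,-\bar p_1)$ give the same measure. I would prove this stage by stage. For fixed $s_t$, the map $(x_t,p_t)\mapsto(\widetilde x_t,p_t)\mapsto\Phi_t(\widetilde x_t,p_t)=(x_{t+1},\bar p_{t+1})\mapsto(x_{t+1},-\bar p_{t+1})$ is a composition of volume-preserving bijections: $g_{s_t}$, then $\Phi_t$, then the momentum flip $\mathcal R$. It therefore preserves the product measure. The relabelling $s_t\mapsto s_t^\dagger$ is a bijection, so it preserves counting measure.

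Reversibility, $\Phi_t^{-1}=\mathcal R\circ\Phi_t\circ\mathcal R$, together with $g_{s_t^\dagger}=g_{s_t}^{-1}$, ensures that the reverse recursion started from $(x_{t+1},s_t^\dagger,-\bar p_{t+1})$ recovers exactly $(x_t,s_t,p_t)$. Composing these changes of variables over $t=L-1,\dots,0$ by Fubini/induction shows that the global coordinate change has unit Jacobian. This is the step I expect to be the main obstacle. The bookkeeping of the sequential dependence needs care: each stage map acts on only part of the coordinates while the others are held fixed, and the discrete labels must be handled with product measures and sums.

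\textbf{Step 2: normalization and endpoint marginal.} Write $P_\theta$ in the reverse coordinates. Its density is $\pi(x_L)\prod_t r^R_{\theta,t}(s_t^\dagger\mid x_{t+1})\,q^R_{\theta,t}(-\bar p_{t+1}\mid x_{t+1})$, and by Step 1 the reference measure is unchanged. Integrate out the innermost variables first, namely $-\bar p_1$ and $s_0^\dagger$ given $x_1$, then proceed outward, using that each conditional law is normalized. This yields total mass $\int\pi=1$. Stopping before the final integration shows that the $x_L$-marginal is $\pi$. The same iterated integration in the forward coordinates shows $Q_\theta$ is normalized.

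\textbf{Step 3: the ratio.} Since both laws have densities with respect to the common $\lambda$, on the common support we have $dQ_\theta/dP_\theta=q_0(x_0)\prod(\cdots)^F/[\pi(x_L)\prod(\cdots)^R]$. Mutual absolute continuity lets us take logs almost everywhere. Substituting $\pi=\gamma/Z$ gives the log ratio as Eq.~(\ref{eq:recorded-work}) plus $\log Z$, and exponentiating the negative gives $dP_\theta/dQ_\theta=e^{-W_\theta}/Z$.
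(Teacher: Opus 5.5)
Your proposal is correct and follows the paper's proof: write $Q_\theta$ and $P_\theta$ as densities against the common volume-preserving path-coordinate measure, get normalization and the $\pi$ endpoint marginal from the sequential construction with normalized conditionals, and divide the densities using $\pi=\gamma/Z$. Your Step 1 makes explicit what the paper only asserts (that the inverse deterministic reconstructions preserve the reference measure), using the stagewise maps $\mathcal R\circ\Phi_t\circ g_{s_t}$ and the label bijection $s_t\mapsto s_t^\dagger$. The sequential bookkeeping you flag as the main obstacle is routine, because each stage map acts only on $(x_t,s_t,p_t)$ while all other coordinates are held fixed.
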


\begin{proof}
The reverse construction first draws $x_L\sim\pi$ and then draws normalized
reverse discrete variables and momenta while reconstructing earlier states
with $\Phi_t^{-1}$ and $g_{s_t}^{-1}$. The reverse law is therefore normalized
and has endpoint marginal $\pi$. Dividing
Eqs.~(\ref{eq:forward-path-density}) and
(\ref{eq:reverse-path-density}) with respect to their common path-coordinate
measure gives
\begin{align}
    \frac{dQ_\theta}{dP_\theta}(\Gamma)
    &=
    \frac{q_0(x_0)
    \prod_{t=0}^{L-1}
    r^F_{\theta,t}(s_t\mid x_t)
    q^F_{\theta,t}(p_t\mid\widetilde x_t)}
    {\pi(x_L)
    \prod_{t=0}^{L-1}
    r^R_{\theta,t}(s_t^\dagger\mid x_{t+1})
    q^R_{\theta,t}(-\bar p_{t+1}\mid x_{t+1})}\\
    &=Z\exp[W_\theta(\Gamma)],
\end{align}
where $\pi(x_L)=\gamma(x_L)/Z$ and Eq.~(\ref{eq:recorded-work}) was used in
the second line. Taking logarithms gives the first identity, and inversion
gives the second. An expanded coordinate-level derivation is given in
Appendix~\ref{app:path-ratio-proof}.
\end{proof}

\paragraph{Jacobian convention.}
Volume preservation removes deterministic-map Jacobian terms. For a
non-volume-preserving map, the corresponding log-Jacobian contribution must be
included in $W_\theta$. The deterministic maps used in the DW and lattice NHMC
runs are volume preserving, so these terms vanish. Thus
subsequent corrections can be
written using the same unnormalized path weight,
\begin{equation}
    \widetilde{\omega}_\theta(\Gamma)=\exp[-W_\theta(\Gamma)]
\end{equation}
as shown below.

\subsection{Weighted Correction}
\begin{corollary}[Jarzynski normalization and path reweighting]
\label{cor:path-reweighting}
Under the assumptions of Theorem~\ref{thm:path-probability-identity},
\begin{equation}
    \E_{Q_\theta}[\exp[-W_\theta(\Gamma)]]=Z.
\end{equation}
For any endpoint observable $O$ with
$\E_{Q_\theta}[\widetilde{\omega}_\theta |O(x_L)|]<\infty$,
\begin{equation}
    \E_\pi[O]
    =
    \frac{\E_{Q_\theta}[\widetilde{\omega}_\theta(\Gamma)O(x_L)]}
         {\E_{Q_\theta}[\widetilde{\omega}_\theta(\Gamma)]}.
\end{equation}
\end{corollary}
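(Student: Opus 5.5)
The plan is to derive both claims from the change-of-measure identity in Theorem~\ref{thm:path-probability-identity}, namely $dP_\theta/dQ_\theta=\exp[-W_\theta]/Z$, together with the fact that the endpoint marginal of $P_\theta$ is $\pi$.

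\textbf{Normalization.} Write
\begin{equation*}
\E_{Q_\theta}\big[e^{-W_\theta}\big]=Z\,\E_{Q_\theta}\Big[\frac{dP_\theta}{dQ_\theta}\Big]=Z\,P_\theta(\operatorname{supp}Q_\theta).
\end{equation*}
Mutual absolute continuity gives $P_\theta(\operatorname{supp}Q_\theta)=1$, since $P_\theta$ assigns no mass outside the support of $Q_\theta$. Because $P_\theta$ is normalized by the theorem, the right-hand side equals $Z$. Nonnegativity of the integrand means no integrability hypothesis is needed; finiteness follows from $Z<\infty$.

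\textbf{Observable identity.} Apply the same change of measure to $f(\Gamma)=O(x_L)$:
\begin{equation*}
\E_{Q_\theta}\big[\widetilde\omega_\theta(\Gamma)\,O(x_L)\big]=Z\,\E_{P_\theta}[O(x_L)].
\end{equation*}
Justify this first for $|O|$, where the left side is finite by hypothesis and the step is valid for nonnegative functions. Then split $O=O^+-O^-$ to pass to signed $O$. Since the $x_L$-marginal of $P_\theta$ is $\pi$, which is read directly off Eq.~(\ref{eq:reverse-path-density}) by integrating out the reverse auxiliary variables, we get $\E_{P_\theta}[O(x_L)]=\E_\pi[O]$. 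Dividing by the normalization identity, with $0<Z<\infty$, gives the ratio formula.

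\textbf{Main obstacle.} The step needing the most care is the support bookkeeping. The density ratio holds only almost everywhere on the common support, so we must confirm that $P_\theta$ places no mass where $Q_\theta$ vanishes. Otherwise the Jarzynski identity would give only $\le Z$. Mutual absolute continuity, assumed in the theorem, settles this.

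A related subtlety is that the endpoint marginal of $P_\theta$ must be computed under the path-coordinate measure $d\lambda$. Here I would invoke the volume-preserving reconstruction from the theorem's proof: the reverse variables are drawn from normalized conditionals, and $x_L\sim\pi$ is the initial draw, so marginalizing them leaves $\pi$.
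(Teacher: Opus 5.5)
Your proposal is correct and follows the paper's proof. You integrate $dP_\theta/dQ_\theta=\exp[-W_\theta]/Z$ under $Q_\theta$ for the normalization, then use that the endpoint marginal of $P_\theta$ is $\pi$ to write $\E_\pi[O]=Z^{-1}\E_{Q_\theta}[\widetilde\omega_\theta O(x_L)]$ and divide; your extra bookkeeping (mutual absolute continuity ruling out the strict inequality $\le Z$, and handling $|O|$ first before splitting into $O^\pm$) makes explicit the measure-theoretic steps the paper leaves implicit.
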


\begin{proof}
Integrating $dP_\theta/dQ_\theta=\exp[-W_\theta]/Z$ with respect to
$Q_\theta$ gives the first identity. Since the endpoint marginal of $P_\theta$
is $\pi$,
\begin{equation}
    \E_\pi[O]
    =\E_{P_\theta}[O(x_L)]
    =\frac{1}{Z}\E_{Q_\theta}
      [\exp[-W_\theta(\Gamma)]O(x_L)].
\end{equation}
Substituting the first identity gives the stated ratio.
\end{proof}

The empirical path-SNIS estimator derived from
Corollary~\ref{cor:path-reweighting} is strongly consistent by the strong law.
Reported SNIS observables are computed from this weighted empirical measure;
unweighted endpoints remain proposal endpoints. If $q_{\theta,L}$ is tractable,
the same argument gives endpoint SNIS with
$w_\theta(x)=\gamma(x)/q_{\theta,L}(x)$.

\subsection{Independent Metropolis--Hastings Correction}
The independent Metropolis--Hastings correction is defined on the same state
space as the proposal. For path-IMH, the Markov-chain state is the complete
trajectory $\Gamma$, and observables use the endpoint $x_L$. Each transition
generates one full candidate path $\Gamma'\sim Q_\theta$ and then makes a
single accept/reject decision from $W_\theta$; there is no leapfrog-level
Metropolis step. Related augmented-trajectory Metropolis constructions appear
in NCMC, stochastic normalizing flows, and SPS
\citep{nilmeier2011ncmc,wu2020snf,chen2026sps}.

\begin{theorem}[Path-IMH target invariance]
\label{thm:path-imh}
Under the assumptions of Theorem~\ref{thm:path-probability-identity}, consider
the independent path proposal $\Gamma'\sim Q_\theta$, accepted with probability
\begin{equation}
    \alpha_\theta(\Gamma,\Gamma')
    =
    1\wedge
    \frac{\widetilde{\omega}_\theta(\Gamma')}
         {\widetilde{\omega}_\theta(\Gamma)}
    =
    1\wedge
    \exp[-W_\theta(\Gamma')+W_\theta(\Gamma)].
\end{equation}
The resulting Markov kernel preserves $P_\theta$. Hence, at stationarity, the
endpoint $x_L$ has marginal distribution $\pi$.
\end{theorem}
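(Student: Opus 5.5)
The plan is to verify detailed balance of the path-IMH kernel with respect to $P_\theta$, using Theorem~\ref{thm:path-probability-identity} to express $P_\theta$ as a reweighting of the proposal law $Q_\theta$. By that theorem, on the common path support,
\begin{equation}
    P_\theta(d\Gamma)=\frac{\widetilde\omega_\theta(\Gamma)}{Z}\,Q_\theta(d\Gamma),
\end{equation}
with $\widetilde\omega_\theta=\exp[-W_\theta]$ finite and positive. The path-IMH kernel is
\begin{equation}
    K(\Gamma,d\Gamma')=\alpha_\theta(\Gamma,\Gamma')\,Q_\theta(d\Gamma')+r(\Gamma)\,\delta_\Gamma(d\Gamma'),
\end{equation}
where $r(\Gamma)=1-\int\alpha_\theta(\Gamma,\Gamma'')\,Q_\theta(d\Gamma'')$ is the rejection probability. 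This is a well-defined Markov kernel because $\alpha_\theta\in[0,1]$ is measurable in $(\Gamma,\Gamma')$.

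The key step is symmetry of the off-diagonal flux. I will show that the measure
\begin{equation}
    \frac{1}{Z}\,\widetilde\omega_\theta(\Gamma)\,\bigl(1\wedge \widetilde\omega_\theta(\Gamma')/\widetilde\omega_\theta(\Gamma)\bigr)\,Q_\theta(d\Gamma)\,Q_\theta(d\Gamma')
    =\frac{1}{Z}\bigl(\widetilde\omega_\theta(\Gamma)\wedge\widetilde\omega_\theta(\Gamma')\bigr)\,Q_\theta(d\Gamma)\,Q_\theta(d\Gamma')
\end{equation}
is invariant under swapping $\Gamma$ and $\Gamma'$, which is immediate from the right-hand side. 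The rejection part $P_\theta(d\Gamma)\,r(\Gamma)\,\delta_\Gamma(d\Gamma')$ is trivially symmetric. Together these give $P_\theta(d\Gamma)K(\Gamma,d\Gamma')=P_\theta(d\Gamma')K(\Gamma',d\Gamma)$. Integrating over $\Gamma$ then yields $\int P_\theta(d\Gamma)K(\Gamma,\cdot)=P_\theta(\cdot)$.

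The only point requiring care, and what I expect to be the main obstacle, is support and measurability: the density representation of $P_\theta$ relative to $Q_\theta$ must hold everywhere the chain can move. I will handle this with the assumed mutual absolute continuity. Paths where $Q_\theta$ vanishes are never proposed and carry no $P_\theta$ mass, and the ratio $\widetilde\omega_\theta(\Gamma')/\widetilde\omega_\theta(\Gamma)$ is finite and positive $P_\theta\otimes Q_\theta$-almost everywhere. So the kernel only needs to be specified almost everywhere, and the null set does not affect invariance.

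Finally, for the endpoint claim I push forward through the measurable endpoint map $\Gamma\mapsto x_L$. If the chain state is distributed as $P_\theta$, the endpoint is distributed as the $x_L$-marginal of $P_\theta$, which is $\pi$ by Theorem~\ref{thm:path-probability-identity}. Hence at stationarity the endpoint has law $\pi$.
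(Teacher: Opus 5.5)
Your proposal is correct and follows essentially the same route as the paper. Both write $P_\theta=Z^{-1}\widetilde\omega_\theta\,Q_\theta$ via Theorem~\ref{thm:path-probability-identity}, note that the accepted flux is proportional to $\min\{\widetilde\omega_\theta(\Gamma),\widetilde\omega_\theta(\Gamma')\}\,Q_\theta(d\Gamma)Q_\theta(d\Gamma')$ and is therefore symmetric, treat rejections as self-loops, and project onto the endpoint marginal $\pi$; your explicit kernel and almost-everywhere support remarks just make the same argument more careful.
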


\begin{proof}
By Theorem~\ref{thm:path-probability-identity},
$P_\theta(d\Gamma)=Z^{-1}\widetilde{\omega}_\theta(\Gamma)Q_\theta(d\Gamma)$.
For two distinct paths, the accepted transition measure is proportional to
\begin{align}
    &Q_\theta(d\Gamma)Q_\theta(d\Gamma')
      \widetilde{\omega}_\theta(\Gamma)
      \min\left\{1,
      \frac{\widetilde{\omega}_\theta(\Gamma')}
           {\widetilde{\omega}_\theta(\Gamma)}\right\}\\
    &\qquad=
      Q_\theta(d\Gamma)Q_\theta(d\Gamma')
      \min\{\widetilde{\omega}_\theta(\Gamma),
             \widetilde{\omega}_\theta(\Gamma')\}.
\end{align}
The last expression is symmetric in $\Gamma$ and $\Gamma'$, so accepted moves
satisfy detailed balance with respect to $P_\theta$. Rejections are self-loops
and preserve detailed balance as well. Thus $P_\theta$ is invariant, and its
endpoint marginal is $\pi$. Appendix~\ref{app:path-imh-proof} gives the expanded
path-space derivation.
\end{proof}

Endpoint IMH target invariance is the special case with state $x$, proposal
$q_{\theta,L}$, and weight $w_\theta(x)=\gamma(x)/q_{\theta,L}(x)$. The
theorem establishes invariance; convergence from an arbitrary initialization
additionally requires the usual irreducibility and aperiodicity conditions for
the independent Metropolis kernel.

\subsection{Metropolis-Adjusted Round-Trip NHMC}
\label{sec:roundtrip-method}
The learned forward and reverse laws also define a Markov kernel whose state is
the current configuration rather than a complete path.  Let
$f_\theta(a\mid u)$ and $r_\theta(b\mid x)$ denote the normalized densities of
the forward and reverse auxiliary path records.  The reversible,
volume-preserving path construction induces a bijection
\begin{equation}
    \Psi_\theta:(u,a)\longmapsto(x,b),
\end{equation}
where $u$ is a bridge configuration, $x$ is the endpoint, and $b$ is the
reverse record associated with the realized forward path.

Starting from a current configuration $x$, draw
$b\sim r_\theta(\cdot\mid x)$ and recover
$(u,a)=\Psi_\theta^{-1}(x,b)$.  Independently draw
$a'\sim f_\theta(\cdot\mid u)$ and set
$(y,b')=\Psi_\theta(u,a')$.  Writing the two realized paths in the forward
orientation as $\Gamma_x:u\to x$ and $\Gamma_y:u\to y$, accept $y$ with
probability
\begin{align}
    \alpha_{\mathrm{RT}}
    &=1\wedge
    \frac{\gamma(y)r_\theta(b'\mid y)f_\theta(a\mid u)}
         {\gamma(x)r_\theta(b\mid x)f_\theta(a'\mid u)}
    \label{eq:roundtrip-direct-ratio}\\
    &=1\wedge\exp[-W_\theta(\Gamma_y)+W_\theta(\Gamma_x)].
    \label{eq:roundtrip-work-ratio}
\end{align}
The second equality follows because both paths share the same bridge state, so
the two $q_0(u)$ terms in the recorded works cancel.
For a transition generated by the algorithm, the denominator in
Eq.~(\ref{eq:roundtrip-direct-ratio}) is positive almost surely. A zero
numerator is interpreted as zero acceptance. The finite work-difference form
in Eq.~(\ref{eq:roundtrip-work-ratio}) additionally requires the augmented
proposal law and its path-swapped image to be mutually absolutely continuous;
equivalently, the corresponding forward and reverse auxiliary densities must
have common support on paired paths. It also requires $q_0(u)$ to be finite and
positive on the induced bridge support.

\begin{theorem}[Round-trip NHMC target reversibility]
\label{thm:roundtrip-reversibility}
Fix $\theta$. Suppose $f_\theta$ and $r_\theta$ are normalized conditional
densities and $\Psi_\theta$ is a bimeasurable, measure-preserving bijection.
Then the round-trip transition using Eq.~(\ref{eq:roundtrip-direct-ratio}) is
reversible with respect to $\pi$ and hence preserves the Boltzmann target.
Under the mutual absolute continuity condition stated above,
Eq.~(\ref{eq:roundtrip-work-ratio}) is an equivalent finite-valued expression
for the same acceptance probability.
\end{theorem}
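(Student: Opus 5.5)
The plan is to lift the transition to an augmented state space, show that the Metropolis rule there is a deterministic involution with a Hastings correction, and then marginalize back to $x$. Start at stationarity with $x\sim\pi$, draw $b\sim r_\theta(\cdot\mid x)$, set $(u,a)=\Psi_\theta^{-1}(x,b)$, and draw $a'\sim f_\theta(\cdot\mid u)$. The resulting joint law of $(u,a,a')$, written as a density with respect to the product reference measure, is
\begin{equation*}
  \rho(u,a,a')=\pi(x)\,r_\theta(b\mid x)\,f_\theta(a'\mid u),
  \qquad (x,b)=\Psi_\theta(u,a).
\end{equation*}
This step uses that $\Psi_\theta$ is bimeasurable and measure preserving, so the change of variables from $(x,b)$ to $(u,a)$ has unit Jacobian. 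The proposal is the swap $\sigma:(u,a,a')\mapsto(u,a',a)$. This map is an involution and preserves the product measure. Its image state has $x$-coordinate $y$ and reverse record $b'$, where $(y,b')=\Psi_\theta(u,a')$.

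Next I would verify detailed balance on the augmented space. For a deterministic measure-preserving involution proposal, the standard argument (Tierney's involutive MH) shows that accepting with $1\wedge\rho(\sigma z)/\rho(z)$ leaves $\rho$ invariant and makes the augmented chain reversible. The reason is that
\begin{equation*}
  \rho(z)\bigl(1\wedge\rho(\sigma z)/\rho(z)\bigr)=\min\{\rho(z),\rho(\sigma z)\},
\end{equation*}
and the right-hand side is symmetric under $z\leftrightarrow\sigma z$. Computing the ratio gives
\begin{equation*}
  \frac{\rho(\sigma z)}{\rho(z)}
  =\frac{\pi(y)\,r_\theta(b'\mid y)\,f_\theta(a\mid u)}{\pi(x)\,r_\theta(b\mid x)\,f_\theta(a'\mid u)}.
\end{equation*}
The normalizing constant $Z$ cancels, leaving exactly Eq.~(\ref{eq:roundtrip-direct-ratio}). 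For zero-measure or zero-density cases I would use the convention stated in the paper: the denominator is positive almost surely along the chain, and a zero numerator means rejection. With these conventions the symmetric-minimum argument still holds pointwise.

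To descend to configurations, I would integrate test functions $h(x)k(x_{\rm next})$ against the augmented stationary reversible chain. The $x$-marginal of $\rho$ is $\pi$, because $r_\theta$ and $f_\theta$ are normalized. The next configuration is a function of the augmented state after the move, namely $y$ on acceptance and $x$ on rejection. The symmetry of the augmented pair law under exchanging current and next states therefore gives symmetry of the induced pair law of $(x,x_{\rm next})$. That symmetry is reversibility of the configuration kernel with respect to $\pi$, and hence invariance of $\pi$.

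The main obstacle is the final equivalence with the work form, Eq.~(\ref{eq:roundtrip-work-ratio}). I would apply the recorded-work definition Eq.~(\ref{eq:recorded-work}) to each path. This gives $W_\theta(\Gamma_x)=\log q_0(u)-\log\gamma(x)+\log f_\theta(a\mid u)-\log r_\theta(b\mid x)$, and the analogous expression for $\Gamma_y$. Here I identify the product of stage factors with $f_\theta/r_\theta$, which is exactly how these record densities are constructed. Under mutual absolute continuity and $0<q_0(u)<\infty$, all terms are finite, and $\exp[W_\theta(\Gamma_x)-W_\theta(\Gamma_y)]$ reproduces the direct ratio because the $q_0(u)$ terms cancel. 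The care needed is in checking that the finiteness assumptions hold on a set of full measure under $\rho$, so that the two expressions agree almost surely.
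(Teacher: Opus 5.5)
Your proposal is correct and is essentially the paper's proof. You work in the coordinates $(u,a,a')$, where the proposal is the plain swap, whereas the paper works in $(x,b,a')$ with the conjugated involution $T_\theta=(\Psi_\theta\times I)\circ\operatorname{Swap}\circ(\Psi_\theta^{-1}\times I)$; the two are related by the measure-preserving change of variables $\Psi_\theta\times I$, and the symmetric $\min$-flux argument, the projection to configurations, and the cancellation of $\log q_0(u)$ in the work form all match the paper's argument.
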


\begin{proof}
For $z=(x,b,a')$, let $(u,a)=\Psi_\theta^{-1}(x,b)$ and define
$\xi_\theta(z)=\pi(x)r_\theta(b\mid x)f_\theta(a'\mid u)$.  Its
$x$-marginal is $\pi$.  The path-swap map
\begin{equation}
    T_\theta
    = (\Psi_\theta\times I)\circ\operatorname{Swap}
      \circ(\Psi_\theta^{-1}\times I)
\end{equation}
is a measure-preserving involution and maps $(x,b,a')$ to $(y,b',a)$.
Consequently the first ratio in Eq.~(\ref{eq:roundtrip-direct-ratio}) is
$\xi_\theta(T_\theta z)/\xi_\theta(z)$. The accepted transition measure is
$\min\{\xi_\theta(z),\xi_\theta(T_\theta z)\}$, which is invariant under
$z\leftrightarrow T_\theta z$; rejected moves are self-loops.  Projecting this
detailed-balance identity onto the configuration coordinate proves the claim.
Appendix~\ref{app:roundtrip-proof} gives the coordinate-level argument.
\end{proof}

The acceptance formula resembles path-IMH, but the kernels are different:
path-IMH independently proposes a complete path and preserves $P_\theta$ on
path space, whereas round-trip NHMC-MH uses an actual reverse pullback and a
shared bridge to preserve $\pi$ directly on configuration space. The path-swap
proof is an instance of the general involutive-MCMC construction
\citep{neklyudov2020involutive}. The NHMC-specific element is the shared-bridge
involution, for which the acceptance ratio reduces to a difference of recorded
works. It is also related to augmented nonequilibrium-path Metropolis correction
\citep{nilmeier2011ncmc,chen2026madpath}.
Convergence from an arbitrary initialization additionally requires irreducibility
and aperiodicity of the projected configuration-space kernel.

\subsection{Why Minimizing Work Helps}
The same identity explains the training objective. Writing $\Delta F=-\log Z$,
\begin{equation}
    \KL(Q_\theta\|P_\theta)
    =
    \E_{Q_\theta}[W_\theta]-\Delta F.
\end{equation}
The recorded work differs from dissipated work by the free-energy term: the
dissipated work is $W_\theta-\Delta F$. Since $\Delta F$ is independent of
$\theta$, minimizing average work minimizes the Kullback--Leibler divergence
between the forward and reference reverse path laws. If
$q_{\theta,L}$ is the endpoint marginal of $Q_\theta$, data processing gives
$\KL(q_{\theta,L}\|\pi)\le\KL(Q_\theta\|P_\theta)$. This explains the learning
principle by upper-bounding the endpoint divergence
$\KL(q_{\theta,L}\|\pi)$, while ESS, work
dispersion, acceptance, and autocorrelation still measure finite-sample
efficiency. Related path-space objectives are used in stochastic normalizing
flows, controlled Monte Carlo diffusions, and SPS
\citep{wu2020snf,vargas2024cmcd,chen2026sps}.

The results above apply only when the implemented path satisfies the stated
support, invertibility, and Jacobian assumptions. The molecular feasibility
study uses boundary projections and is therefore reported separately rather
than as evidence for Theorem~\ref{thm:path-probability-identity}. Once
$\theta^\star$ is fixed, path-SNIS and path-IMH apply to
$Q_{\theta^\star}$ and $P_{\theta^\star}$. Appendix~\ref{app:proofs} gives the
detailed balance and consistency arguments.

\section{Numerical Results and Diagnostics}
\label{sec:experiments}

The numerical evaluations separate three roles of the recorded work. Double-well targets
provide analytic normalizers, target samples, and known sign modes, enabling
direct checks of normalizer estimation, path-SNIS weighting, and path-IMH
correction. The finite-volume lattice $\phi^4$ benchmark reports corrected
physical observables and compares independent path-IMH with the shared-bridge
round-trip kernel near a susceptibility peak.
The LJ13 cluster benchmark uses the standard full-depth target and reference
set to compare raw and path-SNIS-corrected energy and geometry distributions.
The molecular study separately evaluates prior-action scores for a
learned-force internal-coordinate proposal initialized from a
molecular-dynamics prior.
Appendix~\ref{app:additional-results} records a compact $U(1)$ round-trip
pilot and a two-dimensional $\mathrm{SU}(2)$ learned-force check whose
plaquette is compared with the Bessel closed form.

\subsection{Work, Weighting, and Path-IMH Correction}
\label{sec:dw-results}
The double-well family used here provides analytic normalizing constants,
exact target samples, and known sign-mode structure. This setting isolates the
three evaluation-time uses of path work: Jarzynski log-normalizer estimation,
path-SNIS weighting, and path-IMH acceptance. For
\begin{equation}
    \log \gamma_d(x)
    =
    \sum_{j=0}^{d/2-1}
    \left[
    -x_{2j}^4+6x_{2j}^2+0.5x_{2j}
    -\frac{1}{2}x_{2j+1}^2
    \right],
\end{equation}
the continuous neural NHMC sampler starts from a Gaussian base, uses learned
momentum laws with target-force leapfrog updates, and uses SNIS or path-IMH to
correct the generated paths and endpoints.

\begin{table}[H]
\centering
\caption{Path correction on analytically tractable many-well targets. DW4 is
the main learned path setting; DW8 gives a higher-dimensional scaling result.
These selected main evaluations are distinct from the three-checkpoint
ensembles used for the round-trip comparison below. Normalizer errors are
single-evaluation point estimates and do not include across-training uncertainty.}
\label{tab:dw-main}
\small
\resizebox{\linewidth}{!}{%
\begin{tabular}{@{}lrrrrl@{}}
\toprule
Target & $N$ & $|\log\widehat Z-\log Z|$ & path ESS & path-IMH acc. & mode statistic \\
\midrule
DW4 & 200k & $6.91\times10^{-3}$ & $18.30\%$ & $30.15\%$ & IMH $L_1=6.46{\times}10^{-4}$ \\
DW8 & 120k & $2.45\times10^{-4}$ & $2.65\%$ & $12.70\%$ & $16/16$ modes; $L_1=3.02{\times}10^{-2}$ \\
\bottomrule
\end{tabular}
}
\end{table}

On DW4, the complete NHMC proposal, including its recorded stochastic sign-flip
move, reaches all four sign modes. The path work
yields $|\log\widehat Z-\log Z|=6.91\times10^{-3}$, path-SNIS ESS
$18.30\%$, and path-IMH acceptance $30.15\%$. Path-IMH shifts the empirical mode
masses toward the asymmetric exact target induced by the linear tilt, with
mode-mass $L_1$ error $6.46\times10^{-4}$. DW8 retains coverage of all sixteen
sign modes and a small observed normalizer error in this evaluation, with path-SNIS ESS $2.65\%$,
path-IMH acceptance $12.70\%$, and mode-mass $L_1$ error $3.02\times10^{-2}$.
The round-trip comparison uses three separately trained checkpoints per target;
for DW4 these are 32-stage, five-leapfrog-per-stage proposals rather than the
selected main evaluation in Table~\ref{tab:dw-main}. On DW4, the
configuration-space round-trip kernel raises the mean acceptance
from $32.78\%$ to $35.37\%$ and reduces $\tau_{\rm int}(x_0)$ per transition
from $2.19$ to $1.74$ across three trained checkpoints. Each chain is initialized
by a Gaussian base draw followed by one forward NHMC path. Because each round-trip
transition evaluates two paths, its force-normalized effective-sample rate is
$0.63$ times that of independent path-IMH; Appendix~\ref{app:additional-results}
reports both views of the comparison. On DW8, round-trip NHMC-MH raises mean
acceptance from $12.62\%$ to $15.87\%$ and reduces $\tau_{\rm int}(x_0)$ per
transition from $9.08$ to $4.46$. After accounting for its two paths per
transition, its force-normalized effective-sample rate is $1.01$ times that of
independent path-IMH, and both kernels cover all sixteen modes for every checkpoint.

\begin{figure}[!htbp]
\centering
\includegraphics[width=0.82\linewidth]{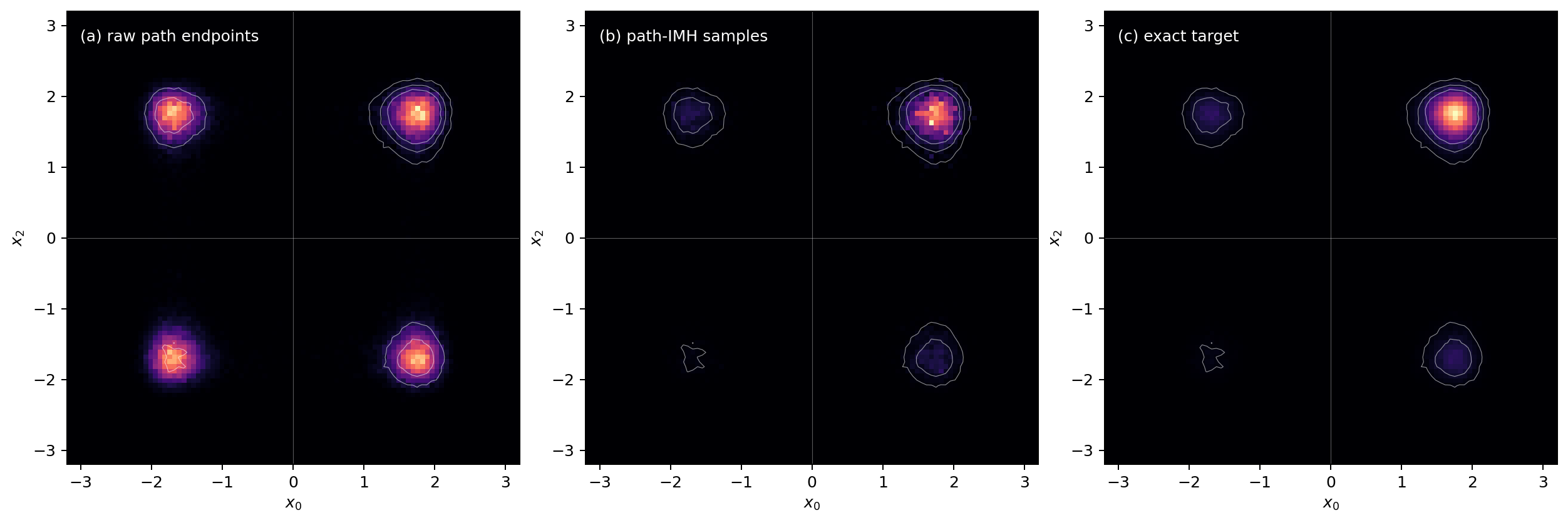}
\caption{Neural NHMC on the DW4 target. The complete learned path proposal,
including the recorded stochastic sign-flip move, covers all four sign modes,
while path-IMH correction shifts the
empirical mode masses to the asymmetric target induced by the linear tilt. White
contours give the exact target reference.}
\label{fig:dw4-neural-nhmc}
\end{figure}

\subsection{\texorpdfstring{Finite-Volume Lattice $\phi^4$}{Finite-Volume Lattice phi4}}
\label{sec:phi4-results-main}
The two-dimensional $8\times8$ lattice $\phi^4$ benchmark reports corrected
physical observables. At zero external field the finite-volume target has the
global $Z_2$ symmetry $\phi_x\mapsto-\phi_x$. Hence
$M=V^{-1}\sum_x\phi_x$ has target mean zero when both sign sectors are
represented. We use
$\chi=V(\langle M^2\rangle-\langle|M|\rangle^2)$ and
$U_B=1-\langle M^4\rangle/(3\langle M^2\rangle^2)$ together with $|M|$ to
avoid sign cancellation. The $Z_2$ sign is kept as a
sector-balance check; the observable estimates come from SNIS or IMH correction
after training.

\begin{figure}[H]
\centering
\includegraphics[width=0.98\linewidth]{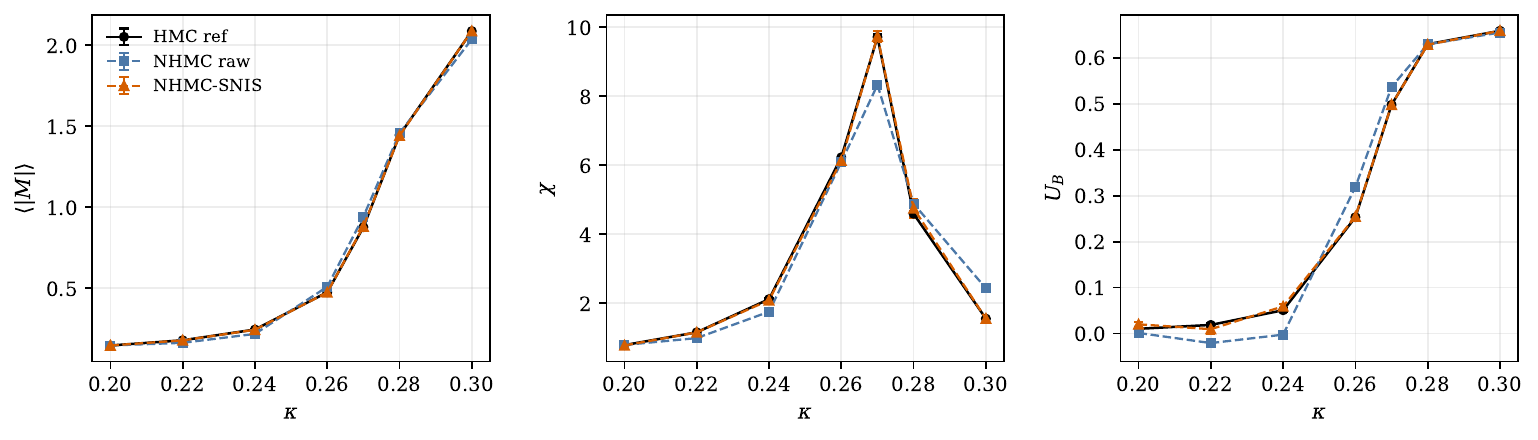}
\caption{Two-dimensional lattice $\phi^4$ benchmark on the $8\times8$ $\kappa$
scan using NHMC proposal evaluations with SNIS correction. Curves
compare raw proposal observables, SNIS-corrected observables, and the
high-statistics HMC reference for $\langle |M|\rangle$, susceptibility, and the
Binder cumulant. The SNIS ESS fraction ranges from $0.054$ to $0.599$ across the scan.}
\label{fig:phi4-results}
\end{figure}

Across the $\kappa$ scan, path-SNIS estimates of $\langle |M|\rangle$ track the
HMC reference, while the ESS fraction ranges from $0.599$ at $\kappa=0.20$
to $0.065$ at $\kappa=0.30$, with its smallest value $0.054$ near the
finite-volume susceptibility peak. The corrected observable stays close to HMC
even where the raw proposal deviates; the falling ESS indicates increasing
proposal mismatch. The susceptibility and Binder-cumulant panels use the same
SNIS weights and the same HMC reference ensemble. In a separate single-chain
autocorrelation check at $\kappa=0.2705$, HMC, independent path-IMH, and
round-trip NHMC-MH are compared with 4800 post-burn values of $|M|$. HMC and
path-IMH discard the first 200 of 5000 saved states. The round-trip chain starts
from a Gaussian base draw followed by one forward NHMC path, runs for 10000
transitions, discards 1000, and contributes the first 4800 remaining states to
the matched-length comparison. The integrated autocorrelation times per saved
transition are $19.95$, $8.94$, and $2.57$, respectively. A round-trip
transition evaluates two NHMC paths, giving a path-cost-normalized value of
$5.15$; this remains below the independent path-IMH value in this run. The
independent path-IMH and round-trip acceptance rates are $36.67\%$ and
$38.92\%$. For HMC, path-IMH, and round-trip NHMC-MH, respectively, the matched
traces give $p_+=0.507,0.502,0.465$ and 80, 839, and 782 signed-sector changes.
This is a representative
single-chain comparison rather than a multi-chain performance estimate.
Appendix Fig.~\ref{fig:phi4-three-corrections} extends the full $\kappa$ scan
to path-SNIS, path-IMH, and round-trip correction with their respective
finite-budget uncertainty estimates.

\subsection{Compact Non-Abelian Gauge Targets}
\label{sec:gauge-results-main}
For a two-dimensional $\mathrm{SU}(N_c)$ Wilson target, the link variables
$U_\ell\in\mathrm{SU}(N_c)$ are distributed according to
\begin{equation}
    \pi(U)=\frac{1}{Z}\exp[-S(U)]\prod_\ell d_HU_\ell,
    \qquad
    S(U)=\beta V\bigl[1-P(U)\bigr],
    \label{eq:gauge-target-main}
\end{equation}
where $d_HU_\ell$ is Haar measure, $V=L^2$, and
\begin{equation}
    P(U)=\frac{1}{V}\sum_p\frac{1}{N_c}
    \operatorname{Re}\operatorname{Tr}U_p
    \label{eq:gauge-plaquette-main}
\end{equation}
is the mean plaquette; $U_p$ is the ordered product of links around plaquette
$p$. The base distribution is the product Haar measure, so its log density is
constant. The recorded work consequently reduces to the endpoint action plus
the accumulated forward--reverse momentum log-density ratios,
\begin{equation}
    W=S(U_L)+\sum_t\left(\log q_t^F-\log q_t^R\right).
    \label{eq:gauge-work-main}
\end{equation}

\begin{table}[H]
\centering
\caption{Representative two-dimensional non-Abelian gauge checks. All rows use
the same Haar-base protocol: raw and path-SNIS estimates are computed from
$8192$ independent paths, and round-trip NHMC-MH uses eight chains with $1500$
production transitions after $200$ discarded. Path-SNIS uncertainties use a
jackknife over independent paths; round-trip uncertainties are standard errors
across chain means. The reference column gives the analytic single-plaquette
value used for these finite-volume checks.}
\label{tab:gauge-main-summary}
\small
\resizebox{\linewidth}{!}{%
\begin{tabular}{@{}llrrrr@{}}
\toprule
Group & $L$ & Plaquette reference & Raw & Path-SNIS (ESS$/N$) & Round-trip (acc.) \\
\midrule
$\mathrm{SU}(2)$, $\beta=2$ & $4$ & $0.43313$ & $0.39848\pm0.00117$ &
$0.42975\pm0.00183$ ($27.6\%$) & $0.43175\pm0.00297$ ($42.1\%$) \\
$\mathrm{SU}(2)$, $\beta=2$ & $6$ & $0.43313$ & $0.39075\pm0.00084$ &
$0.43134\pm0.00274$ ($5.36\%$) & $0.43658\pm0.00149$ ($22.2\%$) \\
$\mathrm{SU}(3)$, $\beta=5$ & $4$ & $0.35395$ & $0.32557\pm0.00080$ &
$0.35277\pm0.00350$ ($5.23\%$) & $0.35721\pm0.00169$ ($25.0\%$) \\
$\mathrm{SU}(3)$, $\beta=5$ & $6$ & $0.35395$ & $0.30565\pm0.00054$ &
$0.34694\pm0.00455$ ($0.41\%$) & $0.35284\pm0.00462$ ($4.03\%$) \\
\bottomrule
\end{tabular}}
\end{table}

Table~\ref{tab:gauge-main-summary} uses one evaluation protocol for both
groups and volumes. All learned estimators begin from Haar; each round-trip
chain is initialized by one forward NHMC path. The raw proposals underestimate the plaquette, while
path-SNIS and round-trip NHMC-MH move the estimates toward the analytic
references. Increasing $L$ from $4$ to $6$ lowers both path-SNIS ESS and
round-trip acceptance, most sharply for $\mathrm{SU}(3)$ at $\beta=5$.
These rows establish the compact-group implementation and expose its overlap
limit; they do not constitute a speed comparison with local gauge updates.
Appendix~\ref{app:haar-su2-su3} gives the complete $U(1)$,
$\mathrm{SU}(2)$, and $\mathrm{SU}(3)$ volume scans, including independent
path-IMH, HMC, heat bath, integrated autocorrelation times, and convergence
flags. A separate compact-$U(1)$ round-trip pilot in the same appendix checks
the inverse map and path ratio at $\beta=4,8$.

\subsection{Lennard--Jones Cluster}
\label{sec:lj13-results-main}
We evaluate a force-equivariant NHMC proposal on the standard 13-particle
Lennard--Jones (LJ13) target used in recent neural Boltzmann-sampling studies.
The comparison uses the full-depth target, the 10000-configuration reference
set of \citet{klein2023equivariant}, and the same energy and geometric
$2$-Wasserstein metrics used by the cited baselines. The NHMC evaluation
contains 50000 forward paths. Each path has 56 stages with 10 leapfrog updates
per stage and uses 728 counted target-force evaluations, for
$3.64\times10^7$ evaluations in total.

Path-SNIS correction reduces the energy-$W_2$ distance from $7.654$ to $1.258$
and the pooled pair-distance $W_2$ from $0.082$ to $0.018$. The geometric
$W_2$, computed after particle assignment and rigid alignment, changes from
$1.809\pm0.011$ to $1.779\pm0.015$; the reference-split floor under the same
finite-sample estimator is $1.666$. The path-SNIS ESS fraction is $0.63\%$.
Thus the recorded weights strongly improve the energy distribution, while the
geometric discrepancy and low ESS show that the proposal still has limited
overlap.

\begin{figure}[H]
\centering
\includegraphics[width=0.94\linewidth]{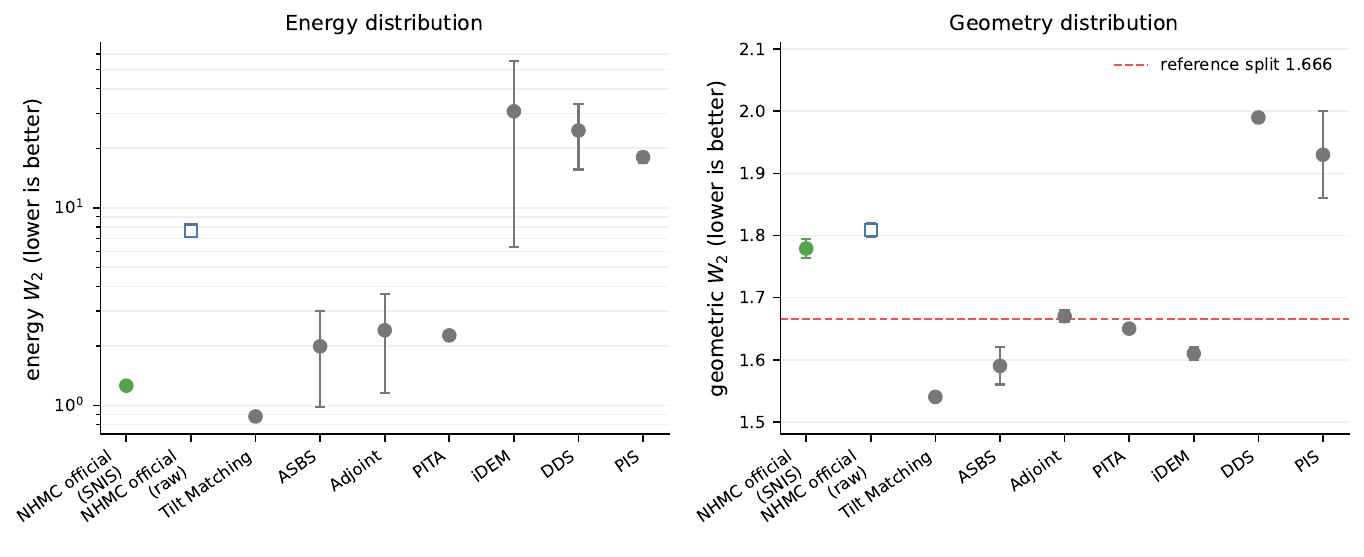}
\caption{LJ13 comparison on the standard full-depth potential. NHMC raw and
path-SNIS values use the same 50000-path pool and the official
10000-configuration reference set. Published baseline values are reproduced
from the aligned comparison in Table~2 of \citet{potaptchik2025tilt}; the
underlying methods are described in
\citet{zhang2022pis,vargas2023dds,akhoundsadegh2024idem,
havens2025adjoint,liu2025asbs,akhoundsadegh2025pita}. Error bars are shown when
reported by the source. The dashed geometric line is a reference-versus-reference
finite-sample floor, not a population lower bound.}
\label{fig:lj13-official-w2}
\end{figure}

\subsection{Molecular Boltzmann Targets}
\label{sec:molecular-results-main}
The molecular feasibility study uses alanine peptide targets whose OpenMM
systems are provided by bgmol MiniPeptide systems. States are represented in
internal coordinates and initialized from stored $300\,\mathrm{K}$
molecular-dynamics (MD) prior ensembles. The proposal uses learned
position-dependent kick fields inside the leapfrog steps. OpenMM supplies endpoint energies and
first-order forces for the internal-coordinate energy and its training
surrogate, without differentiating through the OpenMM force evaluations; the
internal-to-Cartesian Jacobian is included in that energy.
Evaluation uses the prior-action log score
$-E_{\rm IC}(z_L)+E_{\rm IC}(z_0)-\log P_{\rm path}$. The Ala4 and Ala6 rows in
Table~\ref{tab:protein-nhmc-results} have finite scores for every reported
evaluation sample. Prior-action score ESS is $11.00\%$--$39.53\%$, log-score spread is
$0.966$--$1.434$, and weighted energy-overlap summaries are
$92.76\%$--$93.30\%$. Because the current implementation projects bond and
angle coordinates back into their numerical domains, these rows are an
empirical feasibility study and are not used as evidence for the invertible-map
claim of Theorem~\ref{thm:path-probability-identity}.

\begin{table}[H]
\centering
\caption{Molecular learned-force prior-action score summaries. All rows use
NHMC proposal evaluations after training, with finite scores for all reported
samples. These empirical scores are not exact Radon--Nikodym correction weights
for the continuous molecular target.}
\label{tab:protein-nhmc-results}
\small
\resizebox{\linewidth}{!}{%
\begin{tabular}{@{}lrrrr@{}}
\toprule
System/run & $N$ & score ESS & log-score std & score-weighted energy overlap \\
\midrule
Ala4-50k & 50000 & $39.53\%$ & $0.966$ & $92.76\%$ \\
Ala6-50k & 50000 & $11.00\%$ & $1.434$ & $93.30\%$ \\
\bottomrule
\end{tabular}}
\end{table}

\section{Discussion}

\paragraph{Numerical results.}
NHMC combines a learned global Boltzmann proposal with recorded-work correction.
The learned Hamiltonian path moves probability mass
between separated modes or regions. Once training is complete, the recorded
work determines the path-SNIS weights or the path-IMH acceptance rule, and
therefore the target law represented by the final estimator or chain. DW4 gives
the cleanest validation: all sign modes are reached, the log-normalizer error is
small, and the ESS and path-IMH acceptance are in a usable range. The $\phi^4$
results extend the same exact path correction to finite-volume physical
observables. In the representative single-chain check at $\kappa=0.2705$, the
shared-bridge round-trip kernel gives the smallest autocorrelation per saved
transition; this is not a multi-chain performance estimate, and its two paths
per transition must be included in cost comparisons. On LJ13, path-SNIS
substantially improves the energy and pair-distance distributions, while the
low ESS and more modest geometric improvement identify the remaining overlap
gap. The molecular study
instead evaluates a learned-force,
MD-initialized implementation empirically; its current boundary projections
place it outside the invertible-map theorem.
The compact non-Abelian gauge checks show that the same path correction extends
to group-valued links with Haar base measure. The corrected plaquettes approach
the analytic references, while the loss of ESS and acceptance with volume
records the remaining overlap problem.

\paragraph{Limitations and outlook.}
DW8 and LJ13 exhibit concentrated path weights, while Ala6 has a more
concentrated prior-action score distribution. Improving the reverse momentum
distributions, incorporating known symmetries, and reducing work variance while
preserving support coverage may improve overlap. Increasing the number of path stages while reducing each stage size
is another useful discretization study; increasing the stage size itself can
instead broaden the work distribution and must be assessed at matched force
cost. The low-acceptance compact-$U(1)$ round-trip pilot shows that formal
target invariance alone does not overcome poor path overlap.
Appendix~\ref{app:haar-su2-su3} reports the full Haar-base scans, including
path-IMH, round-trip, HMC, and heat bath. The ESS still falls with volume, and
neither learned chain mixes faster than the $n_{\mathrm{lf}}=10$ HMC reference,
so we do not treat mixing as a result against local algorithms. The recorded
work permits a direct comparison between the two corrected kernels built from
one proposal: round-trip decorrelates the plaquette faster than path-IMH
at every point of that scan, and by more than its factor-of-two path cost
everywhere except the smallest lattice. The same appendix records
the gauge-equivariant CNN and the observed per-epoch training cost; a
matched non-equivariant baseline is not included. The current
$\phi^4$ study remains a finite-volume test of corrected
observables; larger-scale field-theory sampling requires separate study.

\section*{Acknowledgments}
The author thanks Gert Aarts, Elia Cellini, Shiyang Chen, Chuan Liu, Miaoxin
Liu, Biagio Lucini, Alessandro Nada, Jingkang Ouyang, Lukas Stelzl, Hartmut
Wittig, Jianhui Zhang, Rui Zhang, and Kai Zhou for useful discussions.

\bibliography{references}

@article{duane1987hybrid,
  title={{Hybrid Monte Carlo}},
  author={Duane, Simon and Kennedy, A. D. and Pendleton, Brian J. and Roweth, Duncan},
  journal={Physics Letters B},
  volume={195},
  number={2},
  pages={216--222},
  year={1987},
  doi={10.1016/0370-2693(87)91197-X},
  url={https://doi.org/10.1016/0370-2693(87)91197-X}
}

@inproceedings{levy2018l2hmc,
  title={Generalizing {Hamiltonian Monte Carlo} with Neural Networks},
  author={Levy, Daniel and Hoffman, Matthew D. and Sohl-Dickstein, Jascha},
  booktitle={International Conference on Learning Representations},
  year={2018},
  doi={10.48550/arXiv.1711.09268},
  url={https://arxiv.org/abs/1711.09268}
}

@inproceedings{foreman2021dlhmc,
  title={Deep Learning {Hamiltonian Monte Carlo}},
  author={Foreman, Sam and Jin, Xiao-Yong and Osborn, James C.},
  booktitle={{ICLR} 2021 Workshop on Deep Learning for Simulation ({SimDL})},
  year={2021},
  doi={10.48550/arXiv.2105.03418},
  url={https://arxiv.org/abs/2105.03418}
}

@inproceedings{song2017anicemc,
  title={{A-NICE-MC}: Adversarial Training for {MCMC}},
  author={Song, Jiaming and Zhao, Shengjia and Ermon, Stefano},
  booktitle={Advances in Neural Information Processing Systems},
  year={2017},
  doi={10.48550/arXiv.1706.07561},
  url={https://arxiv.org/abs/1706.07561}
}

@article{noe2019boltzmann,
  title={{Boltzmann Generators}: Sampling Equilibrium States of Many-Body Systems with Deep Learning},
  author={No{\'e}, Frank and Olsson, Simon and K{\"o}hler, Jonas and Wu, Hao},
  journal={Science},
  volume={365},
  number={6457},
  pages={eaaw1147},
  year={2019},
  doi={10.1126/science.aaw1147},
  url={https://doi.org/10.1126/science.aaw1147}
}

@article{albergo2019flow,
  title={Flow-Based Generative Models for {Markov Chain Monte Carlo} in {Lattice Field Theory}},
  author={Albergo, M. S. and Kanwar, G. and Shanahan, P. E.},
  journal={Physical Review D},
  volume={100},
  number={3},
  pages={034515},
  year={2019},
  doi={10.1103/PhysRevD.100.034515},
  url={https://doi.org/10.1103/PhysRevD.100.034515}
}

@article{deldebbio2021trivializing,
  title={Efficient Modeling of Trivializing Maps for Lattice {$\phi^4$} Theory Using Normalizing Flows: A First Look at Scalability},
  author={Del Debbio, Luigi and Marsh Rossney, Joe and Wilson, Michael},
  journal={Physical Review D},
  volume={104},
  number={9},
  pages={094507},
  year={2021},
  doi={10.1103/PhysRevD.104.094507},
  url={https://doi.org/10.1103/PhysRevD.104.094507}
}

@article{abbott2023latticegauge,
  title={Normalizing Flows for {Lattice Gauge Theory} in Arbitrary Space-Time Dimension},
  author={Abbott, Ryan and Albergo, Michael S. and Botev, Aleksandar and Boyda, Denis and Cranmer, Kyle and Hackett, Daniel C. and Kanwar, Gurtej and Matthews, Alexander G. D. G. and Racani{\`e}re, S{\'e}bastien and Razavi, Ali and Rezende, Danilo J. and Romero-L{\'o}pez, Fernando and Shanahan, Phiala E. and Urban, Julian M.},
  journal={arXiv preprint arXiv:2305.02402},
  year={2023},
  doi={10.48550/arXiv.2305.02402},
  url={https://arxiv.org/abs/2305.02402}
}

@inproceedings{midgley2023fab,
  title={Flow Annealed Importance Sampling Bootstrap},
  author={Midgley, Laurence Illing and Stimper, Vincent and Simm, Gregor N. C. and Sch{\"o}lkopf, Bernhard and Hern{\'a}ndez-Lobato, Jos{\'e} Miguel},
  booktitle={International Conference on Learning Representations},
  year={2023},
  doi={10.48550/arXiv.2208.01893},
  url={https://openreview.net/forum?id=XCTVFJwS9LJ}
}

@inproceedings{arbel2021aft,
  title={Annealed Flow Transport {Monte Carlo}},
  author={Arbel, Michael and Matthews, Alexander G. D. G. and Doucet, Arnaud},
  booktitle={Proceedings of the 38th International Conference on Machine Learning},
  series={Proceedings of Machine Learning Research},
  volume={139},
  pages={318--330},
  publisher={PMLR},
  year={2021},
  doi={10.48550/arXiv.2102.07501},
  url={https://proceedings.mlr.press/v139/arbel21a.html}
}

@inproceedings{albergo2024nets,
  title={{NETS}: A Non-Equilibrium Transport Sampler},
  author={Albergo, Michael S. and Vanden-Eijnden, Eric},
  booktitle={Proceedings of the 42nd International Conference on Machine Learning},
  series={Proceedings of Machine Learning Research},
  volume={267},
  pages={1026--1055},
  publisher={PMLR},
  year={2025},
  doi={10.48550/arXiv.2410.02711},
  url={https://proceedings.mlr.press/v267/albergo25a.html}
}

@inproceedings{wu2020snf,
  title={Stochastic Normalizing Flows},
  author={Wu, Hao and K{\"o}hler, Jonas and No{\'e}, Frank},
  booktitle={Advances in Neural Information Processing Systems},
  volume={33},
  year={2020},
  doi={10.48550/arXiv.2002.06707},
  url={https://proceedings.neurips.cc/paper/2020/hash/41d80bfc327ef980528426fc810a6d7a-Abstract.html}
}

@article{nilmeier2011ncmc,
  title={{Nonequilibrium Candidate Monte Carlo} Is an Efficient Tool for Equilibrium Simulation},
  author={Nilmeier, Jerome P. and Crooks, Gavin E. and Minh, David D. L. and Chodera, John D.},
  journal={Proceedings of the National Academy of Sciences},
  volume={108},
  number={45},
  pages={E1009--E1018},
  year={2011},
  doi={10.1073/pnas.1106094108},
  url={https://doi.org/10.1073/pnas.1106094108}
}

@inproceedings{matthews2022craft,
  title={Continual Repeated Annealed Flow Transport {Monte Carlo}},
  author={Matthews, Alexander G. D. G. and Arbel, Michael and Rezende, Danilo J. and Doucet, Arnaud},
  booktitle={Proceedings of the 39th International Conference on Machine Learning},
  series={Proceedings of Machine Learning Research},
  volume={162},
  pages={15196--15219},
  publisher={PMLR},
  year={2022},
  doi={10.48550/arXiv.2201.13117},
  url={https://proceedings.mlr.press/v162/matthews22a.html}
}

@inproceedings{klein2023equivariant,
  title={Equivariant Flow Matching},
  author={Klein, Leon and Kr{\"a}mer, Andreas and No{\'e}, Frank},
  booktitle={Advances in Neural Information Processing Systems},
  volume={36},
  year={2023},
  doi={10.48550/arXiv.2306.15030},
  url={https://proceedings.neurips.cc/paper_files/paper/2023/hash/bc827452450356f9f558f4e4568d553b-Abstract-Conference.html}
}

@inproceedings{tan2025sbg,
  title={Scalable Equilibrium Sampling with Sequential {Boltzmann Generators}},
  author={Tan, Charlie B. and Bose, Avishek Joey and Lin, Chen and Klein, Leon and Bronstein, Michael M. and Tong, Alexander},
  booktitle={Proceedings of the 42nd International Conference on Machine Learning},
  series={Proceedings of Machine Learning Research},
  volume={267},
  pages={58467--58498},
  publisher={PMLR},
  year={2025},
  doi={10.48550/arXiv.2502.18462},
  url={https://proceedings.mlr.press/v267/tan25a.html}
}

@article{dern2026ewfm,
  title={Energy-Weighted Flow Matching: Unlocking Continuous Normalizing Flows for Efficient and Scalable Boltzmann Sampling},
  author={Dern, Niclas and Redl, Lennart and Pfister, Sebastian and Kollovieh, Marcel and L{\"u}dke, David and G{\"u}nnemann, Stephan},
  journal={arXiv preprint arXiv:2509.03726},
  year={2025},
  doi={10.48550/arXiv.2509.03726},
  url={https://arxiv.org/abs/2509.03726}
}

@inproceedings{vargas2023dds,
  title={Denoising Diffusion Samplers},
  author={Vargas, Francisco and Grathwohl, Will and Doucet, Arnaud},
  booktitle={International Conference on Learning Representations},
  year={2023},
  doi={10.48550/arXiv.2302.13834},
  url={https://arxiv.org/abs/2302.13834}
}

@inproceedings{akhoundsadegh2024idem,
  title={Iterated Denoising Energy Matching for Sampling from {Boltzmann} Densities},
  author={Akhound-Sadegh, Tara and Rector-Brooks, Jarrid and Bose, Avishek Joey and Mittal, Sarthak and Lemos, Pablo and Liu, Cheng-Hao and Sendera, Marcin and Ravanbakhsh, Siamak and Gidel, Gauthier and Bengio, Yoshua and Malkin, Nikolay and Tong, Alexander},
  booktitle={Proceedings of the 41st International Conference on Machine Learning},
  series={Proceedings of Machine Learning Research},
  volume={235},
  pages={760--786},
  publisher={PMLR},
  year={2024},
  doi={10.48550/arXiv.2402.06121},
  url={https://proceedings.mlr.press/v235/akhound-sadegh24a.html}
}

@article{ouyang2026bnem,
  title={{BNEM}: A {Boltzmann} Sampler Based on Bootstrapped Noised Energy Matching},
  author={OuYang, RuiKang and Qiang, Bo and Hern{\'a}ndez-Lobato, Jos{\'e} Miguel},
  journal={Transactions on Machine Learning Research},
  year={2026},
  doi={10.48550/arXiv.2409.09787},
  url={https://openreview.net/forum?id=ZZktU0U6Pu}
}

@inproceedings{he2025reverse,
  title={Training Neural Samplers with Reverse Diffusive {KL} Divergence},
  author={He, Jiajun and Chen, Wenlin and Zhang, Mingtian and Barber, David and Hern{\'a}ndez-Lobato, Jos{\'e} Miguel},
  booktitle={Proceedings of the 28th International Conference on Artificial Intelligence and Statistics},
  series={Proceedings of Machine Learning Research},
  volume={258},
  pages={5167--5175},
  publisher={PMLR},
  year={2025},
  doi={10.48550/arXiv.2410.12456},
  url={https://proceedings.mlr.press/v258/he25a.html}
}

@inproceedings{havens2025adjoint,
  title={Adjoint Sampling: Highly Scalable Diffusion Samplers via Adjoint Matching},
  author={Havens, Aaron J. and Miller, Benjamin Kurt and Yan, Bing and Domingo-Enrich, Carles and Sriram, Anuroop and Levine, Daniel S. and Wood, Brandon M. and Hu, Bin and Amos, Brandon and Karrer, Brian and Fu, Xiang and Liu, Guan-Horng and Chen, Ricky T. Q.},
  booktitle={Proceedings of the 42nd International Conference on Machine Learning},
  series={Proceedings of Machine Learning Research},
  volume={267},
  pages={22204--22237},
  publisher={PMLR},
  year={2025},
  doi={10.48550/arXiv.2504.11713},
  url={https://proceedings.mlr.press/v267/havens25a.html}
}

@inproceedings{liu2025asbs,
  title={Adjoint {Schr{\"o}dinger Bridge} Sampler},
  author={Liu, Guan-Horng and Choi, Jaemoo and Chen, Yongxin and Miller, Benjamin Kurt and Chen, Ricky T. Q.},
  booktitle={Advances in Neural Information Processing Systems},
  year={2025},
  note={Oral presentation},
  doi={10.48550/arXiv.2506.22565},
  url={https://arxiv.org/abs/2506.22565}
}

@inproceedings{akhoundsadegh2025pita,
  title={Progressive Inference-Time Annealing of Diffusion Models for Sampling from {Boltzmann} Densities},
  author={Akhound-Sadegh, Tara and Lee, Jungyoon and Bose, Avishek Joey and De Bortoli, Valentin and Doucet, Arnaud and Bronstein, Michael M. and Beaini, Dominique and Ravanbakhsh, Siamak and Neklyudov, Kirill and Tong, Alexander},
  booktitle={Advances in Neural Information Processing Systems},
  year={2025},
  note={Spotlight presentation},
  doi={10.48550/arXiv.2506.16471},
  url={https://arxiv.org/abs/2506.16471}
}

@article{potaptchik2025tilt,
  title={Tilt Matching for Scalable Sampling and Fine-Tuning},
  author={Potaptchik, Peter and Lee, Cheuk-Kit and Albergo, Michael S.},
  journal={arXiv preprint arXiv:2512.21829},
  year={2025},
  doi={10.48550/arXiv.2512.21829},
  url={https://arxiv.org/abs/2512.21829}
}

@article{jarzynski1997nonequilibrium,
  title={Nonequilibrium Equality for Free Energy Differences},
  author={Jarzynski, C.},
  journal={Physical Review Letters},
  volume={78},
  number={14},
  pages={2690--2693},
  year={1997},
  doi={10.1103/PhysRevLett.78.2690},
  url={https://doi.org/10.1103/PhysRevLett.78.2690}
}

@article{crooks1999entropy,
  title={Entropy Production Fluctuation Theorem and the Nonequilibrium Work Relation for Free Energy Differences},
  author={Crooks, Gavin E.},
  journal={Physical Review E},
  volume={60},
  number={3},
  pages={2721--2726},
  year={1999},
  doi={10.1103/PhysRevE.60.2721},
  url={https://arxiv.org/abs/cond-mat/9901352}
}

@article{bennett1976efficient,
  title={Efficient Estimation of Free Energy Differences from Monte Carlo Data},
  author={Bennett, Charles H.},
  journal={Journal of Computational Physics},
  volume={22},
  number={2},
  pages={245--268},
  year={1976},
  doi={10.1016/0021-9991(76)90078-4},
  url={https://doi.org/10.1016/0021-9991(76)90078-4}
}

@article{neal2001annealed,
  title={Annealed Importance Sampling},
  author={Neal, Radford M.},
  journal={Statistics and Computing},
  volume={11},
  number={2},
  pages={125--139},
  year={2001},
  doi={10.1023/A:1008923215028},
  url={https://doi.org/10.1023/A:1008923215028}
}

@article{sohldickstein2012hais,
  title={Hamiltonian Annealed Importance Sampling for Partition Function Estimation},
  author={Sohl-Dickstein, Jascha and Culpepper, Benjamin J.},
  journal={arXiv preprint arXiv:1205.1925},
  year={2012},
  doi={10.48550/arXiv.1205.1925},
  url={https://arxiv.org/abs/1205.1925}
}

@article{chen2026sps,
  title={Stochastic Path Sampler for {Lattice Field Theory}},
  author={Chen, Shiyang and Qian, Moxian and Aarts, Gert and Lucini, Biagio and Zhou, Kai},
  journal={arXiv preprint arXiv:2606.13790},
  year={2026},
  doi={10.48550/arXiv.2606.13790},
  url={https://arxiv.org/abs/2606.13790}
}

@article{cohngordon2026chmc,
  title={Counterdiabatic {Hamiltonian Monte Carlo}},
  author={Cohn-Gordon, Reuben and Seljak, Uro{\v{s}} and Sels, Dries},
  journal={arXiv preprint arXiv:2602.21272},
  year={2026},
  doi={10.48550/arXiv.2602.21272},
  url={https://arxiv.org/abs/2602.21272}
}

@inproceedings{zhang2022pis,
  title={Path Integral Sampler: A Stochastic Control Approach for Sampling},
  author={Zhang, Qinsheng and Chen, Yongxin},
  booktitle={International Conference on Learning Representations},
  year={2022},
  doi={10.48550/arXiv.2111.15141},
  url={https://openreview.net/forum?id=_uCb2ynRu7Y}
}

@inproceedings{vargas2024cmcd,
  title={Transport Meets Variational Inference: Controlled Monte Carlo Diffusions},
  author={Vargas, Francisco and Padhy, Shreyas and Blessing, Denis and N{\"u}sken, Nikolas},
  booktitle={International Conference on Learning Representations},
  year={2024},
  doi={10.48550/arXiv.2307.01050},
  url={https://openreview.net/forum?id=PP1rudnxiW}
}

@article{chen2026madpath,
  title={Markov Chain Monte Carlo with Diffusion Paths},
  author={Chen, Han and Liu, Sifan and Yang, Jun},
  journal={arXiv preprint arXiv:2607.11631},
  year={2026},
  doi={10.48550/arXiv.2607.11631},
  url={https://arxiv.org/abs/2607.11631}
}

@article{delmoral2006smc,
  title={Sequential Monte Carlo Samplers},
  author={Del Moral, Pierre and Doucet, Arnaud and Jasra, Ajay},
  journal={Journal of the Royal Statistical Society Series B: Statistical Methodology},
  volume={68},
  number={3},
  pages={411--436},
  year={2006},
  doi={10.1111/j.1467-9868.2006.00553.x},
  url={https://doi.org/10.1111/j.1467-9868.2006.00553.x}
}

@article{woo2024iefm,
  title={Iterated Energy-Based Flow Matching for Sampling from Boltzmann Densities},
  author={Woo, Dongyeop and Ahn, Sungsoo},
  journal={arXiv preprint arXiv:2408.16249},
  year={2024},
  doi={10.48550/arXiv.2408.16249},
  url={https://arxiv.org/abs/2408.16249}
}

@inproceedings{aggarwal2025boltznce,
  title={{BoltzNCE}: Learning Likelihoods for {Boltzmann} Generation with Stochastic Interpolants and Noise Contrastive Estimation},
  author={Aggarwal, Rishal and Chen, Jacky and Boffi, Nicholas M. and Koes, David Ryan},
  booktitle={Advances in Neural Information Processing Systems},
  volume={38},
  year={2025},
  doi={10.48550/arXiv.2507.00846},
  url={https://proceedings.neurips.cc/paper_files/paper/2025/hash/f455010f4ff54a5d857fde29f14bdd3e-Abstract-Conference.html}
}

@article{eastman2017openmm,
  title={{OpenMM} 7: Rapid Development of High Performance Algorithms for Molecular Dynamics},
  author={Eastman, Peter and Swails, Jason and Chodera, John D. and McGibbon, Robert T. and Zhao, Yutong and Beauchamp, Kyle A. and Wang, Lee-Ping and Simmonett, Andrew C. and Harrigan, Matthew P. and Stern, Chaya D. and Wiewiora, Rafal P. and Brooks, Bernard R. and Pande, Vijay S.},
  journal={PLOS Computational Biology},
  volume={13},
  number={7},
  pages={e1005659},
  year={2017},
  doi={10.1371/journal.pcbi.1005659},
  url={https://doi.org/10.1371/journal.pcbi.1005659}
}

@article{lindorfflarsen2010amber99sbildn,
  title={Improved Side-Chain Torsion Potentials for the {AMBER} ff99SB Protein Force Field},
  author={Lindorff-Larsen, Kresten and Piana, Stefano and Palmo, Kim and Maragakis, Paul and Klepeis, John L. and Dror, Ron O. and Shaw, David E.},
  journal={Proteins: Structure, Function, and Bioinformatics},
  volume={78},
  number={8},
  pages={1950--1958},
  year={2010},
  doi={10.1002/prot.22711},
  url={https://doi.org/10.1002/prot.22711}
}

@article{onufriev2004obc,
  title={Exploring Protein Native States and Large-Scale Conformational Changes with a Modified Generalized Born Model},
  author={Onufriev, Alexey and Bashford, Donald and Case, David A.},
  journal={Proteins: Structure, Function, and Bioinformatics},
  volume={55},
  number={2},
  pages={383--394},
  year={2004},
  doi={10.1002/prot.20033},
  url={https://doi.org/10.1002/prot.20033}
}

@article{luscher1982topology,
  title={Topology of {Lattice Gauge Fields}},
  author={L{\"u}scher, Martin},
  journal={Communications in Mathematical Physics},
  volume={85},
  pages={39--48},
  year={1982},
  doi={10.1007/BF02029132},
  url={https://doi.org/10.1007/BF02029132}
}

@inproceedings{neklyudov2020involutive,
  title={Involutive {MCMC}: A Unifying Framework},
  author={Neklyudov, Kirill and Welling, Max and Egorov, Evgenii and Vetrov, Dmitry},
  booktitle={Proceedings of the 37th International Conference on Machine Learning},
  series={Proceedings of Machine Learning Research},
  volume={119},
  pages={7273--7282},
  year={2020},
  publisher={PMLR},
  url={https://proceedings.mlr.press/v119/neklyudov20a.html}
}
\bibliographystyle{templates/iclr2026/iclr2026_conference}

\appendix
\renewcommand{\theHfigure}{appendix.\arabic{figure}}
\renewcommand{\theHtable}{appendix.\arabic{table}}
\section{Path Measures and Correction Arguments}
\label{app:proofs}

\subsection{Forward and Reverse Path Laws}
Throughout this appendix the configuration-space target is represented by an
unnormalized density $\gamma(x)=\exp[-E(x)]$ with a finite positive normalizing
constant $Z$, where $0<Z=\int\gamma(x)\dd x<\infty$. The learned parameter
$\theta$ is fixed during evaluation. The arguments depend on the fixed proposal law used at
evaluation time, not on the particular training objective.
Endpoint corrections require a known endpoint proposal density and
$\pi\ll q_\theta$; path corrections require a computable Radon--Nikodym weight
between the fixed forward path law $Q_\theta$ and a reference reverse path law
$P_\theta$ whose configuration endpoint marginal is $\pi$. Deterministic maps
are invertible and either volume preserving or accompanied by their exact
Jacobian determinants. For discrete moves, $s\mapsto s^\dagger$ is a bijection
of the label space and satisfies $g_{s^\dagger}=g_s^{-1}$. Every stochastic
choice appears through its forward and reverse transition probabilities.

The main construction is the forward--reverse path ratio. Endpoint SNIS and
endpoint IMH are the corresponding special cases when the endpoint proposal
density is available. Endpoint-SNIS and path-SNIS supply consistent weighted
estimates through their weighted empirical measures. Endpoint-IMH and path-IMH
define kernels that preserve their respective target laws, while finite-chain
efficiency still depends on proposal overlap, acceptance rate, and
autocorrelation. Round-trip NHMC-MH directly preserves the
configuration-space target by Theorem~\ref{thm:roundtrip-reversibility}.

\subsection{Boltzmann Marginal of the Extended State}
For a configuration $x$ and momentum $p$, define $z=(x,p)$,
\begin{equation}
    \Pi(z)=\pi(x)\rho(p)
    =
    \frac{1}{Z_x Z_p}\exp[-E(x)-K(p)].
\end{equation}
Since $\rho$ is normalized,
\begin{equation}
    \int \Pi(x,p)\,dp = \pi(x).
\end{equation}
Therefore any Markov kernel preserving $\Pi$ and refreshing or marginalizing momenta preserves the desired Boltzmann marginal.

\subsection{Endpoint Independent Metropolis Step}
Let $\pi(x)=\gamma(x)/Z$ and $w_\theta(x)=\gamma(x)/q_\theta(x)$. The endpoint-IMH accepted transition density is
\begin{equation}
    q_\theta(y)\alpha_\theta(x,y),
    \qquad
    \alpha_\theta(x,y)=
    \min\left\{1,\frac{w_\theta(y)}{w_\theta(x)}\right\}.
\end{equation}
For $x\neq y$,
\begin{align}
    \pi(x)q_\theta(y)\alpha_\theta(x,y)
    &=
    \frac{q_\theta(x)w_\theta(x)}{Z}q_\theta(y)
    \min\left\{1,\frac{w_\theta(y)}{w_\theta(x)}\right\}\\
    &=
    \frac{q_\theta(x)q_\theta(y)}{Z}
    \min\{w_\theta(x),w_\theta(y)\}\\
    &=
    \pi(y)q_\theta(x)\alpha_\theta(y,x).
\end{align}
The rejection term is a self-loop and therefore also satisfies detailed balance. Hence $\pi K_\theta=\pi$.

\subsection{Endpoint Weighted Estimates}
Let $x_i\sim q_\theta$ iid. Since $w_\theta=\gamma/q_\theta$,
\begin{equation}
    \E_{q_\theta}[w_\theta(X)] = \int \gamma(x)\dd x=Z,
    \qquad
    \E_{q_\theta}[w_\theta(X)O(X)] = Z\E_\pi[O].
\end{equation}
The strong law of large numbers gives
\begin{equation}
    \frac{\sum_i w_\theta(x_i)O(x_i)}
         {\sum_i w_\theta(x_i)}
    \xrightarrow{\mathrm{a.s.}}
    \E_\pi[O],
\end{equation}
provided the required first moments exist. The self-normalized estimator is
therefore consistent. As usual for self-normalized importance sampling, this
is the property used for the weighted estimates in the empirical tables.

\subsection{Endpoint Normalizer Estimates}
The same endpoint weights give
\begin{equation}
    \widehat Z_N
    =
    \frac{1}{N}\sum_{i=1}^N w_\theta(x_i).
\end{equation}
Because $\E_{q_\theta}[w_\theta(X)]=Z$, this estimator is unbiased for $Z$ at
finite $N$ and, by the strong law of large numbers,
\begin{equation}
    \widehat Z_N \xrightarrow{\mathrm{a.s.}} Z.
\end{equation}
Since $Z>0$, the continuous-mapping theorem gives
\begin{equation}
    \log \widehat Z_N \xrightarrow{\mathrm{a.s.}} \log Z
\end{equation}
on the event that $\widehat Z_N$ is eventually positive. Thus endpoint
log-normalizer errors used in the DW/toy summaries are consistent estimates
when compared with an exact or reference value. The log transform is generally
biased at finite $N$, just as in the Jarzynski free-energy estimator below.

\subsection{Weight Concentration}
For nonnegative weights $w_i$ satisfying
$0<\sum_{i=1}^N w_i<\infty$, define normalized weights
\begin{equation}
    \bar w_i=\frac{w_i}{\sum_{j=1}^N w_j}.
\end{equation}
The SNIS effective sample size reported in the paper is
\begin{equation}
    \mathrm{ESS}_{\mathrm{SNIS}}
    =
    \frac{1}{\sum_i \bar w_i^2}
    =
    \frac{\left(\sum_i w_i\right)^2}{\sum_i w_i^2},
    \qquad
    \frac{\mathrm{ESS}_{\mathrm{SNIS}}}{N}
    =
    \frac{\left(\sum_i w_i\right)^2}{N\sum_i w_i^2}.
\end{equation}
This quantity is invariant to multiplying all weights by a positive constant,
so unknown normalizing constants and free-energy offsets do not change it. It
lies in $[1,N]$, with equality at $N$ only when all finite weights are equal.
It measures finite-sample weight concentration and is distinct from the
autocorrelation-based effective chain count $N/(2\tau_{\mathrm{int}})$ used
for Markov chains; unweighted proposal samples remain proposal samples.

\subsection{Weighted Empirical Distributions}
\label{app:weighted-empirical-summaries}
Several figures show probabilities assigned to individual modes, weighted
histograms, and overlaps between action distributions. These are all functions of the same weighted
empirical measure
\begin{equation}
    \widehat\mu_N^w
    =
    \sum_{i=1}^N \bar w_i \delta_{x_i},
    \qquad
    \bar w_i=\frac{w_i}{\sum_j w_j}.
\end{equation}
For any bounded measurable bin, mode indicator, energy-window indicator, or
smoothed histogram test function $f$,
\begin{equation}
    \int f(x)\,\widehat\mu_N^w(dx)
    =
    \sum_{i=1}^N \bar w_i f(x_i)
    \xrightarrow{\mathrm{a.s.}}
    \E_\pi[f(X)]
\end{equation}
under the same first-moment assumptions as endpoint SNIS. The path-weight
version follows by replacing $w_i$ and $x_i$ with
$\widetilde{\omega}_\theta(\Gamma_i)$ and the endpoint $z_L(\Gamma_i)$.

If a figure displays a SNIS-resampled visual set, the resampling is from
the discrete measure $\widehat\mu_N^w$. Conditional on the weighted sample,
the resampled points are draws from this empirical approximation; at finite
$N$, weighted histograms and resampled panels visualize the weighted empirical
measure and its coverage of target features, while consistency or stationarity
comes from the corresponding SNIS or IMH/path-IMH result.

\subsection{Detailed Derivation of Theorem~\ref{thm:path-probability-identity}}
\label{app:path-ratio-proof}
The main theorem follows from the probability ratio between the explicit forward
proposal path density and the reference reverse path density. All path densities
below are taken with respect to the common path-coordinate measure induced by
the deterministic maps. Equivalently, a forward path is parameterized by
$(x_0,s_0,p_0,\ldots,s_{L-1},p_{L-1})$, with absent $s_t$ variables omitted
and all later configurations and output momenta obtained deterministically.
Write the displayed path as
\[
    \Gamma=
    \left(x_0,\{s_t,\widetilde x_t,p_t,x_{t+1},
    \bar p_{t+1}\}_{t=0}^{L-1}\right),
\]
where $x_0\sim q_0$, $s_t\sim r^F_{\theta,t}(\cdot\mid x_t)$,
$\widetilde x_t=g_{s_t}(x_t)$,
$p_t\sim q^F_{\theta,t}(\cdot\mid\widetilde x_t)$, and
$(x_{t+1},\bar p_{t+1})=\Phi_t(\widetilde x_t,p_t)$. Here $p_t$ is the sampled input
momentum at stage $t$, while $\bar p_{t+1}$ is the deterministic map output.
The optional $g_{s_t}$ maps are invertible and volume preserving; the DW sign
flip is an involution with unit absolute Jacobian determinant. The maps
$\Phi_t$ are invertible and volume preserving for the leapfrog updates used in
the reported NHMC paths:
\begin{lemma}[Kick--drift--kick invertibility and volume preservation]
For a state-independent step size, each kick or drift update is
an invertible shear with unit Jacobian determinant. Their composition
$\Phi_t$ is therefore invertible and volume preserving. With momentum reversal
$\mathcal R(x,p)=(x,-p)$, the inverse map is
$\Phi_t^{-1}=\mathcal R\circ\Phi_t\circ\mathcal R$.
\end{lemma}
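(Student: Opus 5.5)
The plan is to prove the three claims (invertibility, unit Jacobian, and reversibility) for a single kick or drift and then propagate them through the composition. With a state-independent step size $\varepsilon$, a kick has the form $K_\varepsilon(x,p)=(x,\,p+\varepsilon F(x))$, where $F$ is either $-\nabla E$ or a learned position-dependent field. A drift has the form $D_\varepsilon(x,p)=(x+\varepsilon M^{-1}p,\,p)$; in the compact-group case it is $U\mapsto\exp(\varepsilon A)U$. Each map is a shear, since it changes one block of coordinates by an amount that depends only on the other block. Its inverse is therefore explicit: $K_\varepsilon^{-1}=K_{-\varepsilon}$ and $D_\varepsilon^{-1}=D_{-\varepsilon}$. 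Its Jacobian matrix is block unit-triangular, e.g.\ $\begin{pmatrix} I & 0\\ \varepsilon\,\partial F/\partial x & I\end{pmatrix}$, so the determinant is one. Note that $F$ need not be conservative, and no symplecticity is used. For the group drift, volume preservation is instead the left-invariance of Haar measure, applied for fixed $A$ (Fubini on the product Haar--Lebesgue measure). Because determinants multiply and invertible maps compose, any finite composition $\Phi_t$ is invertible and volume preserving.

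Reversibility is the step that needs care, and it is where the palindromic (kick--drift--kick) structure is required. First I would verify the single-map conjugation identities $\mathcal R\circ K_\varepsilon\circ\mathcal R=K_{-\varepsilon}=K_\varepsilon^{-1}$ and $\mathcal R\circ D_\varepsilon\circ\mathcal R=D_{-\varepsilon}=D_\varepsilon^{-1}$. The first holds because $(x,p)\mapsto(x,-p)\mapsto(x,-p+\varepsilon F(x))\mapsto(x,p-\varepsilon F(x))$. The second is analogous, using linearity of the drift in $p$ and, in the group case, $\exp(-\varepsilon A)=\exp(\varepsilon A)^{-1}$. Both identities use only the fact that $F$ depends on $x$ alone and that the step size does not depend on the state.

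Next, write $\Phi_t=S_1\circ\cdots\circ S_m$ with the palindromic property $S_{m+1-i}=S_i$; for a kick--drift--kick leapfrog with $n$ steps this is $K_{\varepsilon/2}D_\varepsilon K_\varepsilon\cdots D_\varepsilon K_{\varepsilon/2}$. Inserting $\mathcal R^2=I$ between factors gives
\[
\mathcal R\Phi_t\mathcal R=(\mathcal R S_1\mathcal R)\cdots(\mathcal R S_m\mathcal R)=S_1^{-1}\cdots S_m^{-1}.
\]
The palindrome lets me reverse the order of the factors, so this equals $S_m^{-1}\cdots S_1^{-1}=(S_1\cdots S_m)^{-1}=\Phi_t^{-1}$.

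The main obstacle is the hypothesis that the learned kick field is the same at each palindromic position. If the lattice kick fields differ from sub-step to sub-step, symmetry of the composition fails. I would therefore state explicitly that the palindrome must hold at the level of the actual field and step size at each position. An alternative route, if this cannot be guaranteed, is to define the reverse map directly as $\Phi_t^{-1}$ and carry only invertibility and volume preservation into Theorem~\ref{thm:path-probability-identity}.
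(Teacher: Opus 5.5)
Your proof is correct and follows the paper's argument: each kick and drift is a unit-triangular shear inverted by reversing the step sign, and reversibility comes from conjugating by the momentum flip $\mathcal R$. Your factor-wise identity $\mathcal R S_i\mathcal R=S_i^{-1}$ combined with the palindromic ordering spells out what the paper compresses into ``the standard time-reversal identity.'' Your caveat, that the same field and step size must occupy mirrored positions, is exactly the hypothesis that identity implicitly relies on.
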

\begin{proof}
The momentum half-step keeps $x$ fixed and translates $p$ by a differentiable
function of $x$; the position step keeps $p$ fixed and translates $x$ by a
function of $p$. Each map is triangular with determinant one and has an inverse
given by the same shear with the step sign reversed. Composing the two half
momentum steps and the position step preserves invertibility and determinant
one. The standard time-reversal identity follows by applying the same sequence
after flipping the momentum.
\end{proof}
For non-volume-preserving deterministic maps, the Jacobian terms enter the
density ratio described in the deterministic-map discussion below. With the
volume-preserving maps used here, the forward path density is
\begin{equation}
    Q_\theta(d\Gamma)
    =
    q_0(x_0)
    \prod_{t=0}^{L-1}
    r^F_{\theta,t}(s_t\mid x_t)
    q^F_{\theta,t}(p_t\mid\widetilde x_t)
    \,d\lambda(\Gamma),
\end{equation}
where $d\lambda$ denotes the common path-coordinate base measure induced by the
deterministic maps.

Now define a reference reverse path law by first drawing the endpoint from the
Boltzmann target, $x_L\sim\pi(x)=\gamma(x)/Z$. For a realized forward stage
satisfying
\begin{equation}
    \Phi_t(\widetilde x_t,p_t)=(x_{t+1},\bar p_{t+1}),
\end{equation}
time reversibility gives
\begin{equation}
    \Phi_t(x_{t+1},-\bar p_{t+1})=(\widetilde x_t,-p_t).
\end{equation}
Thus the reverse input momentum associated with forward stage $t$ is
$-\bar p_{t+1}$; applying $g_{s_t}^{-1}$ after the inverse leapfrog reconstructs
$x_t$. The reference reverse density on the same path coordinates is
\begin{equation}
    P_\theta(d\Gamma)
    =
    \frac{\gamma(x_L)}{Z}
    \prod_{t=0}^{L-1}
    r^R_{\theta,t}(s_t^\dagger\mid x_{t+1})
    q^R_{\theta,t}(-\bar p_{t+1}\mid x_{t+1})
    \,d\lambda(\Gamma).
\end{equation}
It is normalized because $\pi$, the reverse discrete laws, and the reverse
momentum laws are normalized and the inverse deterministic reconstructions
preserve volume. Since the
construction first draws $x_L\sim\pi$ and then conditions all other path
variables on this endpoint, integrating over the non-endpoint variables leaves
endpoint marginal $\pi(dx_L)$.

The ratio of the two path densities is therefore
\begin{align}
    \log\frac{dQ_\theta}{dP_\theta}(\Gamma)
    &=
    \log q_0(x_0)-\log\gamma(x_L)+\log Z\\
    &\quad+
    \sum_{t=0}^{L-1}
    \left[
    \log r^F_{\theta,t}(s_t\mid x_t)
    -\log r^R_{\theta,t}(s_t^\dagger\mid x_{t+1})
    \right]\notag\\
    &\quad+
    \sum_{t=0}^{L-1}
    \left[
    \log q^F_{\theta,t}(p_t\mid\widetilde x_t)
    -\log q^R_{\theta,t}(-\bar p_{t+1}\mid x_{t+1})
    \right]\\
    &=
    W_\theta(\Gamma)+\log Z.
\end{align}
Equivalently,
\begin{equation}
    \frac{dP_\theta}{dQ_\theta}(\Gamma)
    =
    \frac{\exp[-W_\theta(\Gamma)]}{Z}.
\end{equation}
Thus $\widetilde\omega_\theta(\Gamma)=\exp[-W_\theta(\Gamma)]$ is an
unnormalized path correction weight, and integrating the last display under
$Q_\theta$ gives $\E_{Q_\theta}[\exp[-W_\theta]]=Z$.
This completes the detailed derivation of
Theorem~\ref{thm:path-probability-identity}.

\subsection{Path Reweighting Identity}
Let $Q_\theta(d\Gamma)$ be the learned forward path measure and let
$P_\theta(d\Gamma)$ be the reverse reference path law constructed above. Define
$\widetilde{\omega}_\theta(\Gamma)=\exp[-W_\theta(\Gamma)]$. Assume $0<\E_{Q_\theta}[\widetilde{\omega}_\theta]<\infty$ and $\E_{Q_\theta}[\widetilde{\omega}_\theta |O(x_L)|]<\infty$. Since
\begin{equation}
    P_\theta(d\Gamma)
    =
    \frac{\widetilde{\omega}_\theta(\Gamma)}
         {\E_{Q_\theta}[\widetilde{\omega}_\theta]}
    Q_\theta(d\Gamma),
\end{equation}
then
\begin{align}
    \E_{\pi}[O]
    &=
    \E_{P_\theta}[O(x_L)]\\
    &=
    \frac{\E_{Q_\theta}[\widetilde{\omega}_\theta(\Gamma)O(x_L)]}
         {\E_{Q_\theta}[\widetilde{\omega}_\theta(\Gamma)]}.
\end{align}
The empirical SNIS estimator follows by applying the strong law to numerator and denominator.

\subsection{Detailed-Balance Proof of Theorem~\ref{thm:path-imh}}
\label{app:path-imh-proof}
Let $P_\theta(d\Gamma)\propto\widetilde{\omega}_\theta(\Gamma)Q_\theta(d\Gamma)$ with $0<\E_{Q_\theta}[\widetilde{\omega}_\theta]<\infty$ and positive finite weights on the sampled support. Propose $\Gamma'\sim Q_\theta$. The independent path acceptance probability is
\begin{equation}
    \alpha_\theta(\Gamma,\Gamma')
    =
    \min\left\{1,
    \frac{\widetilde{\omega}_\theta(\Gamma')}
         {\widetilde{\omega}_\theta(\Gamma)}
    \right\}.
\end{equation}
The accepted transition measure is
\begin{align}
    P_\theta(d\Gamma)Q_\theta(d\Gamma')\alpha_\theta(\Gamma,\Gamma')
    &\propto
    Q_\theta(d\Gamma)Q_\theta(d\Gamma')
    \min\{\widetilde{\omega}_\theta(\Gamma),
           \widetilde{\omega}_\theta(\Gamma')\},
\end{align}
which is symmetric in $\Gamma$ and $\Gamma'$. Self-loops handle rejections, so $P_\theta$ is invariant. Since the constructed $P_\theta$ has configuration endpoint marginal $\pi$, stationary $x_L$ endpoints are Boltzmann distributed.
This argument proves invariance. Convergence from an arbitrary initialization
requires the usual irreducibility and aperiodicity conditions for the
independent Metropolis kernel.

\subsection{Round-Trip NHMC-MH Proof}
\label{app:roundtrip-proof}
Let $\mathsf X$ be configuration space with reference measure $\mu$, and let
$\mathsf A$ be the product space of all stochastic labels and momenta along a
path, with product reference measure $m$.  A forward record is denoted by $a$
and the associated reverse record by $b$.

For one stage, an optional discrete, measure-preserving map $g_{t,s_t}$ is
followed by a reversible, volume-preserving kick--drift map $\Phi_t$:
\begin{align}
    \widetilde x_t &= g_{t,s_t}(x_t),\\
    (x_{t+1},\bar p_{t+1})&=\Phi_t(\widetilde x_t,p_t),\\
    p_t^\dagger&=-\bar p_{t+1}.
\end{align}
The inverse label satisfies
$g_{t,s_t^\dagger}=g_{t,s_t}^{-1}$.  Since
$\Phi_t^{-1}=\mathcal R\circ\Phi_t\circ\mathcal R$ for
$\mathcal R(x,p)=(x,-p)$, the inverse stage is explicit:
\begin{align}
    (\widetilde x_t,-p_t)&=\Phi_t(x_{t+1},p_t^\dagger),\\
    x_t&=g_{t,s_t}^{-1}(\widetilde x_t).
\end{align}
Every factor in this stage map is invertible and preserves its reference
measure.  Composition over the stages therefore gives a bimeasurable,
measure-preserving bijection
\begin{equation}
    \Psi_\theta:(u,a)\longmapsto(x,b)
\end{equation}
from $(\mathsf X\times\mathsf A,\mu\otimes m)$ to itself.  This is the
coordinate statement needed below; a non-volume-preserving implementation
would instead require its exact Jacobian in the acceptance ratio.

Let $f_\theta(a\mid u)$ and $r_\theta(b\mid x)$ be the normalized forward and
reverse auxiliary densities.  Given the current $x$, the implemented
transition draws
\begin{equation}
    b\sim r_\theta(\cdot\mid x),\qquad
    (u,a)=\Psi_\theta^{-1}(x,b),\qquad
    a'\sim f_\theta(\cdot\mid u),\qquad
    (y,b')=\Psi_\theta(u,a').
\end{equation}
Figure~\ref{fig:roundtrip-nhmc-mh} summarizes this shared-bridge construction
and the single acceptance decision made after both path maps have been evaluated.

\begin{figure}[H]
\centering
\includegraphics[width=0.80\linewidth]{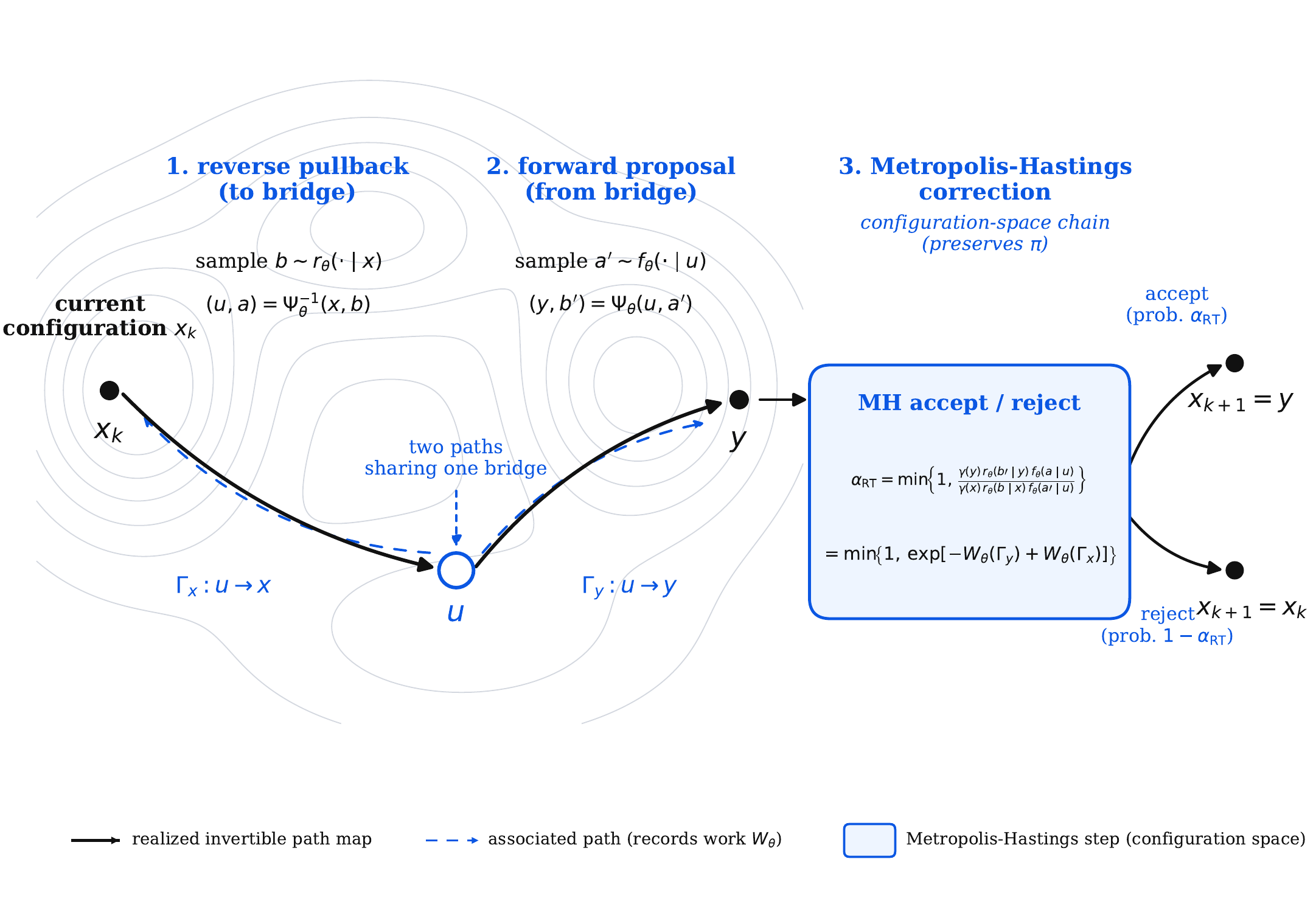}
\caption{Shared-bridge round-trip NHMC-MH transition. Starting from $x_k$, the
algorithm traverses the forward-oriented path $\Gamma_x:u\to x_k$ in reverse
to recover $u$, then follows $\Gamma_y:u\to y$ to the candidate. One
Metropolis decision accepts $y$ or retains $x_k$. The curves are realized
invertible path maps conditional on the sampled auxiliary records; their
forward orientations define the two work values.}
\label{fig:roundtrip-nhmc-mh}
\end{figure}

For $z=(x,b,a')$, write $(u(x,b),a(x,b))=\Psi_\theta^{-1}(x,b)$ and define the
augmented density, with respect to
$\nu=\mu\otimes m\otimes m$, by
\begin{equation}
    \xi_\theta(z)
    =\pi(x)r_\theta(b\mid x)
      f_\theta(a'\mid u(x,b)).
    \label{eq:roundtrip-augmented-density}
\end{equation}
Successive integration over $a'$ and $b$ shows that $\xi_\theta$ is normalized
and that its $x$-marginal is $\pi$.  No assumption on the induced bridge
distribution is needed.

Define the path-swap map
\begin{equation}
    T_\theta(x,b,a')=(y,b',a)
    = (\Psi_\theta\times I)\circ\operatorname{Swap}
      \circ(\Psi_\theta^{-1}\times I)(x,b,a'),
    \label{eq:roundtrip-involution}
\end{equation}
where $\operatorname{Swap}(u,a,a')=(u,a',a)$.  This conjugation immediately
gives $T_\theta^2=I$.  Moreover, $\Psi_\theta$, its inverse, and the swap all
preserve their respective product reference measures, so
$(T_\theta)_\#\nu=\nu$.

For $z\sim\xi_\theta$, the denominator $\xi_\theta(z)$ is positive almost
surely. The direct Metropolis ratio is therefore defined
$\xi_\theta$-almost everywhere; if $\xi_\theta(T_\theta z)=0$, the acceptance
probability is defined to be zero. If the measures with densities
$\xi_\theta(z)$ and $\xi_\theta(T_\theta z)$ are mutually absolutely continuous,
the displayed densities are finite almost everywhere on their common support,
and $q_0(u)$ is finite and positive on the induced bridge support, then the
work-difference representation below is finite.

Bijectivity gives $\Psi_\theta^{-1}(y,b')=(u,a')$.  Hence
\begin{align}
    \frac{\xi_\theta(T_\theta z)}{\xi_\theta(z)}
    &=
    \frac{\pi(y)r_\theta(b'\mid y)f_\theta(a\mid u)}
         {\pi(x)r_\theta(b\mid x)f_\theta(a'\mid u)}\\
    &=
    \frac{\gamma(y)r_\theta(b'\mid y)f_\theta(a\mid u)}
         {\gamma(x)r_\theta(b\mid x)f_\theta(a'\mid u)}.
    \label{eq:roundtrip-augmented-ratio}
\end{align}
The unknown target normalizer cancels.  With the Metropolis probability
$\alpha_\theta(z)=1\wedge\xi_\theta(T_\theta z)/\xi_\theta(z)$, the accepted
transition measure has density
\begin{equation}
    \xi_\theta(z)\alpha_\theta(z)
    =\min\{\xi_\theta(z),\xi_\theta(T_\theta z)\}.
    \label{eq:roundtrip-symmetric-flux}
\end{equation}
For measurable $A,B\subseteq\mathsf X$, the accepted transition measure from
$A$ to $B$ is therefore
\begin{equation}
    J(A,B)=\int
    \mathbf 1_A(x(z))\mathbf 1_B(x(T_\theta z))
    \min\{\xi_\theta(z),\xi_\theta(T_\theta z)\}\,\nu(\dd z).
\end{equation}
The change of variables $w=T_\theta z$, together with
$T_\theta^2=I$ and $(T_\theta)_\#\nu=\nu$, gives
$J(A,B)=J(B,A)$.  Rejections remain at the current configuration and are
symmetric on the diagonal.  Thus the projected kernel satisfies detailed
balance with respect to $\pi$, proving Theorem~\ref{thm:roundtrip-reversibility}.

Finally, under the mutual absolute continuity condition, define for the two
forward-oriented realized paths
\begin{align}
    W_x&=\log q_0(u)-\log\gamma(x)
       +\log f_\theta(a\mid u)-\log r_\theta(b\mid x),\\
    W_y&=\log q_0(u)-\log\gamma(y)
       +\log f_\theta(a'\mid u)-\log r_\theta(b'\mid y).
\end{align}
Because both paths share the same $u$, subtracting these expressions cancels
$\log q_0(u)$ and turns Eq.~(\ref{eq:roundtrip-augmented-ratio}) into
\begin{equation}
    \alpha_\theta=1\wedge\exp[-W_y+W_x].
\end{equation}
This equality explains why independent path-IMH and round-trip NHMC-MH have
similar-looking work-difference formulas even though they act on different
state spaces.

\subsection{Work, Free Energy, and Path-Space Kullback--Leibler Divergence}
The constructed path laws give
\begin{equation}
    \log\frac{dQ_\theta}{dP_\theta}(\Gamma)
    =
    W_\theta(\Gamma)+\log Z .
\end{equation}
Equivalently, with $\Delta F=-\log Z$ under the normalized-base convention,
$\log(dQ_\theta/dP_\theta)=W_\theta-\Delta F$.
Then
\begin{equation}
    \KL(Q_\theta\|P_\theta)
    =
    \E_{Q_\theta}\left[\log\frac{dQ_\theta}{dP_\theta}\right]
    =
    \E_{Q_\theta}[W_\theta]-\Delta F.
\end{equation}
Thus dissipated work satisfies
$\E_{Q_\theta}[W_\theta]-\Delta F=\KL(Q_\theta\|P_\theta)$ exactly; minimizing
$\E_{Q_\theta}[W_\theta]$ is equivalent to minimizing this path-space KL because
$\Delta F$ is fixed by the target and base convention. If $q_{\theta,L}$ and
$\pi$ denote the configuration endpoint marginals of $Q_\theta$ and $P_\theta$,
the chain rule for KL gives
\begin{equation}
    \KL(q_{\theta,L}\|\pi)
    \le
    \KL(Q_\theta\|P_\theta).
\end{equation}
Therefore minimizing average dissipated work controls an upper bound on the
endpoint divergence $\KL(q_{\theta,L}\|\pi)$.

The work convention also fixes how the path weight used for correction relates
to the free-energy estimate. From the constructed density ratio
\begin{equation}
    \frac{dP_\theta}{dQ_\theta}(\Gamma)
    =
    \frac{\exp[-W_\theta(\Gamma)]}{Z},
\end{equation}
The correction weight may therefore be taken as either the normalized ratio
$dP_\theta/dQ_\theta$ or the empirical work weight $\exp[-W_\theta]$; the two
differ by the positive constant $Z^{-1}$. More generally,
\begin{equation}
    \widetilde{\omega}_\theta(\Gamma)
    =
    c\,\exp[-W_\theta(\Gamma)]
\end{equation}
for any $c>0$ gives the same SNIS estimator and the same path-IMH acceptance
probability, because the constant cancels between numerator and denominator or
between proposed and current path weights. Integrating
$dP_\theta/dQ_\theta$ under $Q_\theta$ gives
\begin{equation}
    1
    =
    \E_{Q_\theta}\left[\frac{\exp[-W_\theta]}{Z}\right],
    \qquad
    \E_{Q_\theta}[\exp[-W_\theta]]=Z=e^{-\Delta F},
\end{equation}
where $\Delta F=-\log Z$. This is the Jarzynski identity under the sign
convention used in the paper.
For independent paths $\Gamma_i\sim Q_\theta$, it gives the consistent estimator
\begin{equation}
    \widehat{\Delta F}
    =
    -\log\left(\frac{1}{N}\sum_{i=1}^N e^{-W_i}\right)
\end{equation}
under standard moment conditions. The estimator of $e^{-\Delta F}$ is
unbiased; the log-transformed free-energy estimator is generally biased at
finite $N$ but consistent.

\subsection{Jacobian and Volume Preservation}
For a deterministic map used in a path proposal, unit Jacobian determinant gives
\begin{equation}
    \left|\det \frac{\partial z_L}{\partial z_0}\right|=1
\end{equation}
and contributes no additional density term.
If $z_{k+1}=T_{k,\theta}(z_k)$ and $J_{k,\theta}=|\det \partial T_{k,\theta}(z_k)/\partial z_k|$, a deterministic work convention is
\begin{equation}
    W_\theta(\Gamma)=
    \sum_{k=0}^{L-1}
    \left[
    H_{k+1}(z_{k+1})-H_k(z_k)-\log J_{k,\theta}
    \right].
\end{equation}
This is a generic deterministic-map convention. In the NHMC runs
reported here, the deterministic kick--drift maps are volume preserving, so
$J_{k,\theta}=1$ and the Jacobian term vanishes; the stochastic momentum
refreshments contribute instead through the forward-minus-reverse log-density
terms in Eq.~(\ref{eq:recorded-work}).

\subsection{Train Then Evaluate with Fixed Parameters}
The invariance and consistency results above are for a fixed parameter
$\theta$. After training produces a fixed parameter $\theta^\star$, the
resulting endpoint-IMH, path-IMH, or round-trip NHMC-MH kernel preserves the
corresponding target law by applying the same argument at $\theta^\star$.
Online adaptation would require additional adaptive MCMC conditions.

\subsection{Symmetry-Structured Proposal Densities}
Let $G$ be a finite group of differentiable, volume-preserving transformations on configuration space. Given any proposal density $q_\theta$, define the group-averaged density
\begin{equation}
    q^G_\theta(x)
    =
    \frac{1}{|G|}
    \sum_{g\in G}
    q_\theta(g^{-1}x).
\end{equation}
This is a normalized density because each $g$ preserves volume. If the target is invariant, $\gamma(gx)=\gamma(x)$, then $q^G_\theta$ is a natural symmetry-respecting proposal. The endpoint weight becomes
\begin{equation}
    w^G_\theta(x)=\frac{\gamma(x)}{q^G_\theta(x)}.
\end{equation}
Provided $\pi$ is absolutely continuous with respect to $q^G_\theta$, the
detailed-balance proof for endpoint IMH applies with $q^G_\theta$ and
$w^G_\theta$ in place of $q_\theta$ and $w_\theta$. The SNIS consistency proof
likewise applies after replacing the sampling law by $q^G_\theta$. Thus
symmetry changes finite-sample efficiency and weight variance, while the
correction argument remains unchanged.

If the physical target breaks the candidate symmetry, a symmetrized proposal
remains valid when its density is computed correctly and has sufficient support,
although its statistical efficiency can be poor. For this reason, the tilted
public DW target is evaluated with its asymmetric target masses instead of
enforced equal sign-mode probabilities.

\section{Numerical Setup and Implementation Details}
\label{app:experimental-details}

\subsection{Target Definitions}
All numerical studies use unnormalized target densities $\gamma(x)=\exp[-E(x)]$.
For the double-well family, the target is
\begin{equation}
    \log \gamma_d(x)
    =
    \sum_{j=0}^{d/2-1}
    \left[
    -x_{2j}^4+6x_{2j}^2+0.5x_{2j}
    -\frac{1}{2}x_{2j+1}^2
    \right].
\end{equation}
The Gaussian coordinate in each pair factorizes, while the even coordinate is a
tilted double well. The base law is $q_0=\mathcal N(0,I)$ for the neural NHMC
path runs. Analytic normalizers and exact target samples are obtained from the
one-dimensional factorization; sign modes are defined by the signs of the
double-well coordinates.

For the full-depth LJ13 benchmark, configurations contain 13 particles in
three dimensions with their center of mass removed. At temperature one, the
target energy is
\begin{equation}
    E_{\mathrm{LJ13}}(x)
    =
    \sum_{i\ne j}
    \left(r_{ij}^{-12}-2r_{ij}^{-6}\right)
    +\frac{1}{2}\sum_i\lVert x_i-\bar x\rVert^2 .
\end{equation}
The ordered-pair convention in the first term matches the standard
energy-factor-one benchmark. Endpoint energies use this target exactly. The
proposal dynamics floor $r_{ij}^2$ at $0.49$ only inside the force field to
avoid numerical overflow for near-collisions; this reversible proposal field
does not change the endpoint target density or the recorded-work correction.

For the two-dimensional scalar $\phi^4$ benchmark we use the finite-volume
lattice action convention
\begin{equation}
    S[\phi]
    =
    \sum_x
    \left[
    -2\kappa\sum_{\mu=1}^{2}\phi_x\phi_{x+\hat\mu}
    +(1-2\lambda)\phi_x^2+\lambda\phi_x^4
    \right],
\end{equation}
with periodic boundary conditions, $\lambda=0.022$, and the $\kappa$ values
listed in the main and appendix tables. The magnetization is
$M=V^{-1}\sum_x\phi_x$ and the ordering observable shown is $|M|$. We use
\begin{equation}
    \chi=V\left(\langle M^2\rangle-\langle |M|\rangle^2\right),
    \qquad
    U_B=1-\frac{\langle M^4\rangle}{3\langle M^2\rangle^2}.
\end{equation}
The same definitions are applied to the raw proposal, weighted NHMC, and HMC
reference samples.

For compact $U(1)$ we parameterize links by angles
$U_{x,\mu}=\exp(i\theta_{x,\mu})$ on a periodic two-dimensional lattice. The
Wilson action used by the local-chain baselines is
\begin{equation}
    S[U]=\beta\sum_p(1-\cos\theta_p),
\end{equation}
equivalent to the $\exp[\beta\cos\theta_p]$ convention up to the constant
$\beta V$. Plaquette observables use $\cos\theta_p$. The geometric
topological charge is computed from wrapped plaquette angles,
\begin{equation}
    Q[U]=\frac{1}{2\pi}\sum_p \mathrm{Arg}\{\exp(i\theta_p)\},
\end{equation}
and rounded sectors are used only for finite-chain trajectory summaries. This
wrapped-plaquette convention follows the geometric lattice topological-charge
construction \citep{luscher1982topology}.
Finite-volume plaquette and square Wilson-loop references are evaluated from
the character expansion $Z_V\propto\sum_{n\in\mathbb Z} I_n(\beta)^V$, with
an area-$A$ loop replacing $I_n^V$ by $I_n^{V-A}I_{n+1}^A$ in the numerator.

For two-dimensional $\mathrm{SU}(2)$ the links are unit quaternions with Haar
base measure. The implementation uses
$S[U]=\beta V\bigl(1-P[U]\bigr)$, where $P$ is the mean plaquette. The
single-plaquette reference used for the displayed check is
$I_2(\beta)/I_1(\beta)$. The reported check uses a learned-force
leapfrog kernel with five leapfrog steps per stage. Independent path-SNIS
starts from Haar; the round-trip chains reported in
Table~\ref{tab:su2-learned-l6} are initialized by a short physics HMC run from
Haar, unlike the $U(1)$ pilot, which used a single forward NHMC path and no
HMC warm start.
The Haar volume rows in Appendix~\ref{app:haar-su2-su3} keep the same action
and kernel, but initialize every NHMC estimator from Haar plus one forward
path. Two-dimensional $\mathrm{SU}(3)$ uses the same $S=\beta V(1-P)$ with
$P=\mathrm{Re}\,\mathrm{Tr}\,U_p/3$ and Haar base measure. The plaquette
reference at $\beta=5$ is the single-plaquette Weyl integral $0.35395$. The
force and momentum-mean networks are the gauge-equivariant CNNs described in
that appendix; HMC in those tables uses $n_{\mathrm{lf}}=10$ and
$\varepsilon=0.08$.

For the molecular targets, OpenMM 8.4 is used as the energy and force engine
\citep{eastman2017openmm}. The Ala4 and Ala6 systems are bgmol MiniPeptide
systems associated with the Boltzmann-generator benchmark suite
\citep{noe2019boltzmann}. The OpenMM systems use the AMBER99SB-ILDN protein
force field \citep{lindorfflarsen2010amber99sbildn} with OBC/GBSA implicit
solvent \citep{onufriev2004obc}. Energies are evaluated in units of $k_BT$ at
$T=300\,\mathrm{K}$. The proposal state is a vector of bond, angle, and torsion
internal coordinates $z$, mapped to Cartesian coordinates by $T(z)$. The
dimensionless energy used by the implementation is
\begin{equation}
    E_{\mathrm{IC}}(z)
    =E_{\mathrm{cart}}(T(z))-\log\left|\det J_T(z)\right|.
\end{equation}
OpenMM supplies Cartesian endpoint energies and first-order forces. The force
evaluations are treated as fixed first-order target information during training;
the implementation does not differentiate through OpenMM or evaluate
Hessian--vector products. Proposal kicks are generated by learned
position-dependent fields. Initial states
are sampled uniformly from a stored molecular dynamics prior pool. For an
evaluated path, the implementation records the empirical prior-action log score
\begin{equation}
    \log s_{\mathrm{prior}}
    =-E_{\mathrm{IC}}(z_L)+E_{\mathrm{IC}}(z_0)-\log P_{\mathrm{path}},
\end{equation}
where $\log P_{\mathrm{path}}$ is the accumulated forward-minus-reverse path
log-probability. Bond and angle coordinates are projected back into their
numerical domains after proposal updates, while torsions are wrapped
periodically. These projections are not invertible, so the molecular results
are reported as a learned-force prior-action feasibility study rather than as
an application of Theorem~\ref{thm:path-probability-identity}.

\subsection{NHMC Proposal Families and Training}
For each learned proposal we fix the trained parameters before evaluation.
The neural DW runs use learned forward and reverse conditional momentum laws
around target-force leapfrog updates; the implementation clips only extremely
large force values for numerical stability. The lattice runs use learned
position-dependent kicks in the same reversible, volume-preserving composition.
The DW runs also apply a learned Bernoulli sign flip to the double-well
coordinates at each stage, and include its forward and reverse log
probabilities in the recorded work. The $\phi^4$ implementation similarly
records its global $Z_2$ move. These Bernoulli outcomes are sampled as hard
discrete variables without a continuous relaxation. During backpropagation the
realized outcome is held fixed, while gradients reach the stage logits through
its recorded forward and reverse Bernoulli log-probability terms; no Gumbel or
straight-through estimator is used. The DW and lattice objectives combine mean
recorded work with a recorded-work variance regularizer. Its coefficient is $0.05$
for the reported DW runs and $0.01$ for the $\phi^4$ and compact-$U(1)$ runs.
Forward and reverse momentum networks are parameterized separately and use SiLU
activations. DW checkpoints are selected by normalized ESS on fixed validation
paths; the $\phi^4$ checkpoint is selected by mean normalized ESS across the
validation $\kappa$ grid. The mass matrix is the identity and the learned stage
step sizes are state independent. The $8\times8$ $\phi^4$ scan uses one
$\kappa$-conditional proposal trained over $\kappa\in[0.20,0.30]$, rather than
a separate model at each grid point. Stage time and $\kappa$ enter the momentum
and learned-kick networks through a joint conditioning embedding.
The selected DW4 main result and the DW4 round-trip comparison use different
trained proposals. The latter uses three 32-stage, five-leapfrog-per-stage
checkpoints, as listed separately below.
The LJ13 proposal uses a force-equivariant network with five message-passing
layers and 24 radial basis functions. The full-depth benchmark and the
half-depth momentum ablation are reported separately. The benchmark uses the
learned forward and reverse momentum laws; the ablation fixes both laws,
learns only the forward law, or learns both laws while keeping the remaining
training configuration fixed.

\begin{table}[H]
\centering
\caption{NHMC training and evaluation budgets.}
\label{tab:training-evaluation-budgets}
\begingroup
\footnotesize
\setlength{\tabcolsep}{3pt}
\begin{tabular}{@{}L{0.13\linewidth}L{0.15\linewidth}L{0.18\linewidth}L{0.22\linewidth}L{0.25\linewidth}@{}}
\toprule
Result & Target and path & Network & Optimization & Evaluation \\
\midrule
DW4 main & $d=4$; $16\times4$ LF & MLP, width 128 & batch 4096; 800 epochs; $2{\times}10^{-3}\!\to\!10^{-5}$ & 200000 paths; path-SNIS/path-IMH \\
DW4 round trip & $d=4$; $32\times5$ LF & MLP, width 128 & batch 4096; 1200 epochs; $2{\times}10^{-3}\!\to\!10^{-5}$ & three checkpoints; matched-path comparison \\
DW8 & $d=8$; $20\times5$ LF & MLP, width 128 & batch 4096; 1000 epochs; $2{\times}10^{-3}\!\to\!10^{-5}$ & 120000 paths; path-SNIS/path-IMH and matched-budget round trip \\
$\phi^4$ & $8\times8$; $16\times5$ LF & CNN, channels 32/32 & batch 64; 20000 epochs; $10^{-3}\!\to\!10^{-5}$ & 102400 paths/$\kappa$; 5000-transition HMC/IMH; 10000-transition round trip \\
LJ13 full depth & 13 particles; $56\times10$ LF & force-equivariant, width 128 & batch 320; 2600 epochs; $5{\times}10^{-4}\!\to\!10^{-5}$ & 50000 paths; raw/path-SNIS $W_2$ \\
LJ13 ablation & 13 particles; $56\times10$ LF & same architecture & batch 320; 1300 epochs; $5{\times}10^{-4}\!\to\!10^{-5}$ & one seed; 100000 paths/setting \\
Ala4/Ala6 & $d_{\rm IC}=120/180$; $3\times1$ LF & MLP, 256/256 & batch 16; 5000 epochs; $5{\times}10^{-4}\!\to\!10^{-5}$ & 50000 paths/system; prior-action weighting \\
$U(1)$ pilot & $8\times8$; $20\times5$ LF & CNN, channels 48/48 & batch 16; 30000 epochs; $10^{-3}\!\to\!5{\times}10^{-5}$ & 16 chains; 2000/5000 RT transitions at $\beta=4/8$; 400/1000 burn-in \\
$\mathrm{SU}(2)$ learned & $6\times6$; $(10\text{ or }20)\times5$ LF & CNN, channels 48/48 & batch 16; 8000 epochs; $10^{-3}\!\to\!5{\times}10^{-5}$ & 8192 Haar paths; 8 RT chains $\times300$ after 60 \\
Haar $\mathrm{SU}(2)$ & $L=4,6,8$; $10\times5$ LF & gauge-equiv.\ CNN, 48/48, 243k & batch 16; 8000 epochs; $10^{-3}\!\to\!5{\times}10^{-5}$ & 8192 SNIS; 8 IMH $\times2000$; 8 RT $\times1500$; HMC $n_{\rm lf}=10$ \\
Haar $\mathrm{SU}(3)$ & $L=4,6$; $10\times5$ LF & gauge-equiv.\ CNN, 48/48, 252k & batch 8; 8000 epochs; $10^{-3}\!\to\!5{\times}10^{-5}$ & same protocol as Haar $\mathrm{SU}(2)$ \\
\bottomrule
\end{tabular}
\endgroup
\end{table}

\subsection{Statistical Quantities}

For weighted SNIS estimates with unnormalized weights $w_i$, the normalized
effective sample size is
\begin{equation}
    \mathrm{ESS}/N = \frac{(\sum_i w_i)^2}{N\sum_i w_i^2}.
\end{equation}
This quantity is invariant to multiplying all weights by a positive constant
and is separate from autocorrelation-based Markov-chain effective counts.
The log-weight spread is the sample standard deviation of the log weights used
for the weighted estimator. Free-energy or log-normalizer error is computed from
$\log \widehat Z=\log(N^{-1}\sum_i w_i)$ against the exact or named reference
for that result. IMH chains are summarized by acceptance and integrated
autocorrelation time. For multimodal targets we record observed mode coverage
and mode-mass error.
For LJ13, energy-$W_2$ is the one-dimensional $2$-Wasserstein distance between
target-energy distributions. Geometric $W_2$ first uses a Hungarian assignment
for particle permutations and Kabsch alignment for rigid rotations, then
computes the $2$-Wasserstein distance between aligned point clouds. It is
estimated with 400 generated and 400 reference configurations over three
subsample seeds. Pair-distance $W_2$ is computed from the pooled distribution
of all 78 interparticle distances. Raw and path-SNIS values use the same
50000-path proposal pool; the reference contains 10000 configurations from
\citet{klein2023equivariant}.
For the $\phi^4$ single-chain comparison, standard errors of linear trace
averages use the autocorrelation-based effective count. Susceptibility and
Binder-cumulant uncertainties use contiguous-block delete-one jackknife
estimates; the high-statistics HMC reference is analyzed with the same
observable definitions.

For any weighted histogram, mode-mass, or action-overlap panel, the plotted
finite-sample object is the weighted empirical measure
$\widehat\mu_N^w=\sum_i \bar w_i\delta_{x_i}$ with
$\bar w_i=w_i/\sum_jw_j$. If a panel displays an SNIS-resampled visual set,
the resampling is from this empirical measure. Weighted histograms and
resampled panels are therefore visualizations of the same finite-sample
weighted measure.

For the rotated DW8 action-overlap comparison, the overlap is the common
histogram mass between the reference action distribution and the weighted
NHMC action distribution. With normalized histogram masses $\widehat p_b$ and
$\widehat q_b$ over common bins, this is
\begin{equation}
    \mathrm{overlap}(p,q)
    =
    \sum_{b} \min\{\widehat p_b,\widehat q_b\}.
\end{equation}

The molecular energy overlap uses the same common-mass definition with 100
common bins spanning the joint energy range of the weighted NHMC and molecular
dynamics reference ensembles. Torsion summaries use periodic
$(\varphi,\psi)\in[-\pi,\pi)^2$ histograms. The displayed free-energy surfaces
use $72\times72$ bins, whereas the reported overlap uses $36\times36$ bins and
is
\begin{equation}
    \mathcal O_{\mathrm{FES}}
    =\sum_b \min\{\widehat p_b,\widehat q_b\}
    =1-\mathrm{TV}(\widehat p,\widehat q).
\end{equation}
For Ala4 and Ala6, the FES overlap is the mean of this common mass over the
available residue-level torsion pairs. The reference files contain 50000 saved
molecular-dynamics configurations; this count is not interpreted as 50000
independent effective samples.

For the compact $U(1)$ round-trip pilot, uncertainties are standard errors
across the 16 chains. Topological sectors are assigned by rounding the measured
charge to the nearest integer after burn-in, and only the aggregate number of
adjacent sector changes is reported. These traces are not used to estimate
topological susceptibility or autocorrelation. The maximum inverse-map,
momentum-reversal, and work-ratio residuals are evaluated directly from the
recorded paths.

\subsection{\texorpdfstring{Lattice $\phi^4$ Settings}{Lattice phi4 Settings}}

\begin{table}[H]
\centering
\caption{$\phi^4$ settings for the main finite-volume observable comparison.}
\label{tab:phi4-params}
\begingroup
\footnotesize
\setlength{\tabcolsep}{3pt}
\begin{tabular}{L{0.15\linewidth}L{0.20\linewidth}L{0.36\linewidth}L{0.22\linewidth}}
\toprule
Component & Setting & Values & Reported quantity \\
\midrule
Target family & 2D scalar lattice $\phi^4$ & $\lambda=0.022$; $\kappa$ varied or fixed & Boltzmann observables and importance weights \\
$8\times8$ scan & NHMC proposal evaluation & $\kappa=0.20,\ldots,0.30$; $N=102400$ proposals per $\kappa$ & raw and SNIS-weighted $\langle |M| \rangle$, $\chi$, Binder cumulant, and ESS \\
$8\times8$ reference & HMC grid & 60 chains; 10000 steps; burn-in 2000; save every step; $\tau=1.0$; $\epsilon=0.03$ & high-statistics reference $\langle |M| \rangle$, $\chi$, and Binder cumulant \\
Observable panel & NHMC and HMC comparison & raw proposal, SNIS-corrected estimate, HMC reference, and ESS & main $\phi^4$ benchmark panel \\
\bottomrule
\end{tabular}
\endgroup
\end{table}

\section{Supplementary Numerical Results}
\label{app:additional-results}

\subsection{Neural NHMC on DW / Many-Well}
The learned-momentum Hamiltonian DW run uses the many-well density
\begin{equation}
    \log \gamma_d(x)
    =
    \sum_{j=0}^{d/2-1}
    \left[
    -x_{2j}^4+6x_{2j}^2+0.5x_{2j}
    -\frac{1}{2}x_{2j+1}^2
    \right],
\end{equation}
with analytic $\log Z$ and exact target sampling available by the
one-dimensional factorization. The neural NHMC proposal starts from
$q_0=\mathcal{N}(0,I)$, samples learned momenta, and applies leapfrog updates
with the target force. With forward and reverse learned momentum laws
$q^F_{\theta,t}$ and $q^R_{\theta,t}$, and with $\bar p_{t+1}$ denoting the
output momentum of the Hamiltonian map at stage $t$, the recorded work is
\begin{equation}
\begin{aligned}
    W_\theta(\Gamma)
    ={}& \log q_0(x_0)-\log \gamma(x_L)\\
    &+\sum_{t=0}^{L-1}
    \bigl[
    \log r^F_{\theta,t}(s_t\mid x_t)
    -\log r^R_{\theta,t}(s_t^\dagger\mid x_{t+1})\\
    &\hspace{5.0em}
    +\log q^F_{\theta,t}(p_t\mid \widetilde x_t)
    -\log q^R_{\theta,t}(-\bar p_{t+1}\mid x_{t+1})
    \bigr].
\end{aligned}
\end{equation}
and the correction weight is proportional to $\exp[-W_\theta(\Gamma)]$.
A learned stochastic sign flip over double-well coordinates is included in the
recorded path. The move is an involution, but its stage-indexed forward and
reverse log probabilities are retained explicitly in the recorded work above.

The selected DW4 and DW8 path-SNIS and path-IMH results are reported in
Table~\ref{tab:dw-main}; they are not repeated here.

Separate DW4 and DW8 checkpoint ensembles support the configuration-space round-trip
kernel of Theorem~\ref{thm:roundtrip-reversibility}. Table~\ref{tab:dw-roundtrip}
uses three independently trained checkpoints per target and fixes the total number of
evaluated NHMC paths within each comparison. Round-trip NHMC-MH improves acceptance
and autocorrelation per transition on both targets. After accounting for its two
paths per transition, it is less force-efficient on DW4 and approximately matched
to independent path-IMH on DW8.

\begin{table}[H]
\centering
\caption{DW4 and DW8 independent path-IMH and round-trip NHMC-MH across three trained
checkpoints per target. These checkpoint ensembles are separate from the selected
main evaluations in Table~\ref{tab:dw-main}; the DW4 ensemble uses 32 proposal
stages and five leapfrog steps per stage. Values are means $\pm$ sample standard deviations. Within
each target, both kernels use the same total number of evaluated NHMC paths; a round-trip transition uses two
paths. Chains start from endpoints generated by a Gaussian base draw and one
forward NHMC path. For every checkpoint, 32 chains retain 2000 path-IMH
transitions after discarding 400, or 1000 round-trip transitions after
discarding 200; both protocols therefore evaluate the same total number of
NHMC paths. Mode masses use all retained endpoints. The reported
$\tau_{\rm int}(x_0)$ is the median across chains, using an FFT autocorrelation
estimate with a self-consistent window and a maximum lag of one quarter of the
retained chain length. The force-cost column multiplies this value by the number
of target-force evaluations per transition.}
\label{tab:dw-roundtrip}
\small
\resizebox{\linewidth}{!}{%
\begin{tabular}{@{}llrrrrrr@{}}
\toprule
Target & Kernel & Paths / transition & Acceptance & $\tau_{\rm int}(x_0)$ &
Path-cost $\tau_{\rm int}$ & Force-cost $\tau_{\rm int}$ & Mode $L_1$ \\
\midrule
DW4 & Independent path-IMH & 1 & $0.3278\pm0.0025$ & $2.19\pm0.09$ & $2.19\pm0.09$ & $421\pm18$ & $0.0050\pm0.0016$ \\
DW4 & Round-trip NHMC-MH & 2 & $0.3537\pm0.0034$ & $1.74\pm0.05$ & $3.49\pm0.10$ & $670\pm19$ & $0.0109\pm0.0055$ \\
DW8 & Independent path-IMH & 1 & $0.1262\pm0.0011$ & $9.08\pm0.86$ & $9.08\pm0.86$ & $1090\pm103$ & $0.0480\pm0.0107$ \\
DW8 & Round-trip NHMC-MH & 2 & $0.1587\pm0.0045$ & $4.46\pm0.25$ & $8.91\pm0.51$ & $1070\pm61$ & $0.0410\pm0.0089$ \\
\bottomrule
\end{tabular}}
\end{table}

\begin{figure}[H]
\centering
\includegraphics[width=0.88\linewidth]{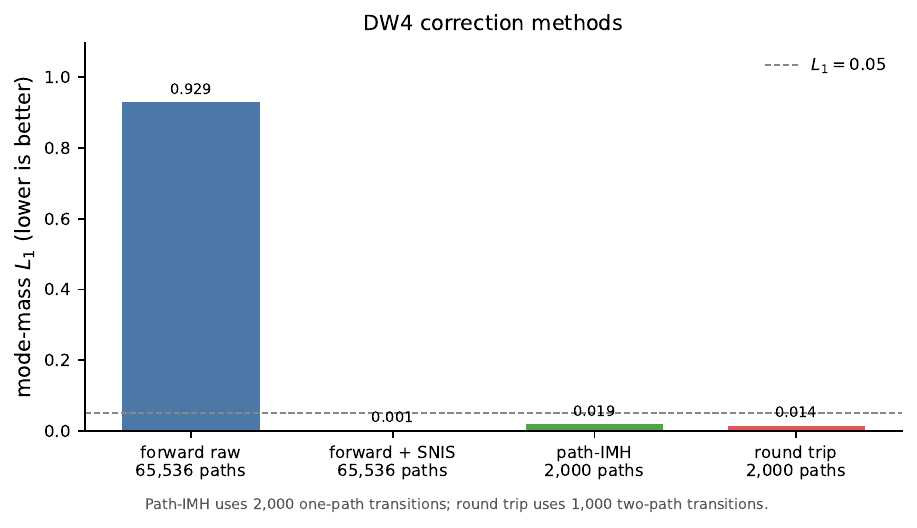}
\caption{DW4 mode-mass error under four statistically valid uses of the NHMC
proposal. Raw and path-SNIS estimates share 65536 forward paths. Independent
path-IMH uses 2000 one-path transitions; round-trip NHMC-MH uses 1000
two-path transitions, so the two chains have the same total path budget. The
forward-path work is not reused as an importance weight after a round-trip
transition.}
\label{fig:dw4-correction-matrix}
\end{figure}

\begin{table}[H]
\centering
\caption{Bidirectional path bounds on analytically normalized double-well
targets. The path evidence lower bound is
$-\mathbb E_{Q_\theta}[W_\theta]$, and the path evidence upper bound is
$-\mathbb E_{P_\theta}[W_\theta]$. Entries are means and standard errors over
three trained checkpoints, using 50000 forward and 50000 reverse paths per
checkpoint.}
\label{tab:dw-path-bounds}
\small
\begin{tabular}{lrrr}
\toprule
Target & path ELBO & exact $\log Z$ & path EUBO \\
\midrule
DW4 & $19.410\pm0.010$ & $20.587$ & $21.579\pm0.011$ \\
DW8 & $38.668\pm0.010$ & $41.174$ & $43.385\pm0.013$ \\
\bottomrule
\end{tabular}
\end{table}

The inequalities in Table~\ref{tab:dw-path-bounds} follow directly from
\begin{equation}
    -\mathbb E_{Q_\theta}[W_\theta]\le \log Z
    \le -\mathbb E_{P_\theta}[W_\theta],
\end{equation}
because the two gaps are respectively
$\mathrm{KL}(Q_\theta\Vert P_\theta)$ and
$\mathrm{KL}(P_\theta\Vert Q_\theta)$. Both analytically known normalizers lie
inside the measured bidirectional path bounds.

DW2 makes the path dynamics explicit: the best training ESS is about
$43\%$ and both sign modes are covered. DW4 is the main neural
Hamiltonian-path DW result. DW8 still covers all $16$ sign modes and has a
small observed log-normalizer error in the selected evaluation; its low ESS makes it a scaling and
hyperparameter-sensitivity probe.

\begin{figure}[H]
\centering
\includegraphics[width=0.82\linewidth]{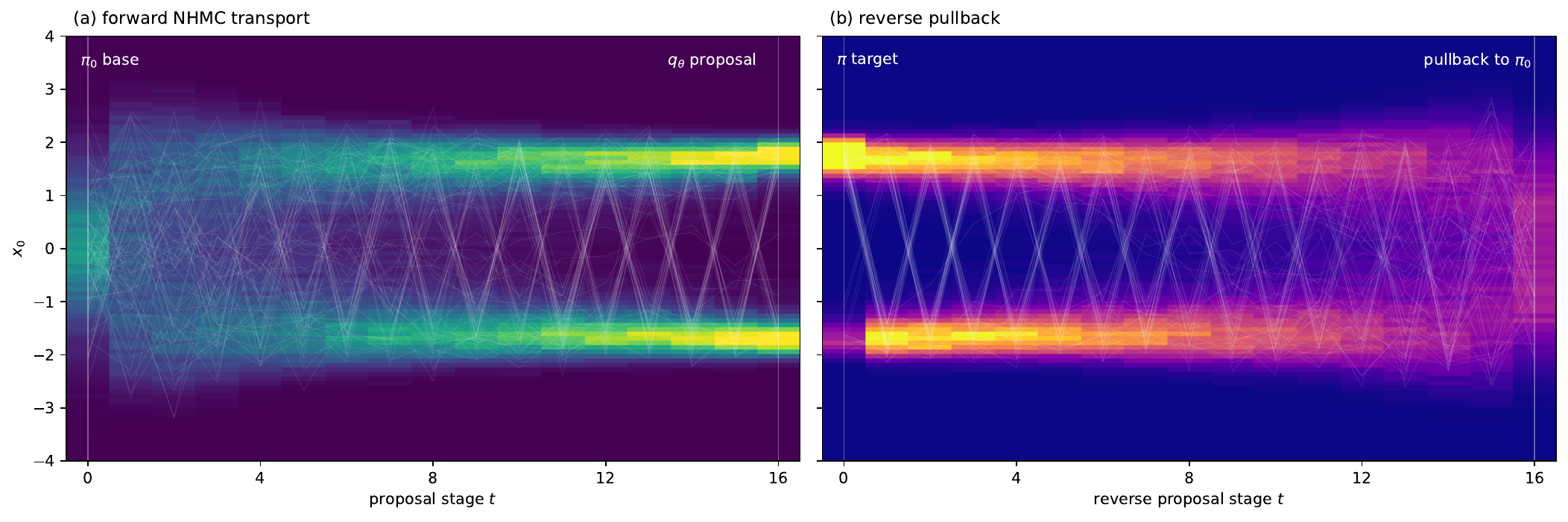}
\caption{DW2 neural NHMC time trace. The left panel follows real forward
trajectories from the Gaussian base through the learned Hamiltonian path,
with proposal stage $t$ on the horizontal axis and the double-well coordinate
$x_0$ on the vertical axis. The right panel applies the learned reverse path to exact
target samples and traces the pullback toward the base. For this fixed proposal,
path-SNIS ESS is $43.4\%$ and the reverse pullback standard deviation is $1.16$.}
\label{fig:dw2-neural-transport}
\end{figure}

\begin{figure}[H]
\centering
\includegraphics[width=0.88\linewidth]{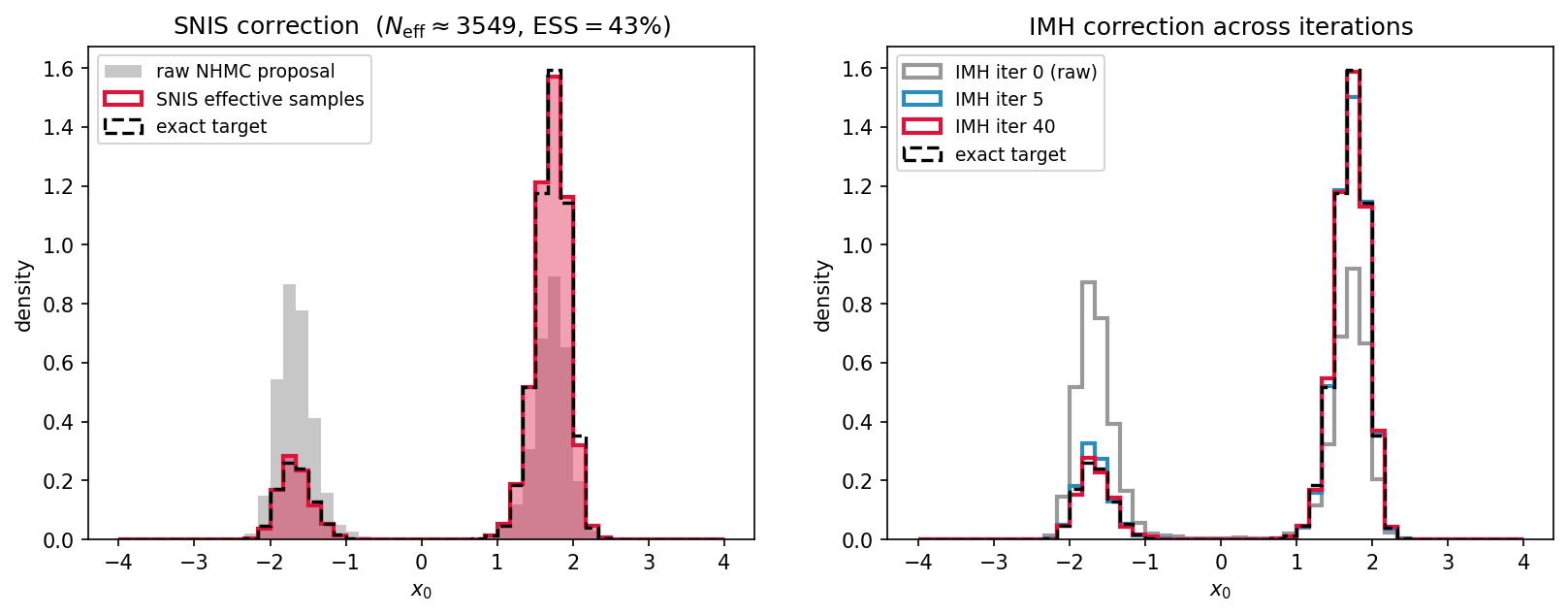}
\caption{DW2 correction on the asymmetric two-well target. SNIS and IMH correction recover the intentionally asymmetric target masses; the right mode is heavier because the target factor includes the linear tilt $0.5x$.}
\label{fig:dw2-neural-correction}
\end{figure}

\subsection{Rotated DW8 Endpoint-Density Ablation}
The native many-well target factorizes across coordinate pairs, so a factorized
proposal can be unusually favorable. To remove that coordinate advantage, we
rotate the DW8 target by a fixed orthogonal map $R$,
\begin{equation}
    \gamma_R(x)=\gamma(xR),
\end{equation}
which preserves the analytic log normalizer and target sampler but destroys
coordinate-wise factorization in the observed coordinates. We then train a
global affine proposal map on top of the many-well base proposal with a rotation
curriculum. This endpoint-density ablation asks whether proposal training
restores concentrated correction weights on a coupled version of the same
Boltzmann target.

\begin{figure}[H]
\centering
\includegraphics[width=0.96\linewidth]{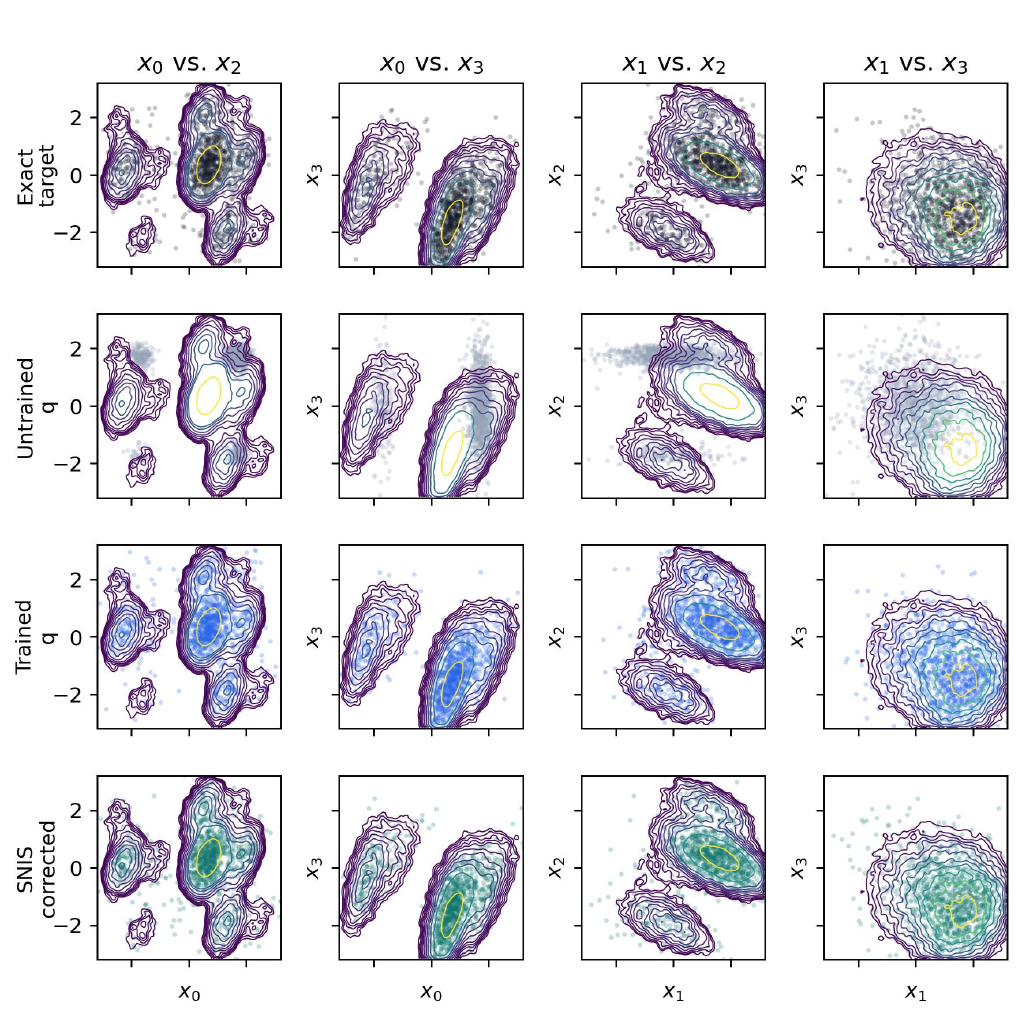}
\caption{Rotated DW8 pairwise marginal coverage for the endpoint-density
ablation. Contours are empirical pairwise target marginals from the analytic
target sampler. The untrained factorized proposal is misaligned in the rotated
coordinates; the trained proposal and its SNIS-resampled visual set track the
target marginals under the same correction weights.}
\label{fig:rotated-dw-coverage}
\end{figure}

\begin{figure}[H]
\centering
\includegraphics[width=0.96\linewidth]{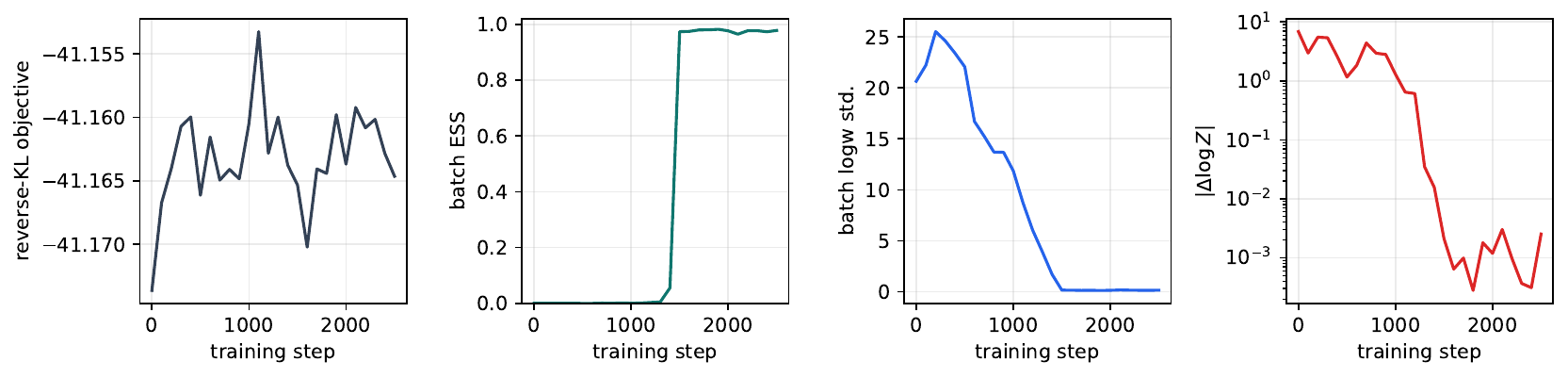}
\caption{Rotated DW8 endpoint-density training trajectory. The curriculum reaches the final rotated target at step 1500; after training the batch ESS stabilizes near one and the log-weight spread collapses.}
\label{fig:rotated-dw-training}
\end{figure}

\subsection{Action Overlap for Rotated DW8}
The rotated-DW8 action-overlap check uses one fixed $N=50000$ evaluation. The
factorized proposal provides the pre-training baseline and the trained affine
proposal the learned map.
The target action is $-\log\gamma_R(x)$. Training changes the action overlap
from $1.46\%$ to $99.56\%$, collapses the log-weight spread from $19.97$ to
$0.157$, and gives $96.67\%$ ESS and $91.58\%$ stationary IMH acceptance. The
SNIS-resampled visual set has $99.38\%$ action overlap and small mode-mass error.

\begin{table}[H]
\centering
\caption{Unified rotated-DW8 endpoint-density evaluation with $N=50{,}000$. The untrained $q_0$ row is a before-training control. Action overlap is the common histogram mass between proposal and exact target actions; the SNIS-resampled row visualizes the weighted empirical measure.}
\label{tab:rotated-dw-action-overlap}
\small
\resizebox{\linewidth}{!}{%
\begin{tabular}{lrrrrrr}
\toprule
Sample set & action overlap & mode $L_1$ & ESS & logw std & $|\Delta\log Z|$ & IMH acc. \\
\midrule
untrained $q_0$ & 1.46\% & 1.413 & 0.02\% & 19.966 & 5.3 & $<0.01\%$ \\
trained $q_\theta$ & 99.56\% & 0.011 & 96.67\% & 0.157 & 0.000279 & 91.58\% \\
SNIS corrected & 99.38\% & 0.016 & -- & -- & -- & -- \\
\bottomrule
\end{tabular}}
\end{table}

\subsection{LJ13 Conditional-Momentum Ablation}
The momentum-law ablation uses the historical half-depth LJ13 target and is
therefore separate from the full-depth benchmark in
Section~\ref{sec:lj13-results-main}. The target, base distribution, path depth,
optimizer, network backbone, and evaluation pool are held fixed. Only the
conditional momentum laws are changed.

\begin{table}[H]
\centering
\caption{Conditional-momentum ablation on the historical half-depth LJ13
target. All rows use one training seed, 56 stages, 10 leapfrog updates per
stage, and a common 100000-path evaluation. The counted force evaluations per
path include dynamics and the position-dependent force features used by the
active momentum networks. This diagnostic is separate from the full-depth
official-potential comparison in Fig.~\ref{fig:lj13-official-w2}.}
\label{tab:lj13-momentum-ablation}
\small
\resizebox{\linewidth}{!}{%
\begin{tabular}{lrrrrrr}
\toprule
Momentum laws & work std. & path-SNIS ESS & path-IMH acc. &
raw energy-$W_2$ & SNIS energy-$W_2$ & force evals. \\
\midrule
Fixed Gaussian & 18.627 & $0.0104\%$ & $0.21\%$ & 29.667 & 4.916 & 616 \\
Learned forward only & 16.910 & $0.0387\%$ & $0.32\%$ & 23.321 & 4.092 & 672 \\
Learned forward and reverse & 2.050 & $2.5221\%$ & $21.69\%$ & 4.580 & 0.223 & 728 \\
\bottomrule
\end{tabular}}
\end{table}

The learned forward-and-reverse model gives the narrowest work distribution,
the largest path-SNIS ESS and path-IMH acceptance, and the smallest corrected
energy-$W_2$ in this ablation.

\subsection{Molecular Prior-Action Score Summaries}
The molecular rows in Table~\ref{tab:protein-nhmc-results} use learned-force
proposal evaluations and the prior-action scores defined in
Appendix~\ref{app:experimental-details}. All $50000$ scores are finite for
both Ala4 and Ala6. Table~\ref{tab:protein-nhmc-results} gives score ESS,
log-score spread, and common mass between the weighted energy histogram and
the molecular-dynamics reference histogram.
Score ESS measures concentration in the full recorded prior-action score,
whereas FES overlap measures only low-dimensional torsion marginals; the two
summaries need not vary monotonically.

For Ala4 and Ala6, torsion records are available for the same $N=50000$
NHMC evaluations used in Table~\ref{tab:protein-nhmc-results}. The panels use
the corresponding normalized prior-action scores and compare weighted two-dimensional torsion
histograms against $N=50000$ molecular-dynamics reference ensembles.

\begin{figure}[H]
\centering
\includegraphics[width=0.92\linewidth]{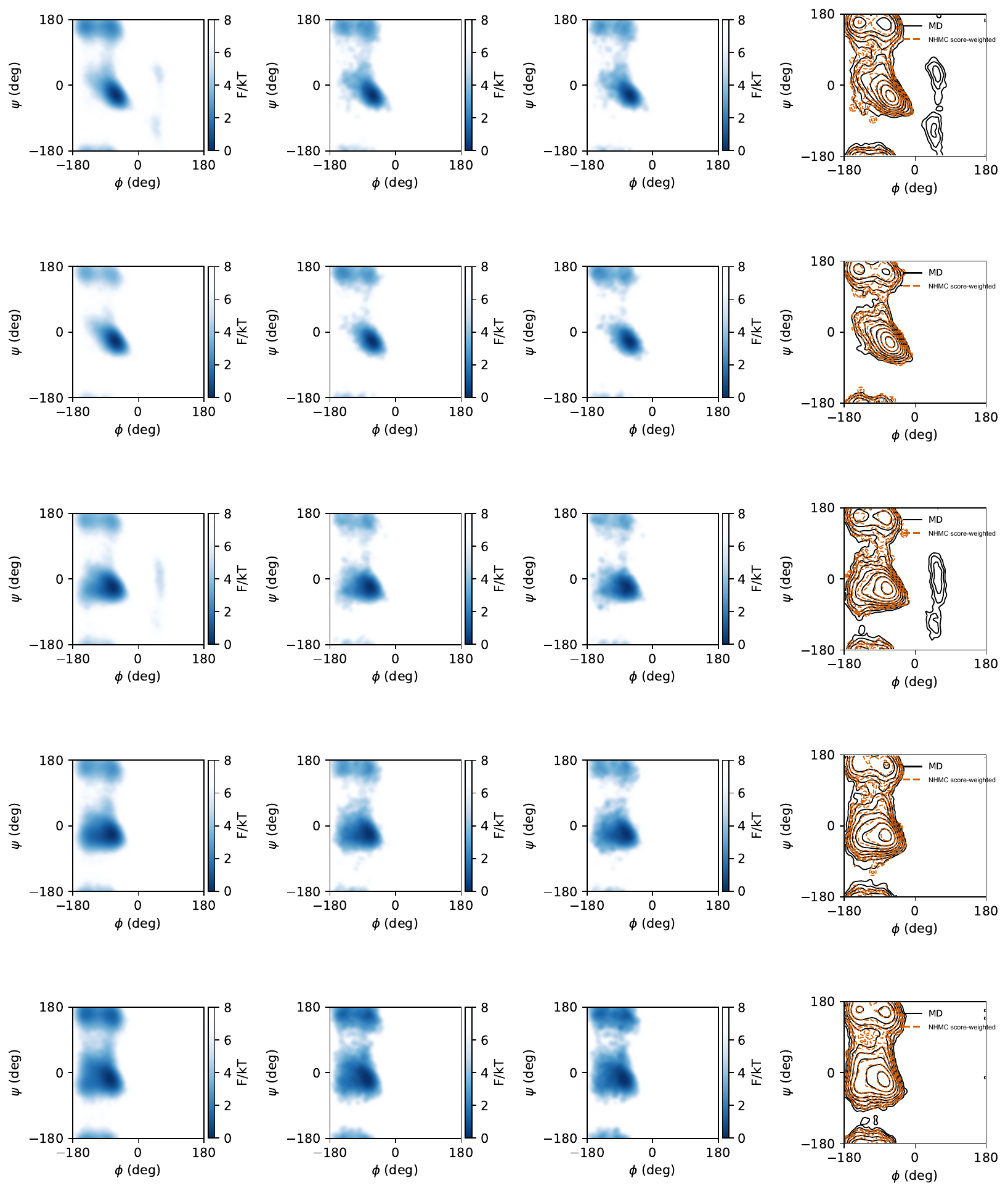}
\caption{Ala6 torsion free-energy surfaces from the prior-action-score-weighted
NHMC ensemble and the molecular-dynamics reference. The prior-action score ESS
is $11.00\%$, the score-weighted energy overlap is $93.30\%$, and the aggregate
FES overlap is $0.889$. Each ensemble uses $N=50000$ configurations.}
\label{fig:protein-fes-ala6}
\end{figure}

\begin{figure}[H]
\centering
\includegraphics[width=0.98\linewidth]{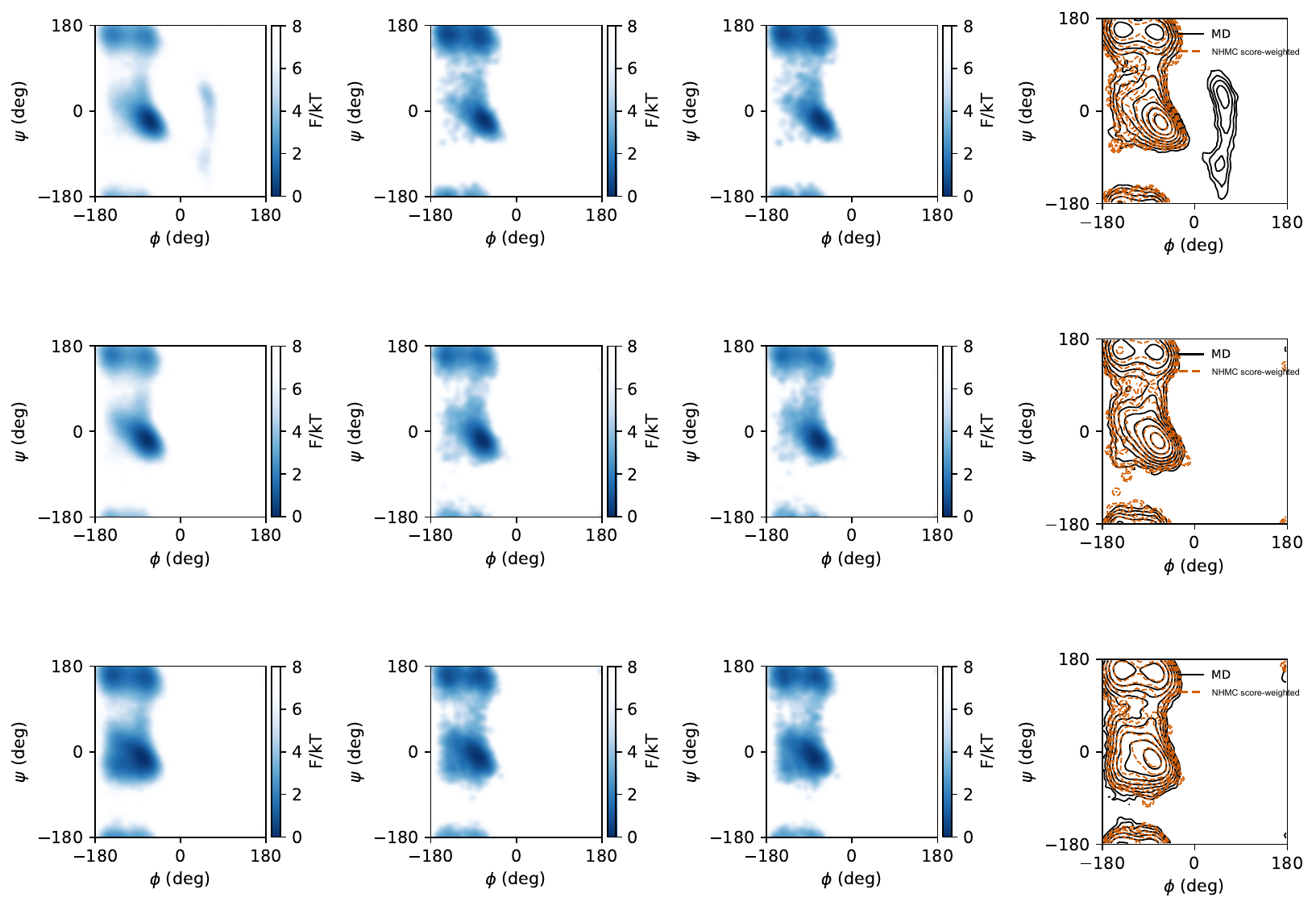}
\caption{Ala4 torsion free-energy surfaces from the prior-action-score-weighted
NHMC ensemble and the molecular-dynamics reference. The prior-action score ESS
is $39.53\%$, the score-weighted energy overlap is $92.76\%$, and the aggregate
FES overlap is $0.805$. Each ensemble uses $N=50000$ configurations.}
\label{fig:protein-fes-ala4}
\end{figure}

\subsection{\texorpdfstring{Finite-Volume $\phi^4$ Details}{Finite-Volume phi4 Details}}
Table~\ref{tab:phi4-appendix-summary} summarizes the two-dimensional
$8\times8$ $\kappa$ scan and the smaller $4\times4$ correction regime.

\begin{table}[H]
\centering
\caption{Finite-volume $\phi^4$ NHMC-SNIS estimates.}
\label{tab:phi4-appendix-summary}
\small
\resizebox{\linewidth}{!}{%
\begin{tabular}{llll}
\toprule
Lattice & Setting & Main quantities & ESS fraction \\
\midrule
$8\times8$ & $\kappa$ scan, $N=102400$ & weighted magnetization, susceptibility, and Binder cumulant & $0.054$--$0.599$ \\
$4\times4$ & same $\kappa$ range & small-volume correction regime & $0.36$--$0.83$ \\
\bottomrule
\end{tabular}}
\end{table}

The zero-field scalar action is invariant under the global sign flip
$\phi_x\mapsto-\phi_x$. In finite volume this means the signed magnetization
$M=V^{-1}\sum_x\phi_x$ cancels under sector-balanced target
sampling, giving $\langle M\rangle=0$. The main text therefore uses
$|M|$, $M^2$, susceptibility, and Binder-cumulant quantities.
Table~\ref{tab:phi4-z2-sector} gives sign-sector balance for the $8\times8$
scan. It pairs the proposal sign fraction with the
SNIS-corrected observable estimates and ESS, quantifying finite-budget sector
coverage.

\begin{table}[H]
\centering
\caption{$Z_2$ sign-sector summary for representative points in the $\phi^4$ $8\times8$ scan. The column $\widehat p_+$ is the SNIS-weighted positive-sign fraction; corrected observables and ESS use the same SNIS weights.}
\label{tab:phi4-z2-sector}
\small
\resizebox{\linewidth}{!}{%
\begin{tabular}{lllllll}
\toprule
$\kappa$ & $N$ & $\widehat p_+$ & $|\widehat p_+-1/2|$ & SNIS $\langle |M|\rangle$ & SNIS $\chi$ & ESS$/N$ \\
\midrule
0.20 & 102400 & 0.5009 & $9.0\times10^{-4}$ & 0.147 & 0.777 & 0.599 \\
0.27 & 102400 & 0.5038 & $3.8\times10^{-3}$ & 0.881 & 9.734 & 0.054 \\
0.30 & 102400 & 0.4917 & $8.3\times10^{-3}$ & 2.087 & 1.554 & 0.065 \\
\bottomrule
\end{tabular}}
\end{table}

\begin{figure}[H]
\centering
\includegraphics[width=0.98\linewidth]{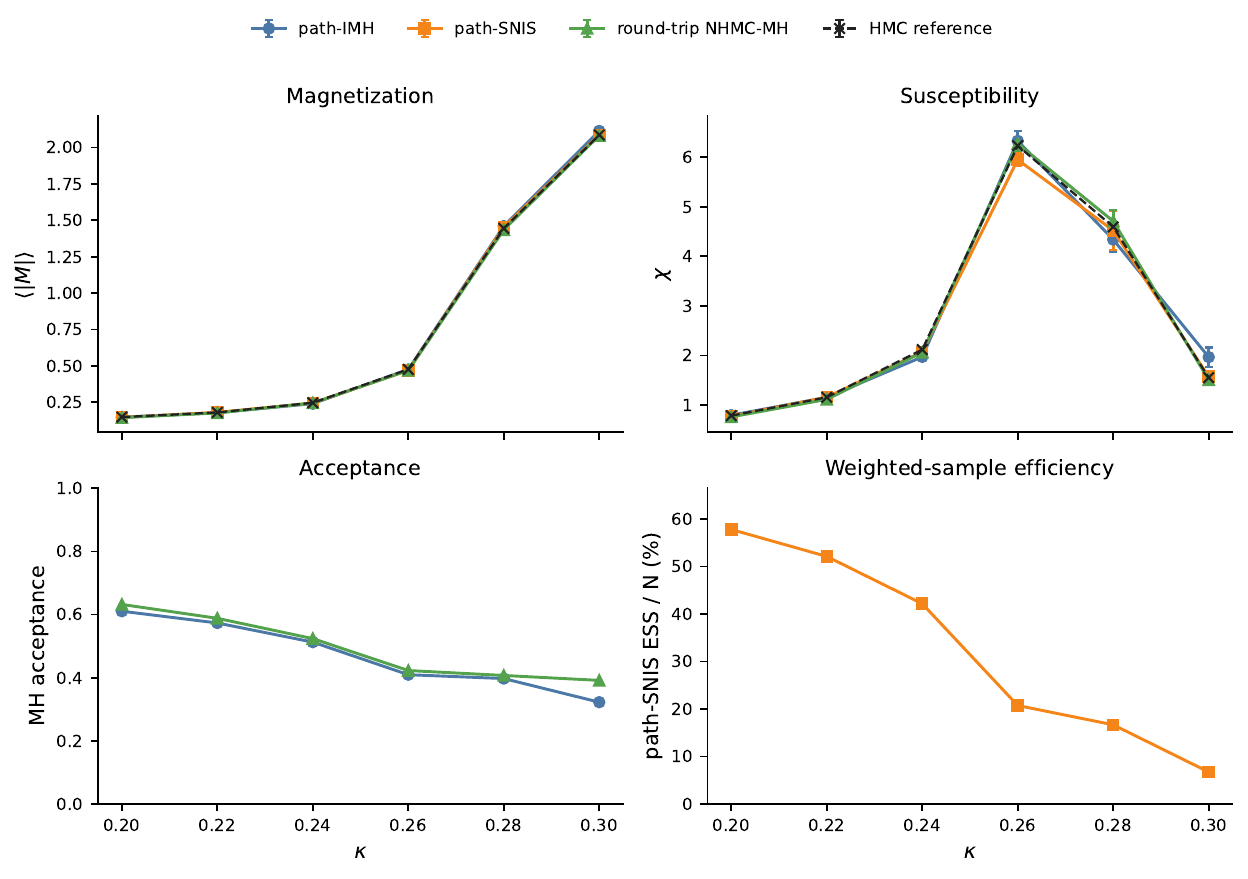}
\caption{Three correction modes from the same $\kappa$-conditional
$8\times8$ $\phi^4$ proposal. Path-SNIS uses 12000 forward paths per coupling;
path-IMH and round-trip NHMC-MH each report 10000 post-burn states from 11000
transitions. Error bars use weighted estimates for path-SNIS and trace-aware
estimates for the two chains. The HMC curve is the independent
high-statistics reference, not a matched-cost baseline. The lower panels show
method-specific acceptance and path-SNIS ESS rather than a common efficiency
measure.}
\label{fig:phi4-three-corrections}
\end{figure}

The $8\times8$ $|M|$ autocorrelation comparison uses one HMC chain, one
independent NHMC path-IMH chain, and one round-trip NHMC-MH chain at
$\kappa=0.2705$. The HMC and path-IMH chains each have 5000 saved states and
discard the first 200. The round-trip chain is initialized by drawing from the
Gaussian base and running one forward NHMC path; it then runs for 10000
transitions and discards the first 1000. The comparison uses the first 4800
post-burn values from each chain with the same dot-normalized autocorrelation
estimator. The integrated autocorrelation times per saved transition are
$19.95$, $8.94$, and $2.57$, respectively. Because a round-trip transition
uses two paths, its path-cost-normalized value is $5.15$.
Using all 9000 post-burn round-trip states gives
$\langle|M|\rangle=0.9066\pm0.0100$, $\chi=9.98\pm0.32$, and
$U_B=0.5099\pm0.0060$. The corresponding 198000-state HMC reference values,
computed with the same observable definitions, are $0.9068\pm0.0068$,
$9.71\pm0.12$, and $0.5097\pm0.0036$.
This 198000-state reference is a separate long single-chain HMC run from the
60-chain grid used in Fig.~\ref{fig:phi4-results}.
Table~\ref{tab:phi4-matched-chain-summary} gives sector and autocorrelation
summaries for all three matched traces.

\begin{table}[H]
\centering
\caption{$\phi^4$ $8\times8$ matched single-chain summaries at
$\kappa=0.2705$. Each row uses 4800 post-burn states. Acceptance is measured per
method-specific transition and is not a force-cost-matched comparison.}
\label{tab:phi4-matched-chain-summary}
\small
\begin{tabular}{lrrrr}
\toprule
Chain & Acceptance & $p_+$ & Sector changes & $\tau_{\rm int}(|M|)$ \\
\midrule
HMC & $98.50\%$ & 0.507 & 80 & 19.95 \\
Independent path-IMH & $36.67\%$ & 0.502 & 839 & 8.94 \\
Round-trip NHMC-MH & $38.92\%$ & 0.465 & 782 & 2.57 \\
\bottomrule
\end{tabular}
\end{table}

\begin{figure}[H]
\centering
\includegraphics[width=0.82\linewidth]{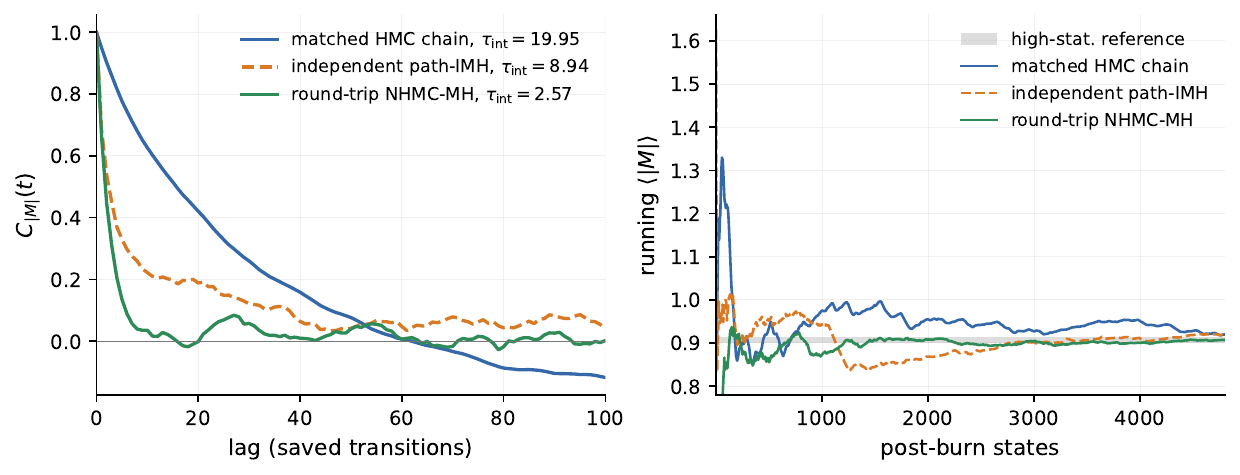}
\caption{$\phi^4$ $8\times8$ single-chain comparison for $|M|$ at
$\kappa=0.2705$. HMC, independent path-IMH, and round-trip NHMC-MH each use
4800 post-burn states. The round-trip chain begins from an NHMC forward endpoint;
the HMC chain is used only as an independent reference. The left panel compares
autocorrelation per saved transition; the right panel shows running means against
the high-statistics HMC reference. A round-trip transition evaluates two NHMC paths.}
\label{fig:phi4-l8-k02705-singlechain-acf}
\end{figure}

\subsection{Compact \texorpdfstring{$U(1)$}{U(1)} Round-Trip Pilot}
We applied the shared-bridge round-trip construction to the trained compact
$U(1)$ proposal on an $8\times8$ lattice. Each chain is initialized by an
independent draw from the training base law followed by one forward NHMC path;
no HMC or target-state warm start is used. The inverse-map and momentum-reversal
residuals are below $4\times10^{-15}$, and the direct augmented-density ratio
agrees with the work-difference expression to $2.3\times10^{-13}$. These checks
verify the implemented transition and acceptance ratio independently of its
sampling efficiency.

\begin{table}[H]
\centering
\caption{Compact $U(1)$ round-trip NHMC-MH pilot on an $8\times8$ lattice.
Each chain starts from an independent draw from the training base law followed
by one forward NHMC path. The $\beta=4$ row uses 2000 transitions per chain and
discards 400; the $\beta=8$ row uses 5000 transitions per chain and discards 1000.
Uncertainties are standard errors across chains. The final column counts
rounded-sector changes over all post-discard traces.}
\label{tab:u1-roundtrip-pilot}
\small
\resizebox{\linewidth}{!}{%
\begin{tabular}{@{}lrrrrrr@{}}
\toprule
$\beta$ & Acceptance & Plaquette & $W_{22}$ & $W_{33}$ & Max. ratio error & Sector changes \\
\midrule
4 & $1.05\%$ & $0.8618\pm0.0022$ & $0.5545\pm0.0107$ & $0.2599\pm0.0182$ & $2.3\times10^{-13}$ & 172 \\
8 & $0.43\%$ & $0.9359\pm0.0010$ & $0.7649\pm0.0042$ & $0.5568\pm0.0106$ & $2.3\times10^{-13}$ & 83 \\
\bottomrule
\end{tabular}}
\end{table}

At both couplings, the plaquette, $W_{22}$, and $W_{33}$ estimates are within one
standard error of their finite-volume character-expansion values: respectively
$(0.86353,0.55610,0.26724)$ at $\beta=4$ and
$(0.93548,0.76809,0.55942)$ at $\beta=8$. Acceptance is nevertheless only
$1.05\%$ and $0.43\%$. The post-discard traces contain 172 and 83 adjacent
rounded-sector changes; at $\beta=8$, 14 of 16 chains change sector at least
once. The low acceptance and limited number of
accepted burn-in moves prevent interpreting these runs as evidence of efficient
equilibration or topology sampling.

\subsection{Two-Dimensional \texorpdfstring{$\mathrm{SU}(2)$}{SU(2)} Learned Check}
\label{app:su2-learned}
The same recorded-work identities apply to a learned-force Hamiltonian proposal
on two-dimensional $\mathrm{SU}(2)$ with Haar base measure. Link variables are
unit quaternions, and the Wilson action used by the implementation is
$S=\beta V(1-P)$ with $P$ the mean plaquette. Because the base is Haar, $\log q_0$
is constant and cancels in the path-SNIS weights and in the round-trip ratio, so
the recorded work is
\begin{equation}
    W=S+\sum_t\bigl(\log q^F_t-\log q^R_t\bigr).
\end{equation}
The two-dimensional plaquette has the closed form
$\langle P\rangle=I_2(\beta)/I_1(\beta)$. At $\beta=2$
this equals $0.43313$. Table~\ref{tab:su2-learned-l6} reports a $6\times6$
learned-force check against that value. The $T=10$ raw proposal is $9.92\%$ low.
Path-SNIS with $N=8192$ paths moves the estimate to $0.4341\pm0.0029$ ($0.21\%$
relative error) with Kish ESS $6.82\%$. The shared-bridge round-trip kernel,
initialized by a short physics HMC run from Haar rather than by a single
forward NHMC path as in the $U(1)$ pilot, accepts $22.6\%$ of proposals and
gives $0.4360\pm0.0041$. Inverse-map and work-ratio residuals remain below
$10^{-12}$. A longer $T=20$ learned path gives a closer SNIS point estimate but
a lower ESS ($3.21\%$); we therefore treat $T=10$ as the headline finite-volume
check. These rows test Haar-base correction on a non-Abelian target. They are
not a volume scan, a Wilson-loop study, or a matched-cost comparison with
heat bath or HMC; independent path-IMH autocorrelation is not reported here.

\begin{table}[H]
\centering
\caption{Two-dimensional $\mathrm{SU}(2)$ learned-NHMC correction check on a
$6\times6$ lattice at $\beta=2$. The analytic plaquette reference is
$I_2(\beta)/I_1(\beta)=0.43313$. Independent estimates use $N=8192$ Haar-base
forward paths. Round-trip NHMC-MH uses eight chains of $300$ transitions with
$60$ discarded; uncertainties are standard errors across chain means.
Path-IMH and heat-bath/HMC autocorrelation comparisons are not included in this
table.}
\label{tab:su2-learned-l6}
\small
\resizebox{\linewidth}{!}{%
\begin{tabular}{@{}llrrrr@{}}
\toprule
Kernel & Estimator & Plaquette & Rel.\ error & ESS$/N$ or acc. & $N_{\mathrm{eff}}$ \\
\midrule
Reference & $I_2/I_1$ & $0.43313$ & -- & -- & -- \\
Learned $T{=}10$ & Raw & $0.39018\pm0.00085$ & $9.92\%$ & -- & -- \\
Learned $T{=}10$ & Path-SNIS & $0.4341\pm0.0029$ & $0.21\%$ & $6.82\%$ & $559$ \\
Learned $T{=}10$ & Round-trip MH & $0.4360\pm0.0041$ & $0.67\%$ & $22.6\%$ acc. & -- \\
Learned $T{=}20$ & Raw & $0.38718\pm0.00090$ & $10.61\%$ & -- & -- \\
Learned $T{=}20$ & Path-SNIS & $0.4333\pm0.0028$ & $0.03\%$ & $3.21\%$ & $263$ \\
Learned $T{=}20$ & Round-trip MH & $0.4341\pm0.0053$ & $0.22\%$ & $14.5\%$ acc. & -- \\
\bottomrule
\end{tabular}}
\end{table}

\subsection{Haar-Base \texorpdfstring{$U(1)$}{U(1)},
\texorpdfstring{$\mathrm{SU}(2)$}{SU(2)} and
\texorpdfstring{$\mathrm{SU}(3)$}{SU(3)} Volume Checks}
\label{app:haar-su2-su3}
The same Wilson convention is used on all three groups,
\begin{equation}
    S[U]=\beta V\bigl(1-P[U]\bigr),
\end{equation}
with $P$ the mean plaquette ($P=\cos\theta_p$ for $U(1)$,
$P=\mathrm{Re}\,\mathrm{Tr}\,U_p/2$ for
$\mathrm{SU}(2)$, and $P=\mathrm{Re}\,\mathrm{Tr}\,U_p/3$ for
$\mathrm{SU}(3)$). The base measure is Haar, so $\log q_0$ is constant and the
recorded work reduces to
\begin{equation}
    W=S+\sum_t\bigl(\log q^F_t-\log q^R_t\bigr).
\end{equation}
Leapfrog uses a learned, momentum-independent force; the Wilson action enters
the training objective through $W$, not as the integrator force. The analytic
single-plaquette references used at the displayed precision are
$\langle P\rangle=I_1(\beta)/I_0(\beta)=0.69777$ for $U(1)$ at $\beta=2$,
$\langle P\rangle=I_2(\beta)/I_1(\beta)=0.43313$ for $\mathrm{SU}(2)$ at
$\beta=2$, and, from the single-plaquette Weyl integral,
$\langle P\rangle=0.35395$ for $\mathrm{SU}(3)$ at $\beta=5$ and $0.42253$ at
$\beta=6$. All NHMC estimators below start from
Haar followed by one forward path; unlike Table~\ref{tab:su2-learned-l6}, there
is no HMC warm start.

Table~\ref{tab:gauge-haar-corrections} collects the raw, path-SNIS,
path-IMH, and round-trip results for all four scans.
Table~\ref{tab:gauge-local-references} gives the corresponding HMC and
heat-bath plaquettes. Path-SNIS moves the raw proposal toward the reference
value, while the Kish ESS and the path-IMH/round-trip acceptances fall with
volume. At $L=4$, increasing the $\mathrm{SU}(3)$ coupling from $\beta=5$ to
$6$ reduces ESS from $5.23\%$ to $0.45\%$, comparable to the $\beta=5$ loss
between $L=4$ and $L=6$.

\begin{table}[H]
\centering
\caption{Haar-base learned-NHMC plaquette estimates. Path-SNIS uses 8192
independent paths; the second line in each corrected entry gives ESS$/N$ or
acceptance. The analytic references are $0.69777$ for $U(1)$ at $\beta=2$,
$0.43313$ for $\mathrm{SU}(2)$ at $\beta=2$, and $0.35395$ and $0.42253$ for
$\mathrm{SU}(3)$ at $\beta=5$ and $6$. $^{\dagger}$The indicated chains fail
$\hat R<1.1$ and are listed for completeness.}
\label{tab:gauge-haar-corrections}
\begingroup
\scriptsize
\setlength{\tabcolsep}{2.5pt}
\begin{tabular}{@{}llrrrr@{}}
\toprule
Target & $L$ & Raw & Path-SNIS & Path-IMH & Round-trip \\
\midrule
$U(1)$, $\beta=2$ & 4 & $0.64144\pm0.00124$ & \shortstack{$0.70270\pm0.00396$\\ESS $4.10\%$} & \shortstack{$0.7031\pm0.0042$\\acc. $18.2\%$} & \shortstack{$0.70013\pm0.00161$\\acc. $24.1\%$} \\
& 6 & $0.65103\pm0.00082$ & \shortstack{$0.69244\pm0.00723$\\ESS $1.76\%$} & \shortstack{$0.6992\pm0.0038$\\acc. $10.7\%$} & \shortstack{$0.69569\pm0.00235$\\acc. $11.5\%$} \\
& 8 & $0.64416\pm0.00061$ & \shortstack{$0.68943\pm0.00612$\\ESS $0.62\%$} & \shortstack{$0.6933\pm0.0083^{\dagger}$\\acc. $4.03\%$} & \shortstack{$0.69678\pm0.00221$\\acc. $3.52\%$} \\
\midrule
$\mathrm{SU}(2)$, $\beta=2$ & 4 & $0.39848\pm0.00117$ & \shortstack{$0.42975\pm0.00183$\\ESS $27.6\%$} & \shortstack{$0.43170\pm0.00128$\\acc. $39.4\%$} & \shortstack{$0.43175\pm0.00297$\\acc. $42.1\%$} \\
& 6 & $0.39075\pm0.00084$ & \shortstack{$0.43134\pm0.00274$\\ESS $5.36\%$} & \shortstack{$0.43159\pm0.00267$\\acc. $21.2\%$} & \shortstack{$0.43658\pm0.00149$\\acc. $22.2\%$} \\
& 8 & $0.35668\pm0.00063$ & \shortstack{$0.41519\pm0.00391$\\ESS $0.57\%$} & \shortstack{$0.42425\pm0.00346$\\acc. $4.03\%$} & \shortstack{$0.42838\pm0.00330$\\acc. $3.41\%$} \\
\midrule
$\mathrm{SU}(3)$, $\beta=5$ & 4 & $0.32557\pm0.00080$ & \shortstack{$0.35277\pm0.00350$\\ESS $5.23\%$} & \shortstack{$0.3623\pm0.0106$\\acc. $21.1\%$} & \shortstack{$0.35721\pm0.00169$\\acc. $25.0\%$} \\
& 6 & $0.30565\pm0.00054$ & \shortstack{$0.34694\pm0.00455$\\ESS $0.41\%$} & \shortstack{$0.3605\pm0.0099^{\dagger}$\\acc. $3.49\%$} & \shortstack{$0.35284\pm0.00462$\\acc. $4.03\%$} \\
& 8 & $0.30842\pm0.00045$ & \shortstack{$0.35959\pm0.00760$\\ESS $0.09\%$} & \shortstack{$0.3457\pm0.0036$\\acc. $2.13\%$} & \shortstack{$0.35047\pm0.00333$\\acc. $2.17\%$} \\
\midrule
$\mathrm{SU}(3)$, $\beta=6$ & 4 & $0.37546\pm0.00075$ & \shortstack{$0.41679\pm0.00336$\\ESS $0.45\%$} & \shortstack{$0.4226\pm0.0041$\\acc. $10.5\%$} & \shortstack{$0.42152\pm0.00231$\\acc. $11.8\%$} \\
& 6 & $0.37898\pm0.00057$ & \shortstack{$0.42092\pm0.00479$\\ESS $0.44\%$} & \shortstack{$0.4367\pm0.0086^{\dagger}$\\acc. $3.64\%$} & \shortstack{$0.42201\pm0.00324$\\acc. $4.97\%$} \\
& 8 & $0.37055\pm0.00049$ & \shortstack{$0.41563\pm0.00334$\\ESS $0.16\%$} & \shortstack{$0.4151\pm0.0069^{\dagger}$\\acc. $1.49\%$} & \shortstack{$0.4319\pm0.0095^{\dagger}$\\acc. $0.57\%$} \\
\bottomrule
\end{tabular}
\endgroup
\end{table}

\begin{table}[H]
\centering
\caption{Local-reference plaquette estimates for the Haar-base scans in
Table~\ref{tab:gauge-haar-corrections}. The $U(1)$ HMC values are from a
separate longer run ($5000$ discarded and $20000$ production transitions);
the remaining HMC and heat-bath values are from the shared evaluation.}
\label{tab:gauge-local-references}
\small
\begin{tabular}{@{}llrrr@{}}
\toprule
Target & $L$ & Analytic reference & HMC & Heat bath \\
\midrule
$U(1)$, $\beta=2$ & 4 & $0.69777$ & $0.69878\pm0.00049$ & $0.69975\pm0.00043$ \\
& 6 & $0.69777$ & $0.69746\pm0.00019$ & $0.69856\pm0.00030$ \\
& 8 & $0.69777$ & $0.69738\pm0.00027$ & $0.69760\pm0.00021$ \\
\midrule
$\mathrm{SU}(2)$, $\beta=2$ & 4 & $0.43313$ & $0.43210\pm0.00123$ & $0.43324\pm0.00049$ \\
& 6 & $0.43313$ & $0.43182\pm0.00033$ & $0.43323\pm0.00031$ \\
& 8 & $0.43313$ & $0.43388\pm0.00033$ & $0.43282\pm0.00013$ \\
\midrule
$\mathrm{SU}(3)$, $\beta=5$ & 4 & $0.35395$ & $0.35374\pm0.00074$ & $0.35399\pm0.00021$ \\
& 6 & $0.35395$ & $0.35452\pm0.00020$ & $0.35385\pm0.00013$ \\
& 8 & $0.35395$ & $0.35391\pm0.00036$ & $0.35398\pm0.00021$ \\
\midrule
$\mathrm{SU}(3)$, $\beta=6$ & 4 & $0.42253$ & $0.42308\pm0.00048$ & $0.42262\pm0.00021$ \\
& 6 & $0.42253$ & $0.42236\pm0.00023$ & $0.42262\pm0.00015$ \\
& 8 & $0.42253$ & $0.42232\pm0.00023$ & $0.42256\pm0.00013$ \\
\bottomrule
\end{tabular}
\end{table}

Table~\ref{tab:gauge-tau-int} collects the plaquette integrated
autocorrelation times that these tables omit, together with the between-chain
convergence check that decides which of them may be compared. On the eight
points where both learned chains satisfy $\hat R<1.1$, round-trip NHMC-MH
decorrelates faster than independent path-IMH without exception, by factors of
$1.7$ to $25.9$. Because one round-trip transition evaluates two NHMC paths
against one for path-IMH, the cost-matched statement is that ratio against
$2$, which still favours round-trip everywhere except $\mathrm{SU}(2)$ at
$L=4$.

The four points that fail the check are informative in their own right. Three
are path-IMH failures at volumes where round-trip converges, so the table
understates the difference rather than inflating it. The fourth,
$\mathrm{SU}(3)$ at $\beta=6$ and $L=8$, is the only place where round-trip
itself stops mixing; it is also the largest volume at the strongest coupling
in the scan, and its round-trip acceptance of $0.57\%$ is the lowest we
record. Together these locate a boundary rather than a trend: inside it
round-trip is the better of the two corrections, and beyond it neither is
usable. Nothing in the scan shows round-trip degrading into a regime where
path-IMH still works.

Neither learned chain competes with the local references, whose
$\tau_{\mathrm{int}}$ stays near unity on every volume. We report these
numbers as a comparison between correction schemes, not as an efficiency
claim against heat bath or HMC.

\begin{table}[H]
\centering
\caption{Integrated autocorrelation time of the plaquette, in units of one
saved transition, for the two learned chains of
Tables~\ref{tab:gauge-haar-corrections} and
\ref{tab:gauge-local-references}, with the
HMC ($n_{\mathrm{lf}}=10$, $\varepsilon=0.08$) and heat-bath references.
Estimates use the Sokal automatic window. Entries marked $^{\dagger}$ fail the
between-chain convergence check $\hat R<1.1$ and are listed for completeness
only; the ratio column is left blank there, since an autocorrelation time
measured on chains that have not mixed is not a mixing rate.
The $U(1)$ HMC column is the longer re-run described in
Table~\ref{tab:gauge-local-references}; the others are same-evaluation.
On the eight points where both learned chains converge, round-trip NHMC-MH
decorrelates faster than independent path-IMH without exception, by factors of
$1.7$ to $25.9$. One round-trip transition evaluates two NHMC paths against one
for path-IMH, so the cost-matched comparison is that ratio against $2$: it
favours round-trip everywhere except $\mathrm{SU}(2)$ at $L=4$, where the two
are within cost. The failures are asymmetric. Path-IMH misses the threshold at
three points where round-trip does not, and the single point where round-trip
also fails, $\mathrm{SU}(3)$ at $\beta=6$ and $L=8$, is the largest volume at
the strongest coupling we ran. Both learned chains stay slower per transition
than the local references, whose $\tau_{\mathrm{int}}$ is near unity on every
volume; these kernels are corrected, not faster.}
\label{tab:gauge-tau-int}
\small
\begin{tabular}{@{}llrrrrrr@{}}
\toprule
Group & $\beta$ & $L$ & Path-IMH & Round-trip &
$\tau^{\mathrm{IMH}}/\tau^{\mathrm{RT}}$ & HMC & Heat bath \\
\midrule
$U(1)$ & $2$ & $4$ & $10.1$ & $3.6$ & $2.8$ & $1.60$ & $0.63$ \\
$U(1)$ & $2$ & $6$ & $29.0$ & $8.2$ & $3.5$ & $1.45$ & $0.60$ \\
$U(1)$ & $2$ & $8$ & $112.3^{\dagger}$ & $26.2$ & --- & $1.46$ & $0.59$ \\
\midrule
$\mathrm{SU}(2)$ & $2$ & $4$ & $3.5$ & $2.1$ & $1.7$ & $0.93$ & $0.50$ \\
$\mathrm{SU}(2)$ & $2$ & $6$ & $9.1$ & $4.4$ & $2.1$ & $0.96$ & $0.52$ \\
$\mathrm{SU}(2)$ & $2$ & $8$ & $93.6$ & $43.9$ & $2.1$ & $0.98$ & $0.51$ \\
\midrule
$\mathrm{SU}(3)$ & $5$ & $4$ & $80.4$ & $3.1$ & $25.9$ & $0.90$ & $0.50$ \\
$\mathrm{SU}(3)$ & $5$ & $6$ & $167.6^{\dagger}$ & $36.1$ & --- & $0.93$ & $0.50$ \\
$\mathrm{SU}(3)$ & $5$ & $8$ & $98.7$ & $41.5$ & $2.4$ & $0.96$ & $0.53$ \\
\midrule
$\mathrm{SU}(3)$ & $6$ & $4$ & $43.4$ & $8.3$ & $5.2$ & $0.88$ & $0.52$ \\
$\mathrm{SU}(3)$ & $6$ & $6$ & $152.7^{\dagger}$ & $23.9$ & --- & $0.93$ & $0.53$ \\
$\mathrm{SU}(3)$ & $6$ & $8$ & $190.7^{\dagger}$ & $176.3^{\dagger}$ & --- & $1.00$ & $0.50$ \\
\bottomrule
\end{tabular}
\end{table}

\paragraph{Gauge-equivariant architecture.}
The force and the forward/backward momentum means are
gauge-equivariant drift networks. Each call first builds shared plaquettes and
staples. From those it extracts gauge-invariant scalars---seven for
$\mathrm{SU}(2)$ ($\mathrm{Re}\,\mathrm{Tr}(P)/2$, $|\mathrm{Im}(P)|^2$,
$\mathrm{Re}\,\mathrm{Tr}(P^2)/2$, and per-direction
$\mathrm{Re}\,\mathrm{Tr}(US)/2$, $|\mathrm{Im}(US)|^2$) and eleven for
$\mathrm{SU}(3)$ (real and imaginary traces of $P$ and $P^2$, $|{\rm Tr}(P)|^2$,
and the analogous $US$ traces). A three-block cyclic CNN with channels
$48/48$, $3\times3$ kernels, circular padding, GroupNorm, SiLU, and a width-$128$
sinusoidal time embedding maps those scalars to four coefficients per link.
The Lie-algebra output is the coefficient-weighted sum of four covariant
bases: the Wilson-force direction $\mathrm{Im}(US)$ (or its traceless
anti-Hermitian projection), the local plaquette, the transported plaquette
$U P_{x+\mu}U^\dagger$, and $P^2$. Under a gauge transformation
$U_{x,\mu}\mapsto g(x)U_{x,\mu}g(x+\hat\mu)^\dagger$ the output transforms in
the adjoint at $x$. Forward and reverse momentum scales are separate invariant
CNNs (two convolution blocks) with the same scalar features. The learned
kernel therefore contains three equivariant drift nets (force, $\mu^F$,
$\mu^R$) and two invariant scale nets, together $243{,}169$ parameters for
$\mathrm{SU}(2)$ and $251{,}809$ for $\mathrm{SU}(3)$. Table~\ref{tab:gauge-equiv-architecture}
summarizes the layout. All lattice tensors are float64;
$\mathrm{SU}(3)$ links are stored as complex $3\times3$ matrices.

\begin{table}[H]
\centering
\caption{Gauge-equivariant learned-force kernel used for the Haar
$\mathrm{SU}(2)$ and $\mathrm{SU}(3)$ runs. The same time embedding is shared
by the force net and the forward/backward momentum nets. Parameter counts are
from the trained checkpoints.}
\label{tab:gauge-equiv-architecture}
\small
\begin{tabular}{@{}lll@{}}
\toprule
 & $\mathrm{SU}(2)$ & $\mathrm{SU}(3)$ \\
\midrule
Links & unit quaternions & $\mathrm{SU}(3)$ matrices \\
Lie-algebra force & $\mathfrak{su}(2)\simeq\mathbb{R}^3$ & $\mathfrak{su}(3)\simeq\mathbb{R}^8$ \\
Site invariants & $3$ & $5$ \\
Link invariants & $2$ per direction & $3$ per direction \\
Total scalar features & $7$ & $11$ \\
Covariant bases / link & $4$ & $4$ \\
CNN & $3\times$ cyclic $3\times3$, channels $48/48$ & same \\
Time embedding & sinusoidal, width $128$ & same \\
Networks & force; fwd/bwd $\mu$; fwd/bwd $\sigma$ & same \\
Parameters & $243{,}169$ & $251{,}809$ \\
Path stages $T$ / leapfrog per stage & $10$ / $5$ & $10$ / $5$ \\
Batch / epochs & $16$ / $8000$ & $8$ / $8000$ \\
Objective & $\mathbb{E}[W]+0.01\,\mathrm{Var}(W)$ & same \\
\bottomrule
\end{tabular}
\end{table}

\paragraph{Training cost of equivariance.}
Each path has $T=10$ stages and $n_{\mathrm{lf}}=5$ leapfrog steps per stage.
The implementation evaluates the force network $n_{\mathrm{lf}}+1=6$ times per
stage, and each evaluation recomputes plaquettes, staples, and the four
covariant bases before the CNN. The momentum nets are called once per stage at
the endpoints. Training differentiates through that whole path in float64.
Table~\ref{tab:gauge-equiv-train-cost} records the observed wall-clock cost:
about $1.7$--$4.2$ seconds per epoch for $\mathrm{SU}(2)$ (batch $16$) and
$2.6$--$6.6$ seconds per epoch for $\mathrm{SU}(3)$ (batch $8$). The networks
have $0.24$--$0.25$M parameters, but each leapfrog step rebuilds staples and
four covariant bases. A non-equivariant CNN of the same width and leapfrog
depth was not trained, so these hours are not a matched FLOP ratio. The open
question is the per-epoch cost of the equivariant kernel versus that baseline:
that ratio belongs in the training budget.

\begin{table}[H]
\centering
\caption{Observed training cost of the equivariant Haar kernels, $8000$ epochs
on one GPU. Seconds per epoch are wall time divided by $8000$. Samples/s is
training configurations per second. The $\mathrm{SU}(2)$ $6\times6$ job is an
earlier run on a different device, so the $L$ trend is not a FLOP
measurement. A matched non-equivariant CNN was not trained.}
\label{tab:gauge-equiv-train-cost}
\small
\begin{tabular}{@{}llrrrrr@{}}
\toprule
Group & $L$ & Batch & Wall (h) & s/epoch & Samples/s & Peak mem (MB) \\
\midrule
$\mathrm{SU}(2)$ & $4$ & $16$ & $3.81$ & $1.72$ & $9.32$ & $223$ \\
$\mathrm{SU}(2)$ & $6$ & $16$ & $9.41$ & $4.24$ & $3.78$ & $327$ \\
$\mathrm{SU}(2)$ & $8$ & $16$ & $4.13$ & $1.86$ & $8.61$ & $579$ \\
$\mathrm{SU}(3)$ & $4$ & $8$ & $5.71$ & $2.57$ & $3.11$ & $148$ \\
$\mathrm{SU}(3)$ & $6$ & $8$ & $14.7$ & $6.62$ & $1.21$ & $284$ \\
\bottomrule
\end{tabular}
\end{table}

\section{Reproducibility Statement}
\label{app:reproducibility}

\paragraph{Data and code.}
Code, evaluation scripts, and numerical outputs will be released through
\url{https://github.com/qxxmax/lattice-ml}, subject to redistribution
constraints. The release will include target definitions, budgets, random
seeds, correction modes, and the values used in the tables and figures.
Proposal parameters are fixed before evaluation; observable definitions and
uncertainty estimators are given in Appendix~\ref{app:experimental-details}.
Restricted external data will be identified by source, with only the derived
quantities used here redistributed.

\paragraph{Use of AI tools.}
OpenAI ChatGPT and Codex assisted with language editing, literature and citation
checks, LaTeX, and analysis scripts. The author verified the mathematical
arguments, calculations, numerical results, and citations and takes
responsibility for the manuscript.

\end{document}